\def\notes{1}
\newcommand{\mynote}[3]{\marginpar{\tiny \sf \color{#1} {#2}: {#3}}}
\newcommand{\mynote}[3]{}
\newcommand*{\citet}[1]{\AtNextCite{\AtEachCitekey{\defcounter{maxnames}{2}}} \textcite{#1}}
\newcommand*{\citetall}[1]{\AtNextCite{\AtEachCitekey{\defcounter{maxnames}{999}}} \textcite{#1}}
\newcommand*{\citep}[1]{\cite{#1}}
	\definecolor{DarkGreen}{rgb}{0.15,0.5,0.15}
	\definecolor{DarkRed}{rgb}{0.6,0.2,0.2}
	\definecolor{DarkBlue}{rgb}{0.2,0.2,0.6}
	\definecolor{DarkPurple}{rgb}{0.4,0.2,0.4}
\def\output{\mathcal{S}}
\def\lr{\mathcal{R}}
\def\lo{\textup{\texttt{LO}}}
\def\shuffler{\mathcal{A}_{\rm s}}
\def\bin{{\rm Bin}}
\def\mnom{{\rm MultNom}}
\def\TV{{\rm TV}}
\def\Q{\mathcal{Q}}
\def\A{\mathcal{A}}
\def\L{\mathcal{L}}
\def\D{\mathcal{D}}
\def\Aldp[#1]{\mathcal{R}^{(#1)}}
\def\AldpNI{{\mathcal{A}_{\rm ldp}}}
\def\out[#1]{\mathcal{S}^{(#1)}}
\def\2RR{\texttt{2RR}}
\def\kRR{\texttt{kRR}}
\def\reference{\rho}
\def\indicator{\mathds{1}}
\def\BinS{\texttt{BinS}}
\newcommand{\dalpha}[1]{D_{#1}}
\newcommand{\onedint}{\mathcal{B}}
\newcommand{\zo}{\{0,1\}}
\newcommand{\zot}{\{0,1,2\}}
\newcommand{\zotf}{\{0,1,2,3\}}
\providecommand{\equ}[1]{

\begin{equation}
#1
\end{equation}}
\providecommand{\alequn}[1]{\begin{align*} #1 \end{align*}}
\newcommand{\new}[1]{\textcolor{black}{#1}}
\let\originalleft\left
\let\originalright\right
\renewcommand{\left}{\mathopen{}\mathclose\bgroup\originalleft}
\renewcommand{\right}{\aftergroup\egroup\originalright}
\newcommand{\ex}[2]{{\ifx&#1& \mathbb{E} \else \underset{#1}{\mathbb{E}} \fi \left[#2\right]}}
\newcommand{\pr}[2]{{\ifx&#1& \mathbb{P} \else \underset{#1}{\mathbb{P}} \fi \left[#2\right]}}
\newcommand{\eps}{{\varepsilon}}
\newcommand{\Ber}{\ensuremath{\mathrm{Bern}}}
\newcommand{\cD}{\mathcal{D}}
\newcommand{\cS}{\mathcal{S}}
\newtheorem{thm}{Theorem}[section]
\newtheorem{theorem}[thm]{Theorem}
\newtheorem{lemma}[thm]{Lemma}
\newtheorem{prop}[thm]{Proposition}
	\theoremstyle{definition}
\newtheorem{definition}[thm]{Definition}
\newcommand{\E}{\mathbb{E}}
\newcommand{\Bin}{\operatorname{Bin}}
\newcommand{\unif}[1]{\mathcal{U}_{#1}}
\title{Hiding Among the Clones: A Simple and Nearly Optimal Analysis of Privacy Amplification by Shuffling}
\author{
  Vitaly Feldman\\
  Apple\\
  \and {Audra McMillan}\\
  Apple\\
  \texttt{audra\textunderscore mcmillan@apple.com}\\
  \and Kunal Talwar\\
  Apple\\
  \texttt{ktalwar@apple.com}\\
}
\date{}
\begin{document}

\maketitle

\begin{abstract}
  Recent work of \citetall{ErlingssonFMRTT19} demonstrates that random shuffling  amplifies differential privacy guarantees of locally randomized data. Such amplification implies substantially stronger privacy guarantees for systems in which data is contributed anonymously \cite{Bittau17} and has lead to significant interest in the shuffle model of privacy \cite{Cheu:2019,ErlingssonFMRTT19}.

   We show that random shuffling of $n$ data records that are input to $\eps_0$-differentially private local randomizers results in an $(O((1-e^{-\eps_0})\sqrt{\frac{e^{\eps_0}\log(1/\delta)}{n}}), \delta)$-differentially private algorithm. This significantly improves over previous work and achieves the asymptotically optimal dependence in $\eps_0$. Our result is based on a new approach that is simpler than previous work and extends to approximate differential privacy with nearly the same guarantees. Importantly, our work also yields an algorithm for deriving tighter bounds on the resulting $\eps$ and $\delta$ as well as R\'enyi differential privacy guarantees. We show numerically that our algorithm gets to within a small constant factor of the optimal bound. As a direct corollary of our analysis we derive a simple and nearly optimal algorithm for frequency estimation in the shuffle model of privacy. We also observe that our result implies the first asymptotically optimal privacy analysis of noisy stochastic gradient descent that applies to sampling without replacement.
\end{abstract}

\thispagestyle{empty}
\newpage
\setcounter{page}{1}
\section{Introduction}
We consider privacy-preserving data analysis in the local model of differential privacy augmented with a shuffler.
In this model, each user sends a locally differentially private report and  these reports are then anonymized and randomly shuffled. Systems based on this model were first proposed in \cite{Bittau17}. The authors of \cite{ErlingssonFMRTT19} showed that random shuffling of inputs to locally private protocols amplifies the privacy guarantee. Thus,
when the collection of anonymized reports is viewed in the central model, the privacy guarantees are substantially stronger than the original local privacy guarantees. A similar result was shown for the binary randomized response by \citetall{Cheu:2019} who also formalized a related {\em shuffle model of privacy}.

The analysis in \cite{ErlingssonFMRTT19} relies on a more general result referred to as {\em privacy amplification by shuffling}. This result shows that privacy is amplified when the inputs are shuffled {\em before} applying local randomizers and holds even when local randomizers are chosen sequentially and adaptively.
Allowing adaptive choice of local randomizers is necessary for analyzing iterative optimization algorithms such as stochastic gradient descent.

A key limitation of the amplification result in \cite{ErlingssonFMRTT19} is that it has sub-optimal dependence on the local privacy parameter $\eps_0$ when $\eps_0 > 1$. Specifically, in this regime the resulting central privacy parameter is $\tilde O(e^{3\eps_0}/\sqrt{n})$\footnote{The dependence on $\eps_0$ was recently sharpened to $e^{2.5\eps_0}$ in~\cite{Balle:2020}.}. In addition, the result in \cite{ErlingssonFMRTT19} does not apply to local randomizers that satisfy only approximate differential privacy. For the simpler setting in which the local randomizer is fixed (and, in particular, non-adaptive)~\citetall{Balle:2019} give a stronger bound of $\tilde O(e^{\eps_0}/\sqrt{n})$ using a different proof approach. These results are in contrast to the binary randomized response for which the bound is just $\tilde O(e^{\eps_0/2}/\sqrt{n})$ \cite{Cheu:2019}.
This change in the exponent is significant in the regime where $\eps_0 > 1$, which is commonly used in practice. Further, this regime naturally arises when the target central $\eps$ is greater than $1/\sqrt{n}$.

In this paper we give a new analysis of privacy amplification by shuffling.
Specifically, we show that running an adaptive sequence of arbitrary $\eps_0$-DP local randomizers on a uniformly random permutation of $n$ data items, yields an $(\eps, \delta)$-DP algorithm, where \[\eps = O\left((1-e^{-\eps_0})\frac{\sqrt{e^{\eps_0}\log(1/\delta)}}{\sqrt{n}}\right).\] When $\eps_0 > 1$, this improves the dependence on $\eps_0$ from $e^{2.5\eps_0}$ to asymptotically correct dependence of $e^{\eps_0/2}$ which was previously known only for the binary randomized response~\cite{Cheu:2019}.
The bound  matches the existing bounds in the regime when $\eps_0<1$ and is asymptotically optimal in both regimes.
  We then extend this analysis to approximately differentially private local randomizers with essentially the same guarantees. The best previous guarantee in this case has dependence of $e^{20\eps_0}$ given in \cite{Balle:2020}.

Importantly, our proof is also simpler than the rather delicate approaches in prior work \cite{ErlingssonFMRTT19,Balle:2019}.
The proof reduces the problem of analyzing the shuffling privacy guarantee of an adaptive series of local algorithms to analyzing the shuffling privacy guarantee of a simple non-adaptive local algorithm with three outputs. Intuitively, we argue that for any local randomizer $\mathcal A$ and two data points $x,y$, $\A(y)$ can be seen as sampling from the same distribution as $\A(x)$ with some positive probability. That is, each data point can create a clone of the output of $\A(x)$ with some probability. For the $\eps_0 > 1$ regime, the desired privacy guarantees then follow easily from the number of clones that $x$ has to hide among being distributed as a binomial random variable. In the small $\eps_0$-regime we also rely on the fact that a local randomizer on two inputs can be seen as the composition of binary randomized response with a post-processing step \cite{KairouzOVb}. Another important property of our proof is that it provides a simple and efficient method for numerically computing an amplification bound that is tighter than our closed-form bounds \footnote{The resulting code is available at \href{https://github.com/apple/ml-shuffling-amplification}{https://github.com/apple/ml-shuffling-amplification}.}. Specifically, we show that it is sufficient to analyze the appropriate notion of divergence between two specific distributions over 3 values. \new{As in the case of binary randomised response, the constant 1/2 in the exponent arises naturally from analysing the divergence between binomial distributions.}
In Section~\ref{experiment} we show that this approach leads to numerical bounds that significantly improve on the state of the art. Our approach can be similarly used to numerically compute the privacy amplification guarantee in terms of other notions of privacy. In particular, we use it to compute the privacy amplification guarantee in terms of R\'enyi differential privacy. As we demonstrate in Section~\ref{renyiDPexperiments}, this results in tighter privacy guarantees when composing shuffled mechanisms.

Our proof extends naturally to approximate differential privacy. The extension to approximate DP involves an additional technical lemma (Lemma~\ref{approxtopure}) that states that any {\em deletion} (as opposed to replacement) $(\eps_0,\delta_0)$-DP local randomizer $\mathcal{A}$ can be converted to a deletion $\eps_0$-DP local randomizer $\mathcal{A}'$ such that $\TV(\mathcal{A}(x), \mathcal{A}'(x))\le\delta_0$ for all $x$.

Our privacy amplification result with the optimal dependence on $\eps_0$ has some surprising consequences. In certain important settings, it allows us to take existing and existing LDP and immediately obtain an algorithm that achieves (nearly) optimal trade-offs between privacy and utility {\em simultaneously} in the central {\em and} the local models of privacy. While the local $\eps_0$ may sometimes be large (we'll often need $\eps_0$ to grow logarithmically with $n$ to achieve a desired constant central $\eps$ guarantee), it provides an additional layer of protection that does not exist in the central model. Moreover, this shuffling result implies that a secure implementation of shuffling, or any function that is a post-processing of shuffling (e.g. vector summation), suffices to ensure the strong central privacy guarantee without having to assume a trusted curator. This is a considerably simpler task than using secure multiparty computation for computing the output of the central DP algorithm~\cite{ODOpaper}. Additionally, the fact that the privacy depends only on the LDP property of the local randomizer, and not on the specific noise distribution, allows us to do arbitrary post-processing (e.g. rounding, lossy compression) to the LDP responses before they are shuffled. The privacy of the aggregate relies only on the shuffling result, and we do not need to analyze the effect of the rounding/truncation as in several previous works that use distributed noise addition for a similar goal.

In Section~\ref{optimal}, we show an example of such an application for the problem of building a histogram (or estimating the frequency  distribution) over an alphabet of size $k$.  For a given level of accuracy, this algorithm simultaneously achieves nearly optimal central as well as local differential privacy bounds. The algorithm is based on applying our results to the algorithm from \citep{Acharya:2019} and has low communication and server-side decoding time. We remark that getting the correct bound here requires taking $\eps_0$ up to $\ln k$. Thus using the $e^{\eps_0}/\sqrt{n}$ amplification bound from \cite{Balle:2019} would give a result that is off by a factor of $\sqrt{k}$ in terms of the central privacy-utility tradeoff. In particular, the correct dependence on $\eps_0$ in the exponent is crucial to get this bound.

A second example in Section~\ref{sec:sgd} relies on the fact the our results holds for an adaptive sequence of local queries. We analyze the privacy of noisy stochastic gradient descent when run on a random permutation of the data. This bridges a disconnect between the prior theoretical analyses that assumed random sampling with replacement, and the practical implementations that use permutations.

Finally, we give an alternative analysis of privacy amplification by shuffling that can be tighter for specific LDP randomizers. In particular, we can use this to give a tighter analysis of $k$-ary randomized response. \new{A closed form amplification bound tailored to $k$-ary randomised response was also given in \cite{Balle:2019}. Their bound matches the bound presented in this work when $k=O(e^{\eps_0})$ and $\eps_0>1$, but is worse than our bound when $k=\Omega(e^{\eps_0})$. To the best of our knowledge, the bound presented in this work is the first closed form result that shows that the privacy amplification, for a fixed $\eps_0$, improves with $k$. This corroborates empirical results from \cite{Balle:2019}.}

\begin{table}[]
    \centering

    \begin{tabular}{| c | c | c|}
    \hline
        {\bf Algorithm/Setting} & {\bf Previous results} & {\bf This work}\\
    \hline\hline
        Binary randomized response &  $\frac{e^{\eps_0/2}}{\sqrt{n}}$ ~\cite{Cheu:2019} & \multirow{3}{*}{$\frac{e^{\eps_0/2}}{ \sqrt{n}}$ [Thm.~\ref{higheps0}]}\\
     \hhline{--~}
     Non-adaptive shuffle & $\frac{e^{\eps_0}}{\sqrt{n}}$ ~\cite{Balle:2019}& \\
     \hhline{--~}
     \multirow{2}{*}{Adaptive shuffle} & $\frac{e^{3\eps_0}}{\sqrt{n}}$~\cite{ErlingssonFMRTT19}& \\
     & $\frac{e^{2.5\eps_0}}{\sqrt{n}}$~\cite{Balle:2020}  &\\
     \hline
     Adaptive shuffle for $(\eps_0,\delta_0)$-DP& $\frac{e^{20\eps_0}}{\sqrt{n}}$~\cite{Balle:2020} & $\left(\frac{e^{\eps_0/2}}{\sqrt{n}}, \delta + n\delta_0\right)$ \;[Thm.~\ref{approxDP}]\\
     \hline
     $k$-Randomized response & $\max\left\{\frac{e^{\eps_0}+k}{n},\sqrt{\frac{e^{\eps_0}+k}{n}}\right\}$~\cite{Balle:2019}& $\frac{e^{\eps_0}}{\sqrt{n(e^{\eps_0}+k)}}$ 
     \;[Cor.~\ref{kRRthm}]\\
     \hline
    \end{tabular}
\vspace{0.1in}
    \caption{Previous and our results for privacy amplification by shuffling in the $\eps_0>1$ regime. We suppress constant factors, as well as the $\sqrt{\ln(1/\delta)}$ factor that is common to all results.}
    \label{tab:results}
\end{table}

\paragraph{Follow-up work:}
A natural direction for future work is to analyze privacy amplification by shuffling in terms of other notions of differential privacy, such as R\'enyi differential privacy (RDP) \citep{mironov2017renyi}. RDP guarantees are particularly useful in multi-step algorithms where the composition properties of differential privacy need to be used. A bound on RDP guarantees is implied by the proof technique in \citep{ErlingssonFMRTT19} but, just as in the case of approximate DP, this bound is asymptotically suboptimal when $\eps_0 > 1$. Specifically, for sufficiently small order $\alpha$, the $\alpha$-RDP divergence is $O(\alpha e^{6\eps_0})$ in this regime.

Subsequent work of \citet{Girgis:2021}, (implicitly) relies on the idea proposed in this work: it views each data point as creating a clone of the first element of the first dataset (in the given pair of datasets) and thereby reduces the problem to analysis of divergence between a pair of datasets that have just two different elements. The divergence is then analyzed using both analytic and numerical tools. Their resulting bound on the order $\alpha$ RDP is $O(\alpha e^{2\eps_0})$ (when $\eps_0 > 1$). This is a significant improvement on the results in \citep{ErlingssonFMRTT19} but is still asymptotically suboptimal.

In Section~\ref{renyiDPexperiments} we evaluate numerically the R\'enyi divergence between the pair of distributions given in our reduction and demonstrate that our approach leads to significantly better bounds
\footnote{The numerical evaluation has not been included in the earlier version of this work and thus not available to the authors of \citep{Girgis:2021}.}. We leave the computation of a closed-form bound on the R\'enyi divergence between the pair of distributions over three values that emerge from our reduction for future work.

Another subsequent work \citep{koskela2021tight} motivated by the problem of obtaining better guarantees for the composition of multiple shuffled protocols relies on  a different approach to composition referred to as Fourier accountant \citep{koskela2020computing}. The approach requires computation of the Fast Fourier Transform of the entire privacy loss variable and \citet{koskela2021tight} use the pair of distributions given in our work as the input to their analysis.

\section{Background and Preliminaries}
Differential privacy (DP) is a measure of stability of a randomized algorithm. It bounds the change in the distribution on the outputs when one of the inputs is replaced with an arbitrary other element. The most common way to measure the change in the output distribution is $(\eps, \delta)$-indistinguishability. Two random variables $P$ and $Q$ over some probability space are $(\eps, \delta)$-\emph{indistinguishable} if for all events $E$ over that probability space, \[e^{-\eps}(\Pr(Q\in E)-\delta)\le \Pr(P\in E)\le e^{\eps}\Pr(Q\in E)+\delta.\] The following {\em hockey-stick} divergence can be used to characterize $(\eps, \delta)$-indistinguishability: \[\dalpha{e^{\eps}}(P\|Q) = \int \max\{0, P(x)-e^{\eps} Q(x)\} dx,\] where we use the notation $P$ and $Q$ to refer to both the random variables and their probability density functions. So $P$ and $Q$ are $(\eps, \delta)$-indistinguishable if $\max\{\dalpha{e^{\eps}}(P\|Q), \dalpha{e^{\eps}}(Q\|P)\}\le\delta$. We will rely on several standard properties of the hockey-stick divergence such as convexity and preservation under post-processing \citep{DR14-book}.

We will consider several models of differentially private data analysis.
In the central model, introduced in \cite{Dwork:2006}, the data of the individuals is held by the curator. The curator is then trusted to perform data analysis whose output does not disclose too much about any particular individual's data. While this model requires a higher level of trust than the local model, it is possible to design significantly more accurate algorithms.
We say that two databases are neighboring if they differ on the data of a single individual.

\begin{definition}[Differential privacy in the central model]
An algorithm $\mathcal{A}:\mathcal{D}^n\to\output$ is $(\eps, \delta)$-\emph{differentially private} if for all neighboring databases $X$ and $X'$, $\mathcal{A}(X)$ and $\mathcal{A}(X')$ are $(\eps, \delta)$-indistinguishable.
\end{definition}

In the local model, formally introduced in \cite{Kasiviswanathan:2008}, each individual (or client) randomizes their data before sending it to data curator (or server). This means that individuals are not required to trust the curator. Due to this minimal trust model, most current industrial deployments of differential privacy rely on local differential privacy \cite{Erlingsson:2014, Apple2017, Bolin:2017, ErlingssonFMRSTT20}. More formally, in the general model of local DP clients holding their data can communicate with the server in an arbitrary order with multiple rounds of interaction. The protocol is said to satisfy local $(\eps,\delta)$-differential privacy if the transcripts of the protocol on any pair of neighboring datasets are $(\eps, \delta)$-indistinguishable. We will not require this general definition as we will only be considering protocols in which each client receives at most one message from the server and sends a single message to the server (which may depend on the message from the server). In this setting, the condition reduces to requiring that the algorithm the client uses to respond to the server satisfies differential privacy with respect to the input of the client, such an algorithm is often referred to as a {\em local randomizer}.

\begin{definition}[Local randomizer]\label{localrandomizer}
An algorithm $\lr\colon \D\to \cS$ is $(\eps, \delta)$-DP \emph{local randomizer}  if
for all pairs $x,x'\in \D$, $\lr(x)$ and $\lr(x')$ are $(\eps, \delta)$-indistinguishable.
\end{definition}

Let $\mathcal{D}$ denote the domain. A single pass $\eps_0$-DP local protocol
can be equivalently described as a sequence of algorithms $\Aldp[i]:\out[1]\times\cdots\times\out[i-1]\times\mathcal{D}\to\out[i]$ for $i\in[n]$ (where $\out[i]$ is the range space of $\Aldp[i]$) where the $i$-th client returns $z_i=\Aldp[i](z_{1:i-1}, x_i)$, and $\Aldp[i](z_{1:i-1}, \cdot)$ is an $(\eps,\delta)$-DP local randomizer for all values of auxiliary inputs $z_{1:i-1}\in\out[1]\times\cdots\times\out[i-1]$. The dependence on $z_{1:i-1}$ models the fact that the server can communicate to the client $i$ any information based on the messages from clients $1,\ldots, i-1$.

The shuffle model of privacy is a distributed model of computation in which, as in the local model, clients hold their data and a server communicates with the clients to perform data analysis. In addition, the model includes a shuffler (also referred to as a mixnet). The shuffler collects all the reports sent by clients and anonymizes them by applying a random permutation to all the reports. The shuffled reports are then released to the server. Note that the reports are typically encrypted in a way that they can be decrypted by the server but not by the shuffler (a more detailed discussion of the trust assumptions can be found in \cite{ErlingssonFMRSTT20}). A server can also communicate back to the clients and the protocol may consist of multiple rounds of communication via a shuffler. Such a protocol is said to satisfy $(\eps,\delta)$-differential privacy in the shuffle model if the outputs of the shuffler on any pair of neighboring datasets are $(\eps, \delta)$-indistinguishable. We remark, that such protocols do not necessarily satisfy local differential privacy.

The anonymization of local reports to improve differential privacy was proposed in \cite{Bittau17}, who designed and implemented a principled systems architecture for shuffling. The intuition was formalized in  \cite{ErlingssonFMRTT19, Cheu:2019}. \citet{ErlingssonFMRTT19} showed that for arbitrary $\eps_0$-DP local randomizers random shuffling of reports amplifies the privacy guarantees. \citet{Cheu:2019} formally defined the shuffle model of computation and analyzed the privacy guarantees of the binary randomized response in this model. It is important to note that, as in \citep{ErlingssonFMRTT19}, we analyze privacy amplification that results from shuffling of the data before applying local randomizers. In contrast, in the shuffle model the shuffler is applied to reports from the clients. To apply our analysis to the shuffle model, it suffices to observe that for a fixed local randomizer, shuffling of the randomized responses is distributed in the same way as randomized reports on shuffled data. In particular, it enjoys the privacy guarantees that we establish.

In all models, if $\delta=0$ then we will refer to an algorithm as $\eps$-differentially private. Further, we will occasionally refer to $\delta=0$ as pure differentially private and $\delta>0$ as approximate differential privacy.

Subsampling is a well known technique for amplifying privacy guarantees \citep{Kasiviswanathan:2008}. That is, suppose that $\A$ is an $(\eps,\delta)$-DP algorithm over datasets of size $m$ and define an algorithm $\A'$ over a dataset $X$ of size $n > m$ by picking a random and uniform subset of $m$ elements from $X$ and running $\A$ on the resulting dataset. Privacy amplification by subsampling states that $\A'$ is $\left(\log(1+\frac{m}{n}(e^{\eps}-1)),  \frac{m}{n} \delta\right)$-DP \citep{Kasiviswanathan:2008,Balle:2018}. The following lemma is a slightly more general and abstract version of this result.
\begin{lemma}[Advanced joint convexity, \cite{Balle:2018}]\label{advancedjointconvexity}
Let $P$ and $Q$ be distributions satisfying $P=(1-q)P_0+qP_1$ and $Q=(1-q)P_0+qQ_1$ for some $q\in[0,1]$ and distributions $P_0, P_1$ and $Q_1$. Given $\alpha\ge 1$, let $\alpha'=1+q(\alpha-1)$ and $\theta=\alpha'/\alpha$. Then the following holds \[\dalpha{\alpha'}(P\|Q)=q\dalpha{\alpha}(P_1\|(1-\theta)P_0+\theta Q_1).\] In particular, for any $\eps>0$, if $\eps'=\log(1+q(e^{\eps}-1))$
then
\[\dalpha{e^{\eps'}}(P\|Q)\le q\max\{\dalpha{e^{\eps}}(P_1\|P_0), \dalpha{e^{\eps}}(P_1\|Q_1)\}.\]
\end{lemma}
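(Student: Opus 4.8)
The plan is to establish the identity by a direct pointwise manipulation of the integrand in the definition of the hockey-stick divergence, $\dalpha{\gamma}(P\|Q) = \int \max\{0, P(x) - \gamma Q(x)\}\,dx$, and then to obtain the corollary from convexity of $\dalpha{\gamma}(P\|\cdot)$ in its second argument. As a preliminary step I would record the elementary consequences of the choices $\alpha' = 1 + q(\alpha-1)$ and $\theta = \alpha'/\alpha$: namely $1 - \alpha' = -q(\alpha-1)$, $\alpha\theta = \alpha'$, and $\alpha(1-\theta) = \alpha - \alpha' = (1-q)(\alpha-1)$. Since $\alpha \ge 1$ and $q \in [0,1]$, we have $1 \le \alpha' \le \alpha$, hence $\theta \in [0,1]$, so $(1-\theta)P_0 + \theta Q_1$ is a genuine probability distribution and the right-hand side is well defined.

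Next I would expand $P - \alpha' Q$ using the two mixture decompositions together with the identity $(1-q)(1-\alpha') = -q(1-q)(\alpha-1)$:
\[ P - \alpha' Q = (1-q)(1-\alpha')P_0 + qP_1 - q\alpha' Q_1 = q\big(P_1 - (1-q)(\alpha-1)P_0 - \alpha' Q_1\big). \]
On the other hand, $\alpha\big((1-\theta)P_0 + \theta Q_1\big) = (1-q)(\alpha-1)P_0 + \alpha' Q_1$, so the parenthesized expression above is exactly $P_1 - \alpha\big((1-\theta)P_0 + \theta Q_1\big)$. Thus, as densities, $P(x) - \alpha' Q(x) = q\big(P_1(x) - \alpha((1-\theta)P_0 + \theta Q_1)(x)\big)$ for all $x$; since $q \ge 0$, taking the positive part commutes with multiplication by $q$, and integrating over $x$ gives $\dalpha{\alpha'}(P\|Q) = q\,\dalpha{\alpha}(P_1\|(1-\theta)P_0 + \theta Q_1)$, which is the claimed identity.

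For the corollary I would take $\alpha = e^\eps$, so that $\alpha' = 1 + q(e^\eps - 1) = e^{\eps'}$, and invoke the identity to get $\dalpha{e^{\eps'}}(P\|Q) = q\,\dalpha{e^\eps}(P_1\|(1-\theta)P_0 + \theta Q_1)$. For fixed $P_1$ and $\gamma$, the map $R \mapsto \dalpha{\gamma}(P_1\|R)$ is convex, because for each $x$ the integrand $\max\{0, P_1(x) - \gamma R(x)\}$ is a maximum of two functions affine in $R(x)$, and integration preserves convexity. Applying this with $R = (1-\theta)P_0 + \theta Q_1$ and using $\theta \in [0,1]$,
\[ \dalpha{e^\eps}(P_1\|(1-\theta)P_0 + \theta Q_1) \le (1-\theta)\dalpha{e^\eps}(P_1\|P_0) + \theta\,\dalpha{e^\eps}(P_1\|Q_1) \le \max\{\dalpha{e^\eps}(P_1\|P_0),\, \dalpha{e^\eps}(P_1\|Q_1)\}, \]
and combining this with the previous display yields the corollary.

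The argument has no real obstacle: it reduces entirely to the algebraic identity $P - \alpha' Q = q\big(P_1 - \alpha((1-\theta)P_0 + \theta Q_1)\big)$, for which the prescribed value of $\theta$ is precisely what makes the coefficients of $P_0$ and $Q_1$ match. The only points needing a moment's care are verifying $\theta \in [0,1]$ (so that the mixture on the right-hand side is a distribution and convexity can legitimately be applied) and noting that taking the positive part is compatible with scaling by the nonnegative constant $q$.
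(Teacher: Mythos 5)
Your proof is correct. Note that the paper itself does not prove this lemma at all --- it is imported verbatim from Balle, Barthe, and Gaboardi (2018) as a known tool --- so there is no internal proof to compare against; your write-up is a clean, self-contained derivation of the cited result. The key algebraic step checks out: with $\alpha'=1+q(\alpha-1)$ one has $\alpha(1-\theta)=\alpha-\alpha'=(1-q)(\alpha-1)$ and $\alpha\theta=\alpha'$, so the pointwise identity $P-\alpha'Q=q\left(P_1-\alpha\left((1-\theta)P_0+\theta Q_1\right)\right)$ holds exactly, and since $q\ge 0$ the positive part and the integral both pass through the factor $q$, giving the stated equality. Your verification that $\theta\in[0,1]$ (so the right-hand side involves a genuine mixture) and your convexity argument for the corollary (the integrand is a pointwise maximum of two functions affine in $R(x)$, hence $R\mapsto \dalpha{\gamma}(P_1\|R)$ is convex, and a convex function on a segment is bounded by the maximum of its endpoint values) are both sound. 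This is essentially the standard proof of advanced joint convexity, and it is exactly the form in which the present paper uses the lemma (in the proofs of Lemma~\ref{binomials} and Theorem~\ref{higheps0new}).
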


\section{Improved Guarantees for Privacy Amplification by Shuffling}

In this section we present our main theorem, an improved upper bound for privacy amplification via shuffling for adaptive $\eps_0$-DP randomizers. Theorem~\ref{higheps0} provides a clean statement of our improved privacy amplification guarantee, improving on previous bounds in the high $\eps_0$ regime. The main technical statement is contained in Theorem~\ref{higheps0}.

\begin{theorem}\label{higheps0}
 For any domain $\mathcal{D}$, let $\Aldp[i]:\out[1]\times\cdots\times\out[i-1]\times\mathcal{D}\to\out[i]$ for $i\in[n]$ (where $\out[i]$ is the range space of $\Aldp[i]$) be a sequence of algorithms such that $\Aldp[i](z_{1:i-1}, \cdot)$ is an $\eps_0$-DP local randomizer for all values of auxiliary inputs $z_{1:i-1}\in\out[1]\times\cdots\times\out[i-1]$.
 Let $\shuffler:\mathcal{D}^n\to\out[1]\times\cdots\times \out[n]$ be the algorithm that given a dataset $x_{1:n}\in\mathcal{D}^n$, samples a uniform random permutation $\pi$ over $[n]$, then sequentially computes $z_i=\Aldp[i](z_{1:i-1}, x_{\pi(i)})$ for $i\in[n]$ and outputs $z_{1:n}$.
 Then for any $\delta\in[0,1]$ such that $\eps_0\le\log(\frac{n}{16\log(2/\delta)})$, $\shuffler$ is $(\eps, \delta)$-DP,  where
\begin{equation}\label{epsbound}
\eps\le \log\left(1+\frac{e^{\eps_0}-1}{e^{\eps_0}+1}\left(\frac{8\sqrt{e^{\eps_0}\log(4/\delta)}}{\sqrt{n}}+\frac{8e^{\eps_0}}{n}\right)\right)
\end{equation}
\end{theorem}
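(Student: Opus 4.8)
The plan is to reduce the adaptive shuffling problem to a non-adaptive one involving a simple three-output randomizer, and then to analyze the hockey-stick divergence between two explicit distributions over $\{0,1,2\}^n$ (or rather over histograms of such). Fix a pair of neighboring datasets $X = (x_1, x_2, \dots, x_n)$ and $X' = (x_1', x_2, \dots, x_n)$ differing only in the first coordinate; by relabeling we may write $x_1 = x$ and $x_1' = y$. The key observation (the ``clones'' idea from the introduction) is that for the first local randomizer $\Aldp[1]$, $\eps_0$-DP means that on \emph{any} pair of inputs, the output distributions overlap substantially: concretely, for inputs $x, y$ there is a ``common'' component. More useful is to observe that for \emph{every} input $w$ in the dataset, the distribution $\Aldp[1](w)$ can be written as a mixture $\tfrac{1}{e^{\eps_0}+1}\Aldp[1](y) + \tfrac{e^{\eps_0}}{e^{\eps_0}+1}(\text{something})$ and similarly with $x$ in place of $y$ — i.e., each of the other $n-1$ data points independently ``clones'' the output that $x$ (resp.\ $y$) would have produced, with probability $p := \tfrac{1}{e^{\eps_0}+1}$.

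The first step is the \textbf{reduction to three messages}. After fixing the permutation position of the distinguished element and conditioning appropriately, I would argue that the worst case over all adaptive randomizer sequences and all other data points is captured by the following non-adaptive mechanism: each of the $n$ users sends an element of $\{0,1,2\}$, where the distinguished user (holding $x$ or $y$) deterministically sends its ``identity'' symbol, and each of the other $n-1$ users sends $0$ (``clone of $x$'') with probability $p$, $1$ (``clone of $y$'') with probability $p$, and $2$ (``neither'', a blanket symbol) with probability $1 - 2p = \tfrac{e^{\eps_0}-1}{e^{\eps_0}+1}$. Shuffling these $n$ messages, the adversary sees only the \emph{multiset} of symbols. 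On dataset $X$ the distinguished symbol is ``$0$-type'', on $X'$ it is ``$1$-type''; everything else is i.i.d.\ identical. So the post-shuffle view is determined by the counts $(C_0, C_1, C_2)$ where on $X$ one extra count is added to $C_0$ and on $X'$ one extra is added to $C_1$. Because $C_2$ is a common (independent, identically distributed) component, I would invoke convexity / the fact that conditioning on $C_2 = c$ only needs the pair of distributions of $(C_0, C_1)$, which given $C_0 + C_1 = m$ is $\Bin(m, 1/2)$-like. This is where the $1/2$ in the exponent comes from, exactly as in binary randomized response.

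The second step is the \textbf{divergence computation}. With the reduction in hand, bounding $\eps$ amounts to bounding $\dalpha{e^{\eps}}(P \| Q)$ where $P$ and $Q$ are the two post-shuffle distributions over histograms. Writing $A \sim \Bin(n-1, p)$ for the number of ``$x$-clones'' among the other users and symmetrically for ``$y$-clones'', one reduces to comparing $\Bin(A + B + 1, 1/2)$ shifted by the distinguished element. The privacy loss random variable is then essentially $\log\frac{A+1}{B+1}$ (or its reciprocal), and one needs a tail/concentration bound: with $A, B$ each concentrated around $(n-1)p \approx n/(e^{\eps_0}+1)$, the ratio $\frac{A+1}{B+1}$ is $1 + O\!\big(\sqrt{(e^{\eps_0}+1)/n}\big)$ with probability $1 - \delta$. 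Converting this to a hockey-stick bound and then pushing through the subsampling-style amplification (Lemma~\ref{advancedjointconvexity}, with $q = \tfrac{e^{\eps_0}-1}{e^{\eps_0}+1}$ being exactly the probability of the ``$2$'' / blanket symbol) gives the stated form $\eps \le \log\big(1 + \tfrac{e^{\eps_0}-1}{e^{\eps_0}+1}(\cdots)\big)$, with the $\tfrac{8\sqrt{e^{\eps_0}\log(4/\delta)}}{\sqrt{n}} + \tfrac{8 e^{\eps_0}}{n}$ term coming from the binomial concentration (Chernoff), and the side condition $\eps_0 \le \log\!\big(\tfrac{n}{16\log(2/\delta)}\big)$ ensuring the concentration argument is in the regime where the additive error is meaningful (i.e., the expected number of clones is large enough, $\gtrsim \log(1/\delta)$).

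The \textbf{main obstacle} I expect is making the reduction step fully rigorous in the \emph{adaptive} setting: one must show that, despite the fact that later randomizers $\Aldp[i]$ may depend on all earlier outputs $z_{1:i-1}$, the worst-case privacy loss is still controlled by the simple non-adaptive three-message mechanism. The clean way to handle this is to condition on the position $\pi^{-1}$ of the distinguished element and on everything except the ``clone/no-clone'' decisions, and to observe that — because the clone of $x$ is, by construction, distributed \emph{exactly} as $\Aldp[1](x)$ would be regardless of what the adaptive protocol does downstream — the whole transcript can be coupled so that the only difference between the $X$ and $X'$ executions is which symbol (``$0$-type'' vs ``$1$-type'') the distinguished user injects into an otherwise-identical shuffled pool. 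A careful argument (likely by induction on $i$, or by a direct coupling of the two executions) is needed to certify that this coupling is valid and that post-processing (the server's adaptive choices, the final output map) cannot increase the divergence beyond what the histogram of clone-types already determines; the post-processing inequality for hockey-stick divergence does the rest.
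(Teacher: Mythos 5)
Your overall architecture is the same as the paper's: reduce the adaptive protocol to a non-adaptive three-symbol mechanism, observe that after shuffling only the histogram of symbol types matters, and bound the divergence between two binomial-type distributions by concentration plus advanced joint convexity. The adaptivity/coupling discussion at the end is also essentially the paper's Lemma~\ref{binomstomixtures}. However, there are two concrete errors in how you assign the mixture weights, and together they break the quantitative claim.

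First, the clone probability you assign to the $n-1$ non-distinguished users is not achievable. You claim each $\Aldp[1](w)$ decomposes with weight $p=\frac{1}{e^{\eps_0}+1}$ on each of $\Aldp[1](x)$ and $\Aldp[1](y)$. But $\eps_0$-DP only guarantees $\lr(w)(z)\ge e^{-\eps_0}\max\{\lr(x)(z),\lr(y)(z)\}\ge \frac{e^{-\eps_0}}{2}\bigl(\lr(x)(z)+\lr(y)(z)\bigr)$, so the guaranteed weight is $e^{-\eps_0}/2$ per clone type, and this is tight: on output space $\{0,1\}$ with $\lr(x)(0)=\lr(y)(0)=\frac{e^{\eps_0}}{e^{\eps_0}+1}$ and $\lr(w)(0)=\frac{1}{e^{\eps_0}+1}$, all pairs are $\eps_0$-indistinguishable yet $\lr(w)(0)<\frac{1}{e^{\eps_0}+1}\bigl(\lr(x)(0)+\lr(y)(0)\bigr)$ for every $\eps_0>0$. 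The correct clone count is therefore $C\sim\bin(n-1,e^{-\eps_0})$, which is where the $\sqrt{e^{\eps_0}/n}$ scaling in \eqref{epsbound} comes from.

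Second, and more importantly, the distinguished user must \emph{not} send its identity symbol deterministically. The paper applies Lemma~\ref{lem:kov} to the distinguished element, writing $\Aldp[i](\cdot,x_1^b)$ as a mixture placing weight $\frac{1}{e^{\eps_0}+1}$ on the ``wrong'' clone type; it is exactly this shared mixture component (the $\Delta\sim\Ber(\frac{e^{\eps_0}}{e^{\eps_0}+1})$ in Theorem~\ref{shuffletobinoms}) that lets one write $P=(1-q)P_0+qP_1$ and $Q=(1-q)P_0+qQ_1$ with a \emph{common} $P_0$ and $q=\frac{e^{\eps_0}-1}{e^{\eps_0}+1}$, as required by Lemma~\ref{advancedjointconvexity}. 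Your attribution of $q$ to the blanket-symbol probability of the other users cannot work: that component is identically distributed under both datasets and does not produce a mixture decomposition of the form Lemma~\ref{advancedjointconvexity} needs, while your deterministic distinguished symbol makes the two histogram distributions pure shifts of one another with no common component of weight $1-q$. Without the Kairouz--Oh--Viswanath step you lose the $\frac{e^{\eps_0}-1}{e^{\eps_0}+1}$ prefactor entirely and obtain only $\eps=O\bigl(\sqrt{e^{\eps_0}\log(1/\delta)/n}\bigr)$, which is worse by a factor of $1/\eps_0$ in the regime $\eps_0<1$ where that prefactor is $O(\eps_0)$.
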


Note that when $\eps_0> 1$, $\eps=O\left(\frac{\sqrt{e^{\eps_0}\log(1/\delta)}}{\sqrt{n}}\right)$ and when $\eps_0 \leq 1$, $\eps=O\left(\eps_0\frac{\sqrt{\log(1/\delta)}}{\sqrt{n}}\right)$.

A natural question is whether the given bound also holds when $\eps_0 > \log(\frac{n}{16\log(2/\delta)})$ or, equivalently, $\log(2/\delta)\le \frac{ne^{\epsilon_0}}{16}$. In Appendix~\ref{app:verify-tails} we give numerical evidence that there is no privacy amplification when $\log(1/\delta)=\Omega(ne^{-\epsilon_0})$.

Our proof relies on a more general result showing that in order to analyze the privacy amplification by shuffling it suffices to upper bound divergence between a specific pair of distributions.
\begin{theorem}\label{shuffletobinoms}
For a domain $\mathcal{D}$, let $\Aldp[i]:\out[1]\times\cdots\times\out[i-1]\times\mathcal{D}\to\out[i]$ for $i\in[n]$ (where $\out[i]$ is the range space of $\Aldp[i]$) be a sequence of algorithms such that $\Aldp[i](z_{1:i-1}, \cdot)$ is an $\eps_0$-DP local randomizer for all values of auxiliary inputs $z_{1:i-1}\in\out[1]\times\cdots\times\out[i-1]$. Let $\shuffler:\mathcal{D}^n\to \out[1]\times\cdots\times \out[n]$ be the algorithm that given a dataset $x_{1:n}\in\mathcal{D}^n$, samples a permutation $\pi$ uniformly at random, then sequentially computes $z_i=\Aldp[i](z_{1:i-1}, x_{\pi(i)})$ for $i\in[n]$ and outputs $z_{1:n}$. Let $X_0$ and $X_1$ be two arbitrary neighboring datasets in $\D^n$.  Let $C\sim \bin(n-1, e^{-\eps_0})$, $A\sim\bin(C, 1/2)$ and $\Delta \sim \Ber\left(\frac{e^{\eps_0}}{e^{\eps_0}+1}\right)$, where $\Ber(p)$ denotes a Bernoulli random variable with bias $p$. Then there exists a randomized postprocessing algorithm $\Phi$ such that $\shuffler(X_0)$ is distributed identically to $\Phi(A+\Delta, C-A+1-\Delta)$ and $\shuffler(X_1)$ is distributed identically to  $\Phi(A+1-\Delta,C-A+\Delta)$.
\end{theorem}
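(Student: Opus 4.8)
The plan is to build the postprocessing $\Phi$ from a ``clone'' coupling. Assume $\eps_0>0$ (the case $\eps_0=0$ is trivial: then every $\Aldp[i](z,\cdot)$ ignores its data argument, so $\shuffler(X_0)=\shuffler(X_1)$, and one may take $P^{i,z}_{10}=P^{i,z}_{01}$ below). By relabeling coordinates we may assume $X_0$ and $X_1$ differ only in the first one; write $X_0=(x^0,x_2,\dots,x_n)$, $X_1=(x^1,x_2,\dots,x_n)$, calling $x^0/x^1$ the \emph{special} element and $x_2,\dots,x_n$ the \emph{common} elements. For every step $i\in[n]$ and history $z=z_{1:i-1}$, since $\Aldp[i](z,\cdot)$ is an $\eps_0$-DP local randomizer, express it on the pair $x^0,x^1$ as a post-processing of binary randomized response (cf.\ \cite{KairouzOVb}): setting $P^{i,z}_{10}:=\frac{e^{\eps_0}\Aldp[i](z,x^0)-\Aldp[i](z,x^1)}{e^{\eps_0}-1}$ and $P^{i,z}_{01}$ symmetrically, both are nonnegative and normalized precisely because of $\eps_0$-DP, and $\Aldp[i](z,x^0)=\frac{e^{\eps_0}}{e^{\eps_0}+1}P^{i,z}_{10}+\frac{1}{e^{\eps_0}+1}P^{i,z}_{01}$, $\Aldp[i](z,x^1)=\frac{1}{e^{\eps_0}+1}P^{i,z}_{10}+\frac{e^{\eps_0}}{e^{\eps_0}+1}P^{i,z}_{01}$. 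Put $\reference^{i,z}:=\frac12(P^{i,z}_{10}+P^{i,z}_{01})=\frac12(\Aldp[i](z,x^0)+\Aldp[i](z,x^1))$. For each common $x_j$, applying $\eps_0$-DP to the pairs $(x_j,x^0)$ and $(x_j,x^1)$ gives $\Aldp[i](z,x_j)\ge e^{-\eps_0}\reference^{i,z}$ pointwise, so $\ell^{i,z}_j:=\frac{\Aldp[i](z,x_j)-e^{-\eps_0}\reference^{i,z}}{1-e^{-\eps_0}}$ is a distribution and $\Aldp[i](z,x_j)=\frac{e^{-\eps_0}}{2}P^{i,z}_{10}+\frac{e^{-\eps_0}}{2}P^{i,z}_{01}+(1-e^{-\eps_0})\ell^{i,z}_j$. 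All of these distributions depend only on the fixed datasets $X_0,X_1$, never on which one $\shuffler$ is run on.

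Next I would re-express the law of $\shuffler(X_b)$ by sampling the mixture components up front. Independently draw: for each common $j$ a bit $b_j\sim\Ber(e^{-\eps_0})$ (``clone'' or not) and, if $b_j=1$, a type $t_j\sim\Ber(1/2)$; a type $t_\star$ for the special element, distributed as $\Ber(\frac{e^{\eps_0}}{e^{\eps_0}+1})$ when running on $X_0$ and as $\Ber(\frac{1}{e^{\eps_0}+1})$ when running on $X_1$; and a uniform permutation $\pi$. Then process the elements in the order $\pi$, producing at step $i$ with history $z$ an output drawn from $P^{i,z}_{10}$ or $P^{i,z}_{01}$ if the current element is a clone or the special element, according to its type, and from $\ell^{i,z}_j$ if it is the common element $x_j$ with $b_j=0$. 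The indicators attached to the element processed at step $i$ are independent of $\pi$ and of the indicators consumed before step $i$, so marginalizing them reproduces $\Aldp[i](z,\cdot)$ applied to that element via the identities of the previous paragraph; a step-by-step induction then shows this process has exactly the law of $\shuffler(X_b)$.

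Now let $C:=\sum_{j\ge2}b_j\sim\bin(n-1,e^{-\eps_0})$ and $A:=\sum_{j:\,b_j=1}t_j$, so that $A\mid C\sim\bin(C,1/2)$, and let $\Delta\sim\Ber(\frac{e^{\eps_0}}{e^{\eps_0}+1})$ be independent of $(A,C)$ and coupled to the special element's type so that $t_\star=\Delta$ on $X_0$ and $t_\star=1-\Delta$ on $X_1$. The number of type-$0$ ``clone slots'' — the common clones together with the special element — is $m_0=A+\Delta$ on $X_0$ and $m_0=A+1-\Delta$ on $X_1$, and the number of type-$1$ slots is $m_1=(C+1)-m_0$. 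The crux is that, conditioned on $(m_0,m_1)$, the law of the process above depends on nothing else. Indeed, conditioned on $C=c$ the set of common elements with $b_j=0$ is uniform among $(n-1-c)$-subsets of $\{2,\dots,n\}$; and, since $\pi$ is uniform, the sequence of \emph{labels} processed — each label being ``type-$0$ clone'', ``type-$1$ clone'', or ``common $x_j$'' — is a uniformly random arrangement of the multiset with $m_0$ type-$0$ clones, $m_1$ type-$1$ clones, and one ``common $x_j$'' per non-clone $j$; and the transcript is a fixed randomized function of this label-sequence through the $P^{i,z}_{10},P^{i,z}_{01},\ell^{i,z}_j$. This yields $\Phi$: on input $(m_0,m_1)$, set $c:=m_0+m_1-1$, draw a uniform $(n-1-c)$-subset $T\subseteq\{2,\dots,n\}$, form the label multiset with $m_0$ type-$0$ clones, $m_1$ type-$1$ clones, and one ``common $x_j$'' label for each $j\in T$, arrange it uniformly at random, and output the transcript obtained by processing it (all distributions computable from $X_0,X_1$). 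By the displayed conditioning, $\shuffler(X_0)$ has the law of $\Phi(A+\Delta,\,C-A+1-\Delta)$ and $\shuffler(X_1)$ that of $\Phi(A+1-\Delta,\,C-A+\Delta)$, which is the claim.

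The step I expect to require the most care is the reduction in the last two paragraphs. One must justify that the up-front sampling of $b_j,t_j,t_\star$ is legitimate under the adaptive choice of the randomizers — that at the moment element $j$ is reached, its indicators are still independent of the history produced so far — and then the symmetrization: that the transcript distribution depends on the clone types only through the counts $(m_0,m_1)$ because a uniform permutation makes identically-labeled elements exchangeable, so it is irrelevant which particular clone slots (in particular whether the special slot) carry type $0$.
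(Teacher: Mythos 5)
Your proof is correct and follows essentially the same route as the paper's: the explicit $P^{i,z}_{10},P^{i,z}_{01}$ are exactly the decomposition of Lemma~\ref{lem:kov}, your $\ell^{i,z}_j$ is the paper's leftover distribution $\lo^{(i)}$ with $p=e^{-\eps_0}$, and your up-front sampling of indicators followed by the exchangeability/conditioning-on-counts argument is precisely the content of Lemma~\ref{binomstomixtures} combined with the mixture identity in Equation~\eqref{mixxymix}. The only differences are presentational: you fold the paper's two-stage argument (clone reduction for $\A_\Q$, then the Bernoulli $\Delta$ layer) into a single process, and by indexing the leftover distributions by position $j$ rather than by value you avoid the paper's domain-extension step ensuring $x_j\notin\{x_1^0,x_1^1\}$.
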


In addition to giving our closed-form upper bound, Theorem~\ref{shuffletobinoms}, provides an efficient method for numerically computing a tighter upper bound. We describe this numerical approach in Section~\ref{experiment}.

Now let us turn to the proof of Theorems~\ref{higheps0} and~\ref{shuffletobinoms}.
As mentioned earlier, our proof involves reducing the problem of analyzing the privacy guarantee of an adaptive series of local randomizers applied to shuffled data to analyzing the privacy guarantee of a simple non-adaptive local algorithm with three outputs. Suppose that $X_0$ and $X_1$ are neighbouring databases that differ on the first datapoint, $x_1^0\neq x_1^1$. The reduction is outlined in Figure~\ref{graphoutline}, which shows the behavior of the simpler algorithm on data set $X_0$.
The key observation is that for any $\eps_0$-DP local randomizer $\lr$ and data point $x$,
$\lr(x)$ can be seen as sampling from the same distribution as $\lr(x_1^0)$ with probability at least $e^{-\eps_0}/2$ and sampling from the same distribution as $\lr(x_1^1)$ with probability at least $e^{-\eps_0}/2$.
That is, with probability $e^{-\eps_0}$ each data point can create a clone of the output of $\lr(x_1^0)$ or a clone of $\lr(x_1^1)$ with equal probability. Thus $n-1$ data elements effectively produce a random number of clones of both $x_1^0$ and $x_1^1$. These clones make distinguishing whether the original data set contains $x_1^0$ or $x_1^1$ as its first element much harder.

The proof of Theorem~\ref{higheps0} will require an additional step where we observe that if $\lr$ is $\eps_0$-DP then $\lr(x_1^0)$ and $\lr(x_1^1)$ are similar, and thus privacy is further amplified. Let us start by proving Lemma~\ref{binomstomixtures}, which captures just the privacy amplification due to the generation of clones.
The following lemma analyses the divergence between the distributions on the number of clones that result from this reduction.

For notational brevity, we will frequently use the same symbol to refer to both a random variable and it's probability density function. In particular, if $X$ and $Y$ are random variables and $\alpha\in[0,1]$ then $Z=\alpha X+(1-\alpha) Y$ denotes the random variable that samples from $X$ with probability $\alpha$ and $Y$ with probability $1-\alpha$. Consequently, for a randomized algorithm $\mathcal{A}$ and input $x$, we treat the output $\mathcal{A}(x)$ as a random variable and also use $\mathcal{A}(x)$ to refer to the probability density function of this random variable.

Lemma~\ref{binomstomixtures} describes our reduction from the original problem to analysis of shuffling for a simple non-adaptive protocol. Specifically, it shows that if each local randomizer $\lr$ we apply can be decomposed into a mixture of $\lr(x_1^0)$, $\lr(x_1^1)$ and some ``left-over'' distribution $\lo(x)$ then we can apply our reduction. We will then conclude the proof of Theorem~\ref{higheps0} by observing that that every differentially private randomizer has this property.

\begin{lemma}\label{binomstomixtures}
For a domain $\mathcal{D}$, let $\Aldp[i]:\out[1]\times\cdots\times\out[i-1]\times\mathcal{D}\to\out[i]$ for $i\in[n]$ (where $\out[i]$ is the range space of $\Aldp[i]$) be a sequence of algorithms such that $\Aldp[i](z_{1:i-1}, \cdot)$ is an $\eps_0$-DP local randomizer for all values of auxiliary inputs $z_{1:i-1}\in\out[1]\times\cdots\times\out[i-1]$. Let $\shuffler:\mathcal{D}^n\to \out[1]\times\cdots\times \out[n]$ be the algorithm that given a dataset $x_{1:n}\in\mathcal{D}^n$, samples a permutation $\pi$ uniformly at random, then sequentially computes $z_i=\Aldp[i](z_{1:i-1}, x_{\pi(i)})$ for $i\in[n]$ and outputs $z_{1:n}$. Let $X_0=(x^0_1,x_2,\ldots,x_n)$ and $X_1=(x^1_1,x_2,\ldots,x_n)$ be two neighboring datasets such that for all $j\neq 1$, $x_j\notin\{x_1^0, x_1^1\}$. Suppose that there exists a positive value $p\in(0,1]$ such that for all $i\in[n]$, $x\in \D\setminus\{x_1^0,x_1^1\}$ and  $z_{1:i-1} \in  \out[1]\times\cdots\times \out[i-1]$, there exists a distribution $\lo^{(i)}(z_{1:i-1}, x)$ such that
\begin{equation}\label{mix}
    \Aldp[i](z_{1:i-1}, x) = \frac{p}{2}\Aldp[i](z_{1:i-1}, x^0_1)+\frac{p}{2}\Aldp[i](z_{1:i-1}, x^1_1)+(1-p)\lo^{(i)}(z_{1:i-1}, x).
    \end{equation}
Then there exists a randomized postprocessing algorithm $f$ such that $\shuffler(X_0)$ is distributed identically to $f(A+1, C-A)$ and $\shuffler(X_1)$ is distributed identically to  $f(A,C-A+1)$, where $C\sim \bin(n-1, p)$, $A\sim\bin(C, 1/2)$.
\end{lemma}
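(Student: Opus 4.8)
The plan is to expose the clone structure one data point at a time and argue that, conditioned on the relevant latent variables, the shuffled transcript can be generated by a postprocessing map that only sees the count of $x_1^0$-clones and $x_1^1$-clones (plus the known position of the ``real'' differing element). First I would fix the decomposition~\eqref{mix} and, for each $j \neq 1$ and each prefix $z_{1:i-1}$, introduce an auxiliary ``source'' label $s_j \in \{0,1,\perp\}$: with probability $p/2$ the point $x_j$ acts as a clone of $x_1^0$ (so $z$ is drawn from $\Aldp[i](z_{1:i-1},x_1^0)$), with probability $p/2$ it acts as a clone of $x_1^1$, and with probability $1-p$ it is drawn from the left-over $\lo^{(i)}$. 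Crucially, because the left-over distribution $\lo^{(i)}$ only enters through the mixture and is never $\Aldp[i](z_{1:i-1},x_1^0)$ or $\Aldp[i](z_{1:i-1},x_1^1)$, from the point of view of the joint distribution of the transcript we may \emph{replace} each non-special $x_j$ by a sample from the three-way mixture; the actual identity $x_j$ is irrelevant.

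Next I would set up the combinatorial core. Let $C$ be the number of indices $j \neq 1$ with $s_j \in \{0,1\}$; since each of the $n-1$ non-special points independently lands in $\{0,1\}$ with probability $p$, $C \sim \bin(n-1,p)$. Conditioned on $C$, each of those $C$ clones is a $0$-clone or $1$-clone independently with probability $1/2$, so the number $A$ of $0$-clones is $\bin(C,1/2)$, and the number of $1$-clones is $C-A$. Now condition on the full vector of latent labels $(s_j)_{j\neq 1}$ and the prefixes, but \emph{not} on the random permutation $\pi$. The point $x_1$ (whether $x_1^0$ or $x_1^1$) is a genuine draw from $\Aldp[i](z_{1:i-1},x_1^b)$, so on $X_0$ we effectively have $A+1$ tokens of ``type $0$'' (the $A$ clones plus the real $x_1^0$) and $C-A$ tokens of ``type $1$'', while on $X_1$ we have $A$ of type $0$ and $C-A+1$ of type $1$; the remaining $n-1-C$ tokens are ``left-over'' tokens whose generating distributions are identical across $X_0$ and $X_1$. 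I would then argue that the distribution of the shuffled output $z_{1:n}$ depends on $(X_b, \pi, \text{labels})$ only through the \emph{multiset} of (type, prefix-behavior) assignments together with the sequence of local randomizers — because applying a uniform random permutation and then running the adaptive sequence $\Aldp[1],\dots,\Aldp[n]$ is a fixed randomized procedure once we know, for each slot, which of the three generating distributions to draw from. Since the left-over slots behave identically for $X_0$ and $X_1$, the entire dependence on $b$ is funneled through the pair $(\#\text{type-}0, \#\text{type-}1)$, which is $(A+1,C-A)$ for $b=0$ and $(A,C-A+1)$ for $b=1$. Defining $f$ to be ``sample $C, A$ consistently, sample the left-over slot count, draw a uniform permutation of all $n$ slots, and run the adaptive randomizers using the indicated source for each slot'' gives the claimed postprocessing, with $\shuffler(X_0) \sim f(A+1,C-A)$ and $\shuffler(X_1) \sim f(A,C-A+1)$ using a \emph{shared} draw of $(C,A)$.

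The main obstacle is handling the adaptivity cleanly: the decomposition~\eqref{mix} is required to hold for \emph{every} prefix $z_{1:i-1}$, and the source label $s_j$ for the point sitting in slot $i$ must be drawn according to the mixture weights that are valid for the realized prefix — but the weights $p/2, p/2, 1-p$ are the \emph{same} for all prefixes, which is exactly why the argument goes through. I would make this precise by a coupling/induction on the slot index $i$: process slots $1,\dots,n$ in order, and when we reach slot $i$ holding $x_{\pi(i)}$, first decide its source label (if it is a non-special point) using the prefix-independent weights, then draw $z_i$ from the corresponding prefix-dependent distribution; this shows that the joint law of $(z_{1:n})$ under the ``replace non-special points by mixture samples'' process equals the true law, and simultaneously that it factors through $(A,C)$ as described. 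A secondary point to get right is bookkeeping the ``$+1$'': since $X_0$ and $X_1$ differ only in whether the real first point is type $0$ or type $1$, and that point always occupies a uniformly random slot (independent of everything else), I can let $f$ receive the two counts and internally place one extra token of the larger-count type; the only difference between the two invocations is which coordinate carries the extra token, matching the statement $f(A+1,C-A)$ versus $f(A,C-A+1)$.
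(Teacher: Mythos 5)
Your proposal is correct and follows essentially the same route as the paper's proof: you introduce latent mixture-component labels with the prefix-independent weights $p/2,p/2,1-p$, observe that the shuffled transcript is a randomized postprocessing of the shuffled label vector (with the assignment of the non-special data points to the left-over slots sampled inside $f$, exactly as in the paper's Algorithm~\ref{postprocessing}), and use exchangeability to reduce the label vector to its pair of counts $(A+1,C-A)$ versus $(A,C-A+1)$. The one point worth stating explicitly in a final write-up, which the paper isolates via the set $T=\pi(\{i:y_i=2\})$, is that the conditional law of the data-point-to-left-over-slot assignment given the labels is the same for $b=0$ and $b=1$; your ``uniform permutation of all $n$ slots'' step supplies this, so there is no gap.
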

\begin{proof}
The proof of the above lemma relies on a decomposition of the algorithm that shuffles the data and then applies the local randomizers, to an algorithm in which each client first reports which component of the mixture it will sample from, then applies shuffling to these reports and finally applies a post-processing step in which randomizers are applied according to the shuffled mixture component indices.

\noindent Formally, define a random variable $Y$ as follows \[Y = \begin{cases} 0 & \text{w.p.   } p/2\\ 1 & \text{w.p.   } p/2 \\ 2 & \text{w.p.   } 1-p \end{cases}.\] Given a dataset $X_b$ for $b \in \zo$ we generate $n$ samples from $\zot$ in the following way. Client number one (holding the first element of the dataset) reports $b$. Clients $2,\ldots,n$ each report an independent sample from $Y$. We then shuffle the reports randomly. Let $\rho_b$ denote the resulting distribution over $\zot^n$.

We claim that there exists a post-processing function $f$ (that depends on $x^0_1,x^1_1,x_2,\ldots, x_n$) such that for $y$ sampled from $\rho_b$, $f(y)$ is distributed identically to $\shuffler(X_b)$. To see this, consider the following process that takes a $b\in \zo$ as an input and defines a jointly distributed pair of random variables $(z,y)$ over $\cS \times \zot^n$, where $\cS = \out[1]\times\cdots\times \out[n]$ is the output space of $\shuffler$. Let $\pi$ be a randomly and uniformly chosen permutation of $[n]$. For every $i \in [n]$, if $\pi(i)\neq 1$ then we first sample $y_i$ according to $Y$, and if $i=1$, we set $y_i=b$. We then use the $y_i$'s to generate $z_i$'s. Specifically,
 $$z_i \sim  \begin{cases} \Aldp[i](z_{1:i-1}, x^0_1) & \text{if } y_i=0; \\
                                    \Aldp[i](z_{1:i-1}, x^1_1) & \text{if } y_i=1;  \\
                                    \lo^{(i)}(z_{1:i-1}, x_{\pi(i)}) & \text{if } y_i=2. \end{cases}$$
By our assumption, this produces a sample $z_i$ from $\Aldp[i](z_{1:i-1}, x_{\pi(i)})$. It is easy to see that the resulting random variable $(z,y)$ has the property that for input $b \in \zo$ its marginal distribution over $\cS$ is the same as $\shuffler(X_b)$ and marginal distribution over $\zot^n$ is $\rho_b$.

We can now show that for every fixed value $v \in \zot^n$ and $b\in \zo$, one can generate a sample from the distribution of $z$ conditioned on $y = v$ without knowing $b$ from which the pair $(z,y)$ was generated. First observe that the above construction of $(y, z, \pi)$ has the property that for any permutation $\sigma$, conditioned on $(y=v, \pi=\sigma)$, the random variable $z$ does not depend on $b$. A natural approach to sample from $z|_{y=v}$ would be to sample a permutation $\sigma$ from the distribution of $\pi$ conditioned on $y=v$, and then sample $z|_{(y,\pi)=(v, \sigma)}$. The conditional distribution of $\pi|_{y=v}$ however is not independent of $b$. Let $T = \pi(\{i: y_i = 2\})$ be the indices corresponding to $v_i=2$. First observe that $T|_{y=v}$ is independent of $b$, since this distribution is uniform over subsets of $\{2,\ldots, n\}$ of the appropriate size. Finally, note that the sampling of $z$ given $y$ only needs $T$. Thus we can sample from $z|_{(y,T)=(v,J)}$ without knowing $b$. This conditional sampling is exactly the post-processing step that we claimed. We include a formal description of the post-processing step in Algorithm~\ref{postprocessing}.

We now analyze the divergence between $\rho_0$ and $\rho_1$. First, the shuffling step implies that $\rho_0$ and $\rho_1$ are symmetric. That is, the probability of any sequence $y \in \zot^n$ depends only on the total number of 0s and 1s in the sequence. This implies that  the divergence between $\rho_0$ and $\rho_1$ is equal to the divergence between the distribution of the counts of 0's and 1's. Let $(c_0,c_1)=(\sum_{i\in[n]} \indicator_{y_i=0}, \sum_{i\in[n]} \indicator_{y_i=1}$).
It now suffices to observe that, if $C\sim\bin(n-1, p)$ and $A\sim\bin(C,1/2)$, then the counts for $y\sim \rho_0$ are distributed as $(1+A,C-A)$ and the count for $y\sim \rho_1$ is distributed as $(A,C-A+1)$.

\begin{algorithm}
  \textbf{Input:} $x^0_1,x^1_1,x_2,\ldots, x_n$; $y \in \zot^n$\\
  $J := \emptyset$\\
  \For{$i=1,\ldots,n$}{
  \If{$y_i =2$}{
  Let $j_i$ be a randomly and uniformly chosen element of $[2:n]\setminus J$\\
  $J := J \cup \{j_i\}$
  }
  Sample $z_i$ from $ \begin{cases} \Aldp[i](z_{1:i-1}, x^0_1) & \text{if } y_i=0; \\
                                    \Aldp[i](z_{1:i-1}, x^1_1) & \text{if } y_i=1;  \\
                                    \lo^{(i)}(z_{1:i-1}, x_{j_i}) & \text{if } y_i=2. \end{cases}$\\
  }
  \textbf{return} $z_1, \ldots, z_n$
  \caption{Post-processing function, $f$} \label{postprocessing}
\end{algorithm}

\end{proof}

\begin{figure}
\includegraphics[scale=0.4]{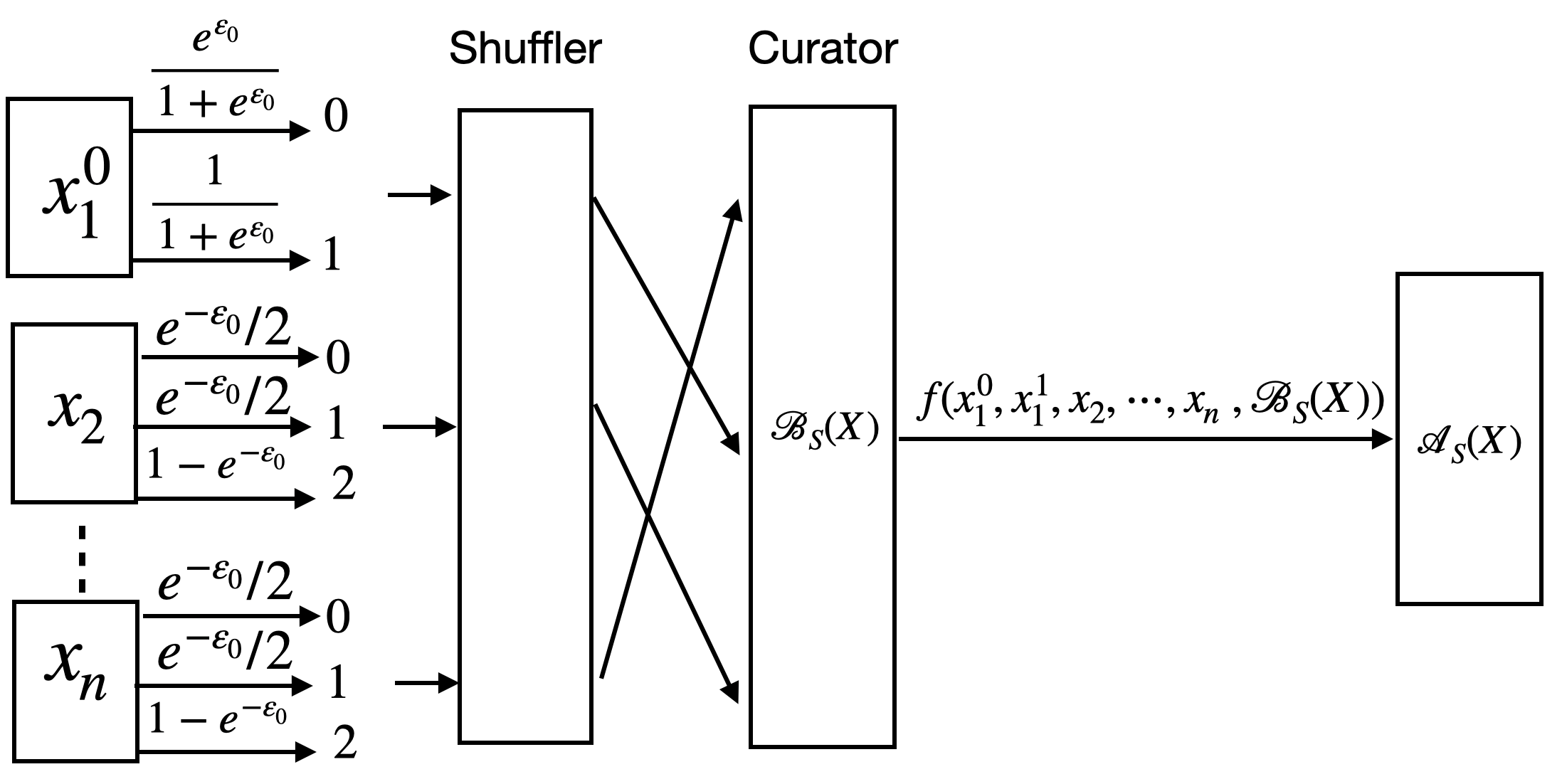}
\caption{Graphical representation of the reduction in the proof of Theorem~\ref{higheps0}}\label{graphoutline}
\end{figure}

We can now complete the proof of Theorem~\ref{higheps0}. To do this we will use privacy amplification provided by the clones (Lemma \ref{binomstomixtures}) together with the observation that for any $\eps_0$-DP randomizer $\lr$, $\lr(x^0_1)$ and $\lr(x^1_1)$ are $\eps_0$-indistinguishable. To fit this into our analysis we will use the fact that $\lr(x^0_1)$ and $\lr(x^1_1)$ can be viewed as post-processing of a binary randomized response with parameter $\eps_0$  \citep{Kairouz:2015} (see also \citep{murtagh2016complexity}).
\begin{lemma}[\citep{Kairouz:2015}]\label{lem:kov}
Let $\lr\colon \D \to \cS$ be an $\eps_0$-DP local randomizer and $x_0,x_1 \in \D$. Then there exists a randomized algorithm $\Q \colon \zo \to \cS$ such that $\lr(x_0) =  \frac{e^{\eps_0}}{e^{\eps_0}+1} \Q(0) + \frac{1}{e^{\eps_0}+1} \Q(1)$ and $\lr(x_1) =  \frac{1}{e^{\eps_0}+1} \Q(0) + \frac{e^{\eps_0}}{e^{\eps_0}+1} \Q(1)$.
\end{lemma}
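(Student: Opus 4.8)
The plan is to solve directly for $\Q$ by treating the two required identities as a linear system, and then to check that the $\eps_0$-DP hypothesis is exactly what forces the solution to be a pair of honest probability measures. First I would fix a common dominating measure, say $\mu = \lr(x_0) + \lr(x_1)$, and let $p, q$ be the Radon--Nikodym densities of $\lr(x_0)$ and $\lr(x_1)$ with respect to $\mu$. Pure $\eps_0$-DP of $\lr$ (equivalently, $\eps_0$-indistinguishability of $\lr(x_0)$ and $\lr(x_1)$) is equivalent to the pointwise bounds $e^{-\eps_0} q(z) \le p(z) \le e^{\eps_0} q(z)$ holding for $\mu$-almost every $z$; this translation is routine. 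Writing $a = e^{\eps_0}/(e^{\eps_0}+1)$, the target identities $\lr(x_0) = a\,\Q(0) + (1-a)\,\Q(1)$ and $\lr(x_1) = (1-a)\,\Q(0) + a\,\Q(1)$ amount, in terms of the (as yet unknown) $\mu$-densities $g_0, g_1$ of $\Q(0), \Q(1)$, to a $2\times 2$ linear system whose coefficient matrix has determinant $2a-1 = \frac{e^{\eps_0}-1}{e^{\eps_0}+1}$.

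When $\eps_0 = 0$ the system is degenerate, but then $p = q$ $\mu$-a.e., so one simply takes $\Q(0) = \Q(1) = \lr(x_0)$. For $\eps_0 > 0$ the system is invertible, and inverting it gives
\[
g_0 \;=\; \tfrac12\!\left[(p+q) + \tfrac{e^{\eps_0}+1}{e^{\eps_0}-1}(p-q)\right], \qquad
g_1 \;=\; \tfrac12\!\left[(p+q) - \tfrac{e^{\eps_0}+1}{e^{\eps_0}-1}(p-q)\right].
\]
I would then define $\Q\colon \zo \to \cS$ to be the randomized algorithm that on input $b \in \zo$ outputs a sample from the measure $\Q(b)$ whose $\mu$-density is $g_b$.

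It then remains to check three things. (i) Each $g_b$ integrates to $1$ against $\mu$: immediate, since $p$ and $q$ each do and the $(p-q)$ term integrates to $0$. (ii) Each $g_b$ is non-negative $\mu$-a.e.; this is the only place the privacy hypothesis enters. On $\{p \ge q\}$ we get $g_0 \ge 0$ for free, while on $\{p < q\}$ the inequality $g_0 \ge 0$ rearranges — after multiplying through by $e^{\eps_0}-1 > 0$ — exactly to $e^{\eps_0} p(z) \ge q(z)$, which is the $\eps_0$-DP bound; the argument for $g_1 \ge 0$ is symmetric, using $e^{\eps_0} q(z) \ge p(z)$. (iii) The mixtures reproduce the originals: substituting the formulas, the $(p+q)$ part of $a g_0 + (1-a) g_1$ collapses to $\tfrac12(p+q)$, and since $(2a-1)\cdot\frac{e^{\eps_0}+1}{e^{\eps_0}-1} = 1$ the $(p-q)$ part collapses to $\tfrac12(p-q)$, giving $p$; the complementary combination $(1-a)g_0 + a g_1$ gives $q$. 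All of this is elementary algebra; the only point needing a little care is the measure-theoretic bookkeeping for a general output space $\cS$ (choosing $\mu$, passing to densities, and the a.e. likelihood-ratio reformulation of indistinguishability), after which the non-negativity check is the crux and is precisely — and only — what the $\eps_0$-DP assumption buys.
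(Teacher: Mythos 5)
Your proof is correct. The paper states this lemma as a cited result from Kairouz, Oh, and Viswanath and does not reprove it; your argument --- inverting the $2\times 2$ mixing matrix to get explicit densities $g_0,g_1$ and observing that nonnegativity of each $g_b$ is exactly equivalent to the two one-sided $\eps_0$-DP likelihood-ratio bounds --- is the standard proof of that result, and your handling of the degenerate case $\eps_0=0$, the normalization check, and the measure-theoretic setup are all sound.
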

Binary randomized response can be seen as an algorithm that outputs its input with probability $\frac{e^{\eps_0}-1}{e^{\eps_0}+1}$ and outputs an unbiased coin flip with probability $\frac{2}{e^{\eps_0}+1}$. Thus it can be seen as providing privacy amplification by subsampling with probability $\frac{e^{\eps_0}-1}{e^{\eps_0}+1}$. Note that for $\eps_0 <1$ this probability is  $O(\eps_0)$. This will ensure that our bound is accurate in the regime $\eps_0 <1$.

\begin{proof}[Proof of Theorem~\ref{shuffletobinoms}]
Let $X_0$ and $X_1$ be neighboring datasets. We can assume without loss of generality that $X_0=(x^0_1,x_2,\ldots,x_n)$ and $X_1=(x^1_1,x_2,\ldots,x_n)$ for $x_1^0\neq x_1^1$ (since the shuffler applies a random permutation). Further, we can assume without loss of generality that for all $j\neq 1$, $x_j\notin\{x_1^0, x_1^1\}$. This true since we can add elements ${x'}_1^0$ and ${x'}_1^1$ to the domain $\D$ and define $\Aldp[i]$ on them in the same way as on  $x_1^0$ and $x_1^1$, respectively. With this definition, for $b \in \zo$, the output distribution of $\shuffler({x'}_1^b,x_2,\ldots,x_n)$ is identical to that of $\shuffler(X_b)$.

By Lemma \ref{lem:kov} we have that for every $i$ and value of auxiliary input $z_{1:i-1}$
there exists a randomized algorithm $\Q^{(i)}(z_{1:i-1}, \cdot) \colon \{x_1^0,x_1^1\} \to \out[i]$ such that
\equ{
\Aldp[i](z_{1:i-1}, x_1^0)=\frac{e^{\eps_0}}{e^{\eps_0}+1}\Q^{(i)}(z_{1:i-1},x_1^0)+\frac{1}{e^{\eps_0}+1}\Q^{(i)}(z_{1:i-1},x_1^1) \label{eq:decomp0}} and
\equ{\Aldp[i](z_{1:i-1}, x_1^1)=\frac{1}{e^{\eps_0}+1}\Q^{(i)}(z_{1:i-1},x_1^0)+\frac{e^{\eps_0}}{e^{\eps_0}+1}\Q^{(i)}(z_{1:i-1},x_1^1) \label{eq:decomp1}} where again for notational brevity we use the same notation for a random variable and its probability density function.

Next we observe that the definition of $\eps_0$-DP directly implies that for an $\eps_0$-DP randomizer $\lr \colon \D \to \cS$ and any input $x_0 \in \D$, there exists a randomized algorithm $\lo \colon \D \to \cS$ such that $\lr(x)$ can be decomposed as $$\lr(x) = \frac{1}{e^{\eps_0}} \lr(x_0) + \left(1- \frac{1}{e^{\eps_0}} \right) \lo(x) .$$ Applying this argument twice we obtain that, for any pair of inputs $x_0,x_1 \in \D$ there exists a decomposition $$\lr(x) = \frac{1}{2 e^{\eps_0}} \lr(x_0) + \frac{1}{2 e^{\eps_0}} \lr(x_1) + \left(1- \frac{1}{e^{\eps_0}} \right)\lo'(x) .$$
Applying this observation to $\Aldp[i](z_{1:i-1}, \cdot)$, for all $i\in [n]$ and values of $z_{1:i-1}$, we have that there exists a randomized algorithm $\lo^{(i)}$ such that for $p = e^{-\eps_0}$ we have:
\[    \Aldp[i](z_{1:i-1}, x) = \frac{p}{2}\Aldp[i](z_{1:i-1}, x^0_1)+\frac{p}{2}\Aldp[i](z_{1:i-1}, x^1_1)+(1-p)\lo^{(i)}(z_{1:i-1}, x).
\]

Note that
\[\Aldp[i](z_{1:i-1}, x_1^0) + \Aldp[i](z_{1:i-1}, x_1^1)= \Q^{(i)}(z_{1:i-1},x_1^0)+\Q^{(i)}(z_{1:i-1},x_1^1) \] and therefore we also obtain that
\begin{equation}\label{mixture}\Aldp[i](z_{1:i-1}, x) = \frac{p}{2}\Q^{(i)}(z_{1:i-1},x_1^0)+\frac{p}{2}\Q^{(i)}(z_{1:i-1},x_1^1)+(1-p)\lo^{(i)}(z_{1:i-1}, x),
\end{equation}

Next we consider the algorithm $\A_\Q$ which is defined in the same way as $\shuffler$, except $\Aldp[i](z_{1:i-1},x_1^b)$ is replaced with $\Q^{(i)}(z_{1:i-1},x_1^b)$. Formally, we define a randomizer $\lr_\Q^{(i)}$ as follows:
For all $x\in \D$, $i\in[n]$ and values of $z_{1:i-1}$ we let
\[\lr_\Q^{(i)}(z_{1:i-1}, x)=
\begin{cases} \Q^{(i)}(z_{1:i-1},x_1^0) & \text{if } x=x_1^0; \\
                                    \Q^{(i)}(z_{1:i-1},x_1^1) & \text{if } x=x_1^1;  \\
                                    \Aldp[i](z_{1:i-1}, x) & \text{otherwise.} \end{cases}\]
Let $\A_\Q$ be defined in the same way as $\shuffler$, except $\Aldp[i]$ is replaced with $\lr_\Q^{(i)}$.

Equations~\eqref{eq:decomp0} and~\eqref{eq:decomp1}, allow us to decompose $\shuffler(X_0)$ and $\shuffler(X_1)$ into the mixture of two components as follows:
\begin{equation}\label{mixxymix}\shuffler(X_0) = \frac{e^{\eps_0}}{e^{\eps_0}+1}\A_\Q(X_0)+\frac{1}{e^{\eps_0}+1}\A_\Q(X_1)\;\;\text{and}\;\;\shuffler(X_1) = \frac{1}{e^{\eps_0}+1}\A_\Q(X_0)+\frac{e^{\eps_0}}{e^{\eps_0}+1}\A_\Q(X_1).
\end{equation}

Note that by eq.~\eqref{mixture}, for all $x\notin\{x_1^0,x_1^1\}$,
\[\lr_\Q^{(i)}(z_{1:i-1}, x) = \frac{p}{2}\lr_\Q^{(i)}(z_{1:i-1}, x_1^0)+\frac{p}{2}\lr_\Q^{(i)}(z_{1:i-1}, x_1^1)+\left(1-p\right)\lo^{(i)}(z_{1:i-1},x).\] Therefore, by Lemma~\ref{binomstomixtures}, there exists a postprocessing function $f$ such that $\A_\Q(X_0)$ is distributed identically to $f(A+1, C-A)$ and $\A_\Q(X_1)$ is distributed identically to  $f(A,C-A+1)$, where $C\sim \bin(n-1, e^{-\eps_0})$, $A\sim\bin(C, 1/2)$. Now, the decomposition in eq.~\eqref{mixture} implies that $\shuffler(X_0)$ is distributed identically to $f(A+\Delta, C-A+1-\Delta)$ and $\shuffler(X_1)$ is distributed identically to  $f(A+1-\Delta,C-A+\Delta)$, $\Delta \sim \Ber\left(\frac{e^{\eps_0}}{e^{\eps_0}+1}\right)$.

\end{proof}

To complete the proof of Theorem~\ref{higheps0}, all we need is the following lemma on the hockey-stick divergence between the distributions resulting from Theorem~\ref{shuffletobinoms} and the post-processing inequality for the hockey-stick divergence.
\begin{restatable}{lemma}{rbinomials}\label{binomials} Let  $n\in\mathbb{N}$, $\delta\in(0,1]$ and $\eps_0\le\log(\frac{n}{16\log(2/\delta)})$.
Consider the process where we sample $C\sim \bin(n-1, e^{-\eps_0})$, $A\sim\bin(C, 1/2)$ and $\Delta \sim \Ber\left(\frac{e^{\eps_0}}{e^{\eps_0}+1}\right)$. Let $P=(A+\Delta,C-A+1-\Delta)$ and $Q=(A+1-\Delta,C-A+\Delta)$, then
$P$ and $Q$ are $(\eps,\delta)$-indistinguishable for
\begin{equation}\label{amplificationboundeq}
\eps= \log\left(1+\frac{e^{\eps_0}-1}{e^{\eps_0}+1}\left(\frac{8\sqrt{e^{\eps_0}\log(4/\delta)}}{\sqrt{n}}+\frac{8e^{\eps_0}}{n}\right)\right). \end{equation}
\end{restatable}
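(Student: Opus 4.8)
The plan is to peel off the randomness of $C$ (the number of clones), charge the event ``too few clones'' to $\delta$, and on the remaining range of $C$ combine the subsampling interpretation of binary randomized response with concentration of a Binomial. Throughout write $q=\tfrac{e^{\eps_0}-1}{e^{\eps_0}+1}$, and for $c\in\mathbb{N}$ let $R_1^{(c)}$ and $R_0^{(c)}$ denote the laws of $(A+1,\,c-A)$ and $(A,\,c-A+1)$ respectively, where $A\sim\bin(c,1/2)$.

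\textbf{Conditioning and symmetry.} Condition the whole construction on $C=c$. Then the first coordinate of $P$ is $A+\Delta$ with $A\sim\bin(c,1/2)$ and $\Delta\sim\Ber(\tfrac{e^{\eps_0}}{e^{\eps_0}+1})$ independent, and the second coordinate is $c+1$ minus the first; so the conditional law of $P$ given $C=c$ equals $\tfrac{1}{e^{\eps_0}+1}R_0^{(c)}+\tfrac{e^{\eps_0}}{e^{\eps_0}+1}R_1^{(c)}$, and that of $Q$ equals $\tfrac{e^{\eps_0}}{e^{\eps_0}+1}R_0^{(c)}+\tfrac{1}{e^{\eps_0}+1}R_1^{(c)}$. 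Since $C$ is common randomness, joint convexity of $\dalpha{e^\eps}$ gives $\dalpha{e^\eps}(P\|Q)\le\sum_c\Pr[C{=}c]\,\dalpha{e^\eps}\big((P\mid C{=}c)\,\|\,(Q\mid C{=}c)\big)$, and likewise for $\dalpha{e^\eps}(Q\|P)$. Moreover, since $C-A$ has the same conditional law as $A$ given $C$, the coordinate-swap map sends the law of $P$ to that of $Q$ and vice versa; as post-processing cannot increase the hockey-stick divergence, $\dalpha{e^\eps}(Q\|P)=\dalpha{e^\eps}(P\|Q)$, so it suffices to bound the latter.

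\textbf{Subsampling within each $c$, then a Binomial tail.} Fix $c$ and put $M^{(c)}=\tfrac12(R_0^{(c)}+R_1^{(c)})$. Then $(P\mid C{=}c)=(1-q)M^{(c)}+q\,R_1^{(c)}$ and $(Q\mid C{=}c)=(1-q)M^{(c)}+q\,R_0^{(c)}$, so Lemma~\ref{advancedjointconvexity} applies with common component $M^{(c)}$: for any $\eps_1>0$ and $\eps=\log(1+q(e^{\eps_1}-1))$, $\dalpha{e^\eps}\big((P\mid C{=}c)\,\|\,(Q\mid C{=}c)\big)\le q\max\{\dalpha{e^{\eps_1}}(R_1^{(c)}\|M^{(c)}),\,\dalpha{e^{\eps_1}}(R_1^{(c)}\|R_0^{(c)})\}\le q\,\dalpha{e^{\eps_1}}(R_1^{(c)}\|R_0^{(c)})$, the last inequality by convexity of $\dalpha{e^{\eps_1}}(R_1^{(c)}\|\cdot)$ together with $\dalpha{e^{\eps_1}}(R_1^{(c)}\|R_1^{(c)})=0$. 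Next, the likelihood ratio of $R_1^{(c)}$ to $R_0^{(c)}$ at the point with first coordinate $k$ equals $\binom{c}{k-1}/\binom{c}{k}=k/(c-k+1)$, which is increasing in $k$; hence $\dalpha{e^{\eps_1}}(R_1^{(c)}\|R_0^{(c)})\le\Pr_{A\sim\bin(c,1/2)}[A\ge c/2+s_c]$ with $s_c=\tfrac{c(e^{\eps_1}-1)-2}{2(e^{\eps_1}+1)}$, valid once $c(e^{\eps_1}-1)\ge2$. One now takes $e^{\eps_1}-1=\tfrac{8\sqrt{e^{\eps_0}\log(4/\delta)}}{\sqrt n}+\tfrac{8e^{\eps_0}}{n}$, which makes the corresponding $\eps$ exactly \eqref{amplificationboundeq}; bounds the tail by the Chernoff estimate $\exp(-2s_c^2/c)$; checks that $2s_c^2/c$ is nondecreasing in $c$ for $c\ge c_0:=(n-1)e^{-\eps_0}/2$ (using $c_0(e^{\eps_1}-1)\ge\tfrac{4(n-1)}{n}\ge2$), so that $c=c_0$ is the worst case; and invokes the hypothesis $\eps_0\le\log\!\big(\tfrac{n}{16\log(2/\delta)}\big)$, i.e. $(n-1)e^{-\eps_0}\ge 16\log(2/\delta)\cdot\tfrac{n-1}{n}$, which bounds $e^{\eps_1}$ by an absolute constant and yields both $\Pr[C<c_0]\le\delta/2$ (another Chernoff bound) and $\sup_{c\ge c_0}\dalpha{e^{\eps_1}}(R_1^{(c)}\|R_0^{(c)})\le\delta/2$. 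Since $\dalpha{e^\eps}(\cdot\|\cdot)\le1$ on $\{C<c_0\}$ and $q\le1$, these combine to $\dalpha{e^\eps}(P\|Q)\le\delta/2+\delta/2=\delta$.

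\textbf{Where the difficulty is.} The reduction down to $\dalpha{e^{\eps_1}}(R_1^{(c)}\|R_0^{(c)})$ uses only the law of total probability, joint convexity, post-processing, and Lemma~\ref{advancedjointconvexity}, and is routine; the quantitative heart is the last step, where one must show the Binomial tail at $c=c_0$ genuinely sits below $\delta/2$ with the stated constants. The interplay between the constants (the $8$'s in $e^{\eps_1}-1$, the $16$ in the constraint on $\eps_0$, and the split point $c_0$) is tight, and a bare Hoeffding bound — together with discarding the $-e^{\eps_1}R_0^{(c)}$ mass past the threshold — is only marginal. Obtaining comfortable slack will likely require the sharper relative-entropy form of the Chernoff bound for $\bin(c,1/2)$, or summing the geometrically decaying upper tail of $\bin(c,1/2)$ past the threshold while retaining the factor $1-e^{\eps_1}R_0^{(c)}/R_1^{(c)}$ that vanishes exactly at the threshold; the hypothesis on $\eps_0$ is precisely what forces $\eps_1=O(1)$, since otherwise $R_0^{(c)}$ and $R_1^{(c)}$ are too far apart for any amplification.
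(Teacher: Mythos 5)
Your reduction is sound and, up to reorganization, is the paper's own argument: the paper also writes $P$ and $Q$ as $\frac{2}{e^{\eps_0}+1}T+\frac{e^{\eps_0}-1}{e^{\eps_0}+1}P_0$ and $\frac{2}{e^{\eps_0}+1}T+\frac{e^{\eps_0}-1}{e^{\eps_0}+1}Q_0$ with $P_0=(A+1,C-A)$, $Q_0=(A,C-A+1)$, $T=\frac12(P_0+Q_0)$, applies Lemma~\ref{advancedjointconvexity} with $q=\frac{e^{\eps_0}-1}{e^{\eps_0}+1}$, and then controls $\dalpha{e^{\eps_1}}(P_0\|Q_0)$ by concentration of $C$ and of $A$ given $C$; your conditioning on $C$ before invoking advanced joint convexity is a cosmetic difference. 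The genuine gap is that you never establish the quantitative heart of the lemma — that the stated constants actually make the two tail terms sum to at most $\delta$ — and you say so yourself (``only marginal,'' ``will likely require the sharper relative-entropy form''). Since the entire content of this lemma is the explicit constants in Equation~\eqref{amplificationboundeq}, a proof that defers exactly this verification is incomplete. Your worry is also justified on the merits: in the extremal regime $e^{\eps_0}n^{-1}\approx(16\log(2/\delta))^{-1}$ one has $e^{\eps_1}-1$ close to $2$ (and up to about $3.5$ for $\delta$ near $1$), the threshold shift $s_{c_0}$ is then only about $\frac{2\sqrt{pnL}}{2+u}$ with $u=e^{\eps_1}-1$, and the Hoeffding exponent $2s_{c_0}^2/c_0\ge 16L/(2+u)^2$ lands within a few percent of $\log(2/\delta)$; whether it clears $\delta/2$ depends on exactly the lower-order terms you dropped (the $-2$ in $s_c$, the $(n-1)$ versus $n$, the gap between $\log(2/\delta)$ and $\log(4/\delta)$).

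The way to close the gap — and the way the paper does it — is to run the comparison in the other direction. Rather than fixing $\eps_1$ and asking whether the binomial tail past the induced threshold $c/2+s_c$ is below $\delta/2$, fix the deviation $t=\sqrt{\tfrac{C}{2}\log(4/\delta)}$, whose two-sided Hoeffding tail is exactly $2e^{-\log(4/\delta)}=\delta/2$ with no slack to lose, and then verify algebraically that on the event $\{|A-C/2|\le t\}\cap\{C\ge pn/2\}$ the likelihood ratio satisfies
\[
\frac{A+1}{C-A}\;\le\;\frac{C/2+t+1}{C/2-t}\;\le\;1+\frac{8t}{C}+\frac{4}{C}\;\le\;1+\frac{8\sqrt{\log(4/\delta)}}{\sqrt{pn}}+\frac{8}{pn}\;=\;e^{\eps_1},
\]
using $t\le C/4$ and $C\ge pn/2$ (both consequences of the hypothesis on $\eps_0$, via a Chernoff bound on $C$ costing another $\delta/2$). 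This shows $\{\text{LR}>e^{\eps_1}\}$ is contained in an event of probability at most $\delta$, hence $\dalpha{e^{\eps_1}}(P_0\|Q_0)\le\delta$, and the rest of your argument then delivers the lemma. In your parametrization this is precisely the inequality $s_c\ge t$ for $c\ge c_0$, which reduces to $e^{\eps_1}-1\ge\frac{4t+2}{c-2t}$ and is a clean computation; attempting instead to lower-bound $2s_c^2/c$ by $\log(2/\delta)$ directly is what makes your route marginal.
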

We defer the proof to Appendix~\ref{binomialsproof}.

\subsection{Generalization to Approximate Differential Privacy}

In order to analyse the approximate DP case, we can use a reduction from $(\eps_0,\delta_0)$-DP local randomizers to $\eps_0'$-DP local randomizers. We use a black box argument showing that for every $(\eps_0,\delta_0)$-DP local randomizer $\A$ there exists an $\eps_0'$-DP local randomizer $\A$ that is $\delta_0'$ close in total variation distance to $\A$. The output of $n$ $(\eps_0,\delta_0)$-DP randomizers on shuffled data is then $n\delta_0'$ close is total variation distance to the output of $n$ $\eps_0'$-DP randomizers. A fairly involved reduction of this type is given in \citep{Cheu:2019} (building on the results from \citep{bun2019heavy}) and achieves $\eps_0' = 8\eps_0$ and $\delta_0' = \tilde O (\delta e^{\eps_0}/(1-e^{-\eps_0}))$. Applying this reduction to their shuffling amplification bound for pure DP, \citet{Balle:2020} obtain a resulting $\eps$ of order $\Theta\left(\frac{e^{30\eps_0}}{n}+\frac{e^{20\eps_0}\sqrt{\log(1/\delta)}}{\sqrt{n}}\right)$ in the high $\eps_0$ regime.

Our bound is based on a simpler and stronger reduction of this type that was also independently shown in \cite{cheu2020limits}. Its tightest form is best stated in the add/delete variant of differential privacy which was defined for local randomizers in \citep{ErlingssonFMRSTT20}.
\begin{definition}[Local Deletion Differential Privacy]
An algorithm $\lr\colon \mathcal{D}\to \cS$ is a {\em deletion $(\eps, \delta)$-DP local randomizer} if
there exists a reference distribution $\reference$ such that for all data points $x\in\mathcal{D}$, $\lr(x)$ and $\reference$ are $(\eps, \delta)$-indistinguishable.
\end{definition}
We will occasionally refer to a function that satisfies Definition~\ref{localrandomizer} as a \emph{replacement} $(\eps,\delta)$-DP local randomizer.
It is easy to show that a replacement $(\eps,\delta)$-DP algorithm is also a deletion $(\eps,\delta)$-DP algorithm, and that
a deletion $(\eps,\delta)$-DP algorithm is also a replacement $(2\eps,2\delta)$-DP algorithm. The following Lemma allows us to convert $(\eps,\delta)$-DP local randomizers to $\eps$-DP local randomizers that are within $\delta$ in total variation distance.

\begin{restatable}{lemma}{rapproxtopure}\label{approxtopure}
Suppose $\lr$ is a deletion $(\eps, \delta)$-DP local randomizer with reference distribution $\reference$. Then there exists a randomizer $\lr'$ that is a deletion $\eps$-DP local randomizer with reference distribution $\reference$, and for all inputs $x$, $\TV(\lr(x), \lr'(x))\le \delta$. In particular, $\lr'$ is a (replacement) $2\eps$-DP local randomizer.
\end{restatable}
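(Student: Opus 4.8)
The plan is to build $\lr'$ pointwise in $x$ by ``clipping'' the density of $\lr(x)$ relative to $\reference$ and then renormalizing, checking measurability at the end. Fix an input $x$ and write $P := \lr(x)$, $Q := \reference$. By the definition of a deletion $(\eps,\delta)$-DP local randomizer, $P$ and $Q$ are $(\eps,\delta)$-indistinguishable, so $\dalpha{e^{\eps}}(P\|Q)\le\delta$ and $\dalpha{e^{\eps}}(Q\|P)\le\delta$. Fixing a common dominating measure $\lambda$ (e.g.\ $\lambda = P+Q$) and writing $p := dP/d\lambda$, $q := dQ/d\lambda$, these bounds read $\delta_1 := \int (p-e^{\eps}q)_+\,d\lambda \le \delta$ and $\int(q-e^{\eps}p)_+\,d\lambda\le\delta$. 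I would then define the clipped density $\tilde p := \max\{e^{-\eps}q,\ \min\{p,\ e^{\eps}q\}\}$, i.e.\ the pointwise projection of $p$ onto the band $[e^{-\eps}q,\ e^{\eps}q]$, so that $e^{-\eps}q\le\tilde p\le e^{\eps}q$ pointwise. A direct computation then gives $\int \tilde p\,d\lambda = 1-\delta_1+\delta_3$ and $\|p-\tilde p\|_1 = \delta_1+\delta_3$, where $\delta_3 := \int(e^{-\eps}q - p)_+\,d\lambda = e^{-\eps}\int(q-e^{\eps}p)_+\,d\lambda \le \delta$ is exactly the mass $\tilde p$ adds below $e^{-\eps}q$ (here $\delta_1$ is the mass removed above $e^{\eps}q$). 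The reverse divergence enters precisely because $\lr'$ must be \emph{pure} $\eps$-DP, which forces the two-sided clipping.

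Since $\tilde p$ has total mass $m := 1-\delta_1+\delta_3$, which need not be $1$, the next step is to restore mass $1$ without leaving the band. If $m\le 1$ I would set $p' := (1-\theta)\tilde p + \theta\, e^{\eps}q$ with $\theta = (1-m)/(e^{\eps}-m)\in[0,1]$; if $m > 1$ I would set $p' := (1-\theta)\tilde p + \theta\, e^{-\eps}q$ with $\theta = (m-1)/(m-e^{-\eps})\in[0,1]$ (the case $m=1$ is trivial: $p' := \tilde p$). In each case $\int p'\,d\lambda = 1$, $p'$ is a convex combination of functions lying in $[e^{-\eps}q,\ e^{\eps}q]$ so the pointwise bounds survive, and $\|\tilde p - p'\|_1 = |m-1| = |\delta_1-\delta_3|$. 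Let $\lr'(x)$ be the distribution with density $p'$ with respect to $\lambda$. The bounds $e^{-\eps}\reference \le \lr'(x)\le e^{\eps}\reference$ say precisely that $\lr'(x)$ and $\reference$ are $(\eps,0)$-indistinguishable, so $\lr'$ is a deletion $\eps$-DP local randomizer with reference $\reference$.

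It remains to bound the total variation distance and to note measurability. By the triangle inequality in $L_1$, $\|p-p'\|_1 \le \|p-\tilde p\|_1 + \|\tilde p - p'\|_1 = (\delta_1+\delta_3) + |\delta_1-\delta_3| = 2\max\{\delta_1,\delta_3\}$, hence $\TV(\lr(x),\lr'(x)) = \tfrac12\|p-p'\|_1 \le \max\{\delta_1,\delta_3\}\le\delta$. The final sentence of the lemma is then immediate from the already-noted fact that a deletion $\eps$-DP local randomizer is a replacement $2\eps$-DP local randomizer. The one point that needs care is measurability of $x\mapsto\lr'(x)$: I would observe that, taking $\lambda = \lr(x)+\reference$, the density $p$, the scalars $\delta_1,\delta_3,m,\theta$, and finally $p'$ are all obtained from $\lr(x)$ and $\reference$ by fixed measurable operations (a Radon--Nikodym derivative, pointwise $\min/\max$, and an affine combination with explicitly computed coefficients), so the map is measurable and $\lr'$ is a bona fide local randomizer. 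I expect this bookkeeping, together with keeping the two cases $m\le 1$ and $m>1$ straight in the $\TV$ estimate, to be the only real obstacle; the core inequalities are elementary.
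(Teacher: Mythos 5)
Your proof is correct. The first and main step is identical to the paper's: project the density of $\lr(x)$ pointwise onto the band $[e^{-\eps}\reference,\,e^{\eps}\reference]$, and observe that the clipped mass on each side is controlled by the two hockey-stick divergences (your $\delta_1$ and $\delta_3$ are the paper's $\tau$ and $\tau'$). Where you genuinely diverge is in restoring total mass one. The paper keeps the clipping form $[\reference_x(y)]_{e^{-z}\reference(y)}^{e^{\eps}\reference(y)}$ and tunes the lower threshold $z$ by an intermediate-value-theorem argument until the added and removed masses balance; this keeps the correction localized to the region where $\reference>\reference_x$ and yields $\TV\le\tau\le\delta$ directly, but requires a continuity argument and a case split on the sign of $\tau-\tau'$. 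You instead take an explicit convex combination of the clipped density with the boundary density $e^{\eps}q$ (or $e^{-\eps}q$), with a closed-form mixing weight; the band constraint is preserved by convexity, and the triangle inequality gives $\TV\le\max\{\delta_1,\delta_3\}\le\delta$. Your route is somewhat more constructive (no existence-via-IVT step) and makes the measurability of $x\mapsto\lr'(x)$ easier to discharge, at the cost of a slightly more global perturbation of the density; both give the same conclusion, and your handling of the two cases $m\le 1$ and $m>1$ is sound.
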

The proof can be found in Appendix \ref{app:approxproofs}.
We note that when this transformation is applied to an $(\eps,\delta)$-DP local randomizer that is also deletion $(\eps/2,\delta)$-DP (such as, for example, addition of isotropic Gaussian noise to a vector from a Euclidean ball of bounded radius) then the extra factor $2$ can be avoided. To avoid incurring this factor of $2$ in our amplification by shuffling bound, we apply the core observation from the proof of Lemma \ref{approxtopure} directly in our proof of the amplification bound. This leads to the following resulting bound which we prove in Appendix \ref{app:approxproofs}.

\begin{restatable}{thm}{rapproxDP}~\label{approxDP}
For a domain $\D$, let $\Aldp[i]\colon f\times\mathcal{D}\to\out[i]$ for $i\in[n]$ (where $\out[i]$ is the range space of $\Aldp[i]$) be a sequence of algorithms such that $\Aldp[i](z_{1:i-1}, \cdot)$ is a $(\eps_0, \delta_0)$-DP local randomizer for all values of auxiliary inputs $z_{1:i-1}\in\out[1]\times\cdots\times\out[i-1]$. Let $\shuffler\colon \mathcal{D}^n\to\out[1]\times\cdots\times \out[n]$ be the algorithm that given a dataset $x_{1:n}\in\mathcal{D}^n$, samples a uniformly random permutation $\pi$, then sequentially computes $z_i=\Aldp[i](z_{1:i-1}, x_{\pi(i)})$ for $i\in[n]$ and outputs $z_{1:n}$. Then for any $\delta\in[0,1]$ such that $\eps_0\le\log(\frac{n}{16\log(2/\delta)})$, $\shuffler$ is
 $(\eps, \delta+(e^{\eps}+1)(1+e^{-\eps_0}/2)n\delta_0)$-DP, where $\eps$ is as in Equation~\eqref{epsbound}.
\end{restatable}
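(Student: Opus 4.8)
The plan is to reduce Theorem~\ref{approxDP} to the pure-DP case (Theorems~\ref{shuffletobinoms} and~\ref{higheps0}) by replacing every $(\eps_0,\delta_0)$-DP local randomizer in the protocol with a nearby randomizer that has \emph{exactly} the decomposition structure the pure-DP proof exploits, and then to pay for the substitution in the $\delta$ parameter. A naive route --- feed each randomizer through Lemma~\ref{approxtopure} to obtain a deletion $\eps_0$-DP randomizer $\TV$-close to it, then use that a deletion $\eps_0$-DP randomizer is replacement $2\eps_0$-DP --- already yields a statement of this form, but with $\eps_0$ replaced by $2\eps_0$ in~\eqref{epsbound}; to keep the asymptotically correct $e^{\eps_0/2}$ dependence we instead graft the \emph{core} of the proof of Lemma~\ref{approxtopure} (clipping densities where indistinguishability fails) directly into the reduction, which is exactly what removes the spurious factor $2$.

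As in the proof of Theorem~\ref{shuffletobinoms} we may assume $X_0=(x_1^0,x_2,\dots,x_n)$ and $X_1=(x_1^1,x_2,\dots,x_n)$ with $x_1^0\neq x_1^1$ and $x_j\notin\{x_1^0,x_1^1\}$ for $j\neq 1$. Fix $i$ and an auxiliary input $z_{1:i-1}$, and write $\lr:=\Aldp[i](z_{1:i-1},\cdot)$. The key step is to construct a randomizer $\tilde\lr$ on the same domain such that (i) $\tilde\lr(x_1^0)$ and $\tilde\lr(x_1^1)$ are $\eps_0$-indistinguishable, so that Lemma~\ref{lem:kov} applied to this pair yields a $\tilde\Q^{(i)}$ satisfying the analogues of~\eqref{eq:decomp0}--\eqref{eq:decomp1}; and (ii) for every $x\notin\{x_1^0,x_1^1\}$ one has $\tilde\lr(x)\ge\tfrac{p}{2}\tilde\lr(x_1^0)+\tfrac{p}{2}\tilde\lr(x_1^1)$ pointwise, with $p=e^{-\eps_0}$, so that the mixture identity~\eqref{mix} holds for $\tilde\lr$ with a bona fide residual distribution $\lo^{(i)}$ (the total mass automatically balances, so the pointwise inequality is all that is needed). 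The randomizer $\tilde\lr$ is built by clipping $\lr(x_1^0)$ and $\lr(x_1^1)$ to the region where they lie within an $e^{\eps_0}$ factor of each other, and, for every other $x$, raising $\lr(x)$ wherever the bound in (ii) is violated --- the total violation is controlled because $D_{e^{\eps_0}}(\lr(x_1^b)\|\lr(x))\le\delta_0$ --- and removing the compensating mass from the surplus. A careful choice of what to clip and where to return the redistributed mass keeps $\sup_{x}\TV(\lr(x),\tilde\lr(x))\le(1+e^{-\eps_0}/2)\delta_0$ uniformly over $z_{1:i-1}$; this is the step where the proof of Lemma~\ref{approxtopure} is used in spirit and where pinning down the constant, rather than settling for the easy $O(\delta_0)$ bound, takes real work.

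Let $\tilde\shuffler$ be the shuffler obtained by substituting $\tilde\Aldp[i]:=\tilde\lr$ for $\Aldp[i]$. Re-running the proof of Theorem~\ref{shuffletobinoms} --- which uses the $\eps_0$-DP hypothesis only to produce decompositions~\eqref{eq:decomp0}--\eqref{eq:decomp1} and~\eqref{mix}, both of which (i) and (ii) supply for $\tilde\Aldp[i]$ --- together with Lemma~\ref{binomials} (equivalently Theorem~\ref{higheps0}), shows that $\tilde\shuffler(X_0)$ and $\tilde\shuffler(X_1)$ are $(\eps,\delta)$-indistinguishable for the $\eps$ of~\eqref{epsbound}. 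To transfer this to $\shuffler$, couple $\shuffler(X_b)$ and $\tilde\shuffler(X_b)$ by using the same permutation $\pi$ and, at each step $i$, a maximal coupling of $\Aldp[i](z_{1:i-1},\cdot)$ and $\tilde\Aldp[i](z_{1:i-1},\cdot)$ evaluated at $x_{\pi(i)}$; since these two distributions are within $(1+e^{-\eps_0}/2)\delta_0$ in $\TV$ for every value of the (common) prefix and input, a union bound over the $n$ steps gives $\TV(\shuffler(X_b),\tilde\shuffler(X_b))\le n(1+e^{-\eps_0}/2)\delta_0$ for $b\in\{0,1\}$. Finally, the elementary fact that if $P,Q$ are $(\eps,\delta)$-indistinguishable and $\TV(P,P'),\TV(Q,Q')\le t$ then $P',Q'$ are $(\eps,\delta+(e^{\eps}+1)t)$-indistinguishable (apply $\TV(P,P')\le t$ before and $\TV(Q,Q')\le t$ after the $(\eps,\delta)$ step, and symmetrically), used with $t=n(1+e^{-\eps_0}/2)\delta_0$, gives exactly the claimed $\bigl(\eps,\ \delta+(e^{\eps}+1)(1+e^{-\eps_0}/2)n\delta_0\bigr)$ guarantee for $\shuffler$.

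The main obstacle is the inline construction of $\tilde\lr$ in the second paragraph: it must simultaneously force the two ``clone sources'' $\tilde\lr(x_1^0),\tilde\lr(x_1^1)$ to be exactly $\eps_0$-indistinguishable, restore the pointwise domination in (ii) for \emph{every} remaining input against the already-perturbed $\tilde\lr(x_1^0),\tilde\lr(x_1^1)$, keep each $\tilde\lr(x)$ a probability distribution, and accomplish all of this with $\TV$ displacement at most $(1+e^{-\eps_0}/2)\delta_0$ --- in particular without detouring through the lossy deletion-to-replacement step, which would cost a factor $2$ in the exponent. Everything else (the reduction to datasets differing on the first coordinate, the hybrid/coupling $\TV$ estimate, and the final indistinguishability bookkeeping) is routine.
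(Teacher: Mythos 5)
Your proposal is correct and follows essentially the same route as the paper: replace each $(\eps_0,\delta_0)$-DP local randomizer by a nearby randomizer with the exact mixture structure used in the pure-DP argument, rerun the reduction of Theorem~\ref{shuffletobinoms} together with Lemma~\ref{binomials}, and transfer back to $\shuffler$ via the union bound and Lemma~\ref{tvdistancefromprivate} with $t=n(1+e^{-\eps_0}/2)\delta_0$. The one piece you defer as the ``main obstacle'' --- constructing $\tilde\lr$ with $\sup_x\TV(\lr(x),\tilde\lr(x))\le(1+e^{-\eps_0}/2)\delta_0$ --- the paper obtains not by a bespoke simultaneous clipping but by two black-box applications of Lemma~\ref{approxtopure}: once with reference $\lr(x_1^0)$ (yielding $\lr'(x_1^1)$ with $\TV\le\delta_0$ while leaving $\lr(x_1^0)$ itself unchanged), and once with reference $\tfrac12\left(\lr(x_1^0)+\lr(x_1^1)\right)$ (yielding, for each other $x$, a randomizer within $\TV$ distance $\delta_0$ that pointwise dominates $e^{-\eps_0}$ times that reference); substituting $\lr'(x_1^1)$ for $\lr(x_1^1)$ inside the latter decomposition costs an additional $\tfrac{1}{2e^{\eps_0}}\delta_0$, which is exactly where the constant $(1+e^{-\eps_0}/2)$ comes from.
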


\new{Notice that the bound on $\eps$ in Theorem~\ref{approxDP} matches the bound in Theorem~\ref{higheps0}. A natural question is if the dependence of $\delta$ on $n$ and $\eps_0$ is optimal. We leave this question for future work.}

\section{A Tighter Analysis for Specific Randomizers}

In this section we give a more refined analysis of the amplification by shuffling that can exploit additional properties of the local randomizer and lead to improved privacy amplification results for specific randomizers. In particular, Theorem~\ref{higheps0new} will immediately imply a privacy amplification by shuffling result for $k$ randomized response ($\kRR$). To our knowledge, this is the first theoretical result that shows that privacy amplification improves as $k$ increases, corroborating empirical results from \cite{Balle:2019}. As with Theorem~\ref{higheps0}, Theorem~\ref{higheps0new} also gives us a method for empirically computing the privacy guarantee.

We will show the amplification bound is stronger if, in addition, $\Aldp[i](z_{1:i-1},x_1^0)$ and $\Aldp[i](z_{1:i-1},x_1^1)$ are close in total variation distance, specifically when the total variation distance is less than $2/(e^{\eps_0}+1)$ (which is implied by $\Aldp[i]$ being $\eps_0$-DP).

\def\qzi{\Q^{(i)}_0(z_{1:i-1})}
\def\qozi{\Q^{(i)}_1(z_{1:i-1},x_1^0)}
\def\qooi{\Q^{(i)}_1(z_{1:i-1},x_1^1)}
\newcommand{\qo}[1]{\Q^{(i)}_1(z_{1:i-1},#1)}

We first state the most general form of this amplification theorem. We will use $\mnom(n;p_1,\ldots,p_k)$ to denote the multinomial distribution over $k$-tuples with $n$ samples and probabilities of each of the outcomes being $p_1,\ldots,p_k$ (where $\sum_{i\in[k]}p_k=1)$.
\begin{restatable}{theorem}{rhighepsOnew}\label{higheps0new}
For a domain $\mathcal{D}$, let $\Aldp[i]:\out[1]\times\cdots\times\out[i-1]\times\mathcal{D}\to\out[i]$ for $i\in[n]$ (where $\out[i]$ is the range space of $\Aldp[i]$) be a sequence of algorithms such that $\Aldp[i](z_{1:i-1}, \cdot)$ is a $\eps_0$-DP local randomizer for all values of auxiliary inputs $z_{1,i-1}\in\out[1]\times\cdots\times\out[i-1]$. Let $\shuffler:\mathcal{D}^n\to\out[1]\times\cdots\times \out[n]$ be the algorithm that given a dataset $x_{1:n}\in\mathcal{D}^n$, samples a uniformly random permutation $\pi$, then sequentially computes $z_i=\Aldp[i](z_{1:i-1}, x_{\pi(i)})$ for $i\in[n]$ and outputs $z_{1:n}$. Let $X_0=(x^0_1,x_2,\ldots,x_n)$ and $X_1=(x^1_1,x_2,\ldots,x_n)$ be two neighboring datasets  such that for all $j\neq 1$, $x_j\notin\{x_1^0, x_1^1\}$. Suppose that there exist positive values $p\in(0,1/3]$ and $q\in(0,1)$ such that for all $i\in[n]$ and $z_{1:i-1} \in  \out[1]\times\cdots\times \out[i-1]$, there exists distributions $\qzi, \qozi$ and $\qooi$ such that for all $b\in \zo$,
\begin{equation}\label{TVeq}
\Aldp[i](z_{1:i-1}, x_1^b) = q \qo{x_1^b}+(1-q)\qzi
\end{equation}
and for all $x\in \D\setminus\{x_1^0,x_1^1\}$, there exists a distribution $\lo{(i)}(z_{1:i-1}, x)$ such that
\begin{equation}\label{hidingpair}
\Aldp[i](z_{1:i-1}, x) =  p\qo{x_1^0}+p\qo{x_1^1}+p\qzi+ (1-3p)\lo_b^{(i)}(z_{1:i-1}, x) .
\end{equation}
Let $(A,B,C,D)\sim\mnom(n-1;p,p,p,1-3p)$ and $\Gamma \sim \Ber(q)$. There exists a randomized postprocessing algorithm $\Phi$ such that $\shuffler(X_0)$ is distributed identically to $\Phi(A+\Gamma, B, C+1-\Gamma)$ and $\shuffler(X_1)$ is distributed identically to  $\Phi(A, B+\Gamma, C+1-\Gamma)$. In particular, for any $\delta\in[0,1]$ such that $p~\ge~ \frac{8\ln(2/\delta)}{n}$, $\shuffler(X_0)$ is $(\eps, q\delta)$-indistinguishable from $\shuffler(X_1)$, where
\begin{equation}\label{newproofeps}
\eps \leq \ln\left(1+q\left(\frac{4\sqrt{2\log(4/\delta)}}{\sqrt{pn}}+\frac{4}{pn}\right)\right).\end{equation}
\end{restatable}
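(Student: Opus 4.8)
The plan is to mirror the proof of Theorem~\ref{shuffletobinoms}, but now working with a four-way split of every local randomizer instead of a three-way split. The decomposition~\eqref{hidingpair} plays the role that the clone decomposition~\eqref{mix} played before: each of the $n-1$ non-special data points independently reports which of four ``buckets'' it falls into --- the bucket $\qo{x_1^0}$ (with probability $p$), the bucket $\qo{x_1^1}$ (with probability $p$), the shared bucket $\qzi$ (with probability $p$), or the left-over bucket $\lo_b^{(i)}$ (with probability $1-3p$) --- and the special client reports its index $b$. Just as in Lemma~\ref{binomstomixtures}, after shuffling these reports the conditional distribution of the output given the report-vector can be simulated by a post-processing algorithm $\Phi$ that does not see $b$: it re-draws a uniformly random assignment of the ``left-over'' reports to the indices $\{2,\dots,n\}$ and then applies the appropriate randomizers. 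The argument that this post-processing is $b$-independent is identical to the one in the proof of Lemma~\ref{binomstomixtures} (the relevant statistic is the \emph{set} of indices carrying each bucket label, which is uniform among subsets of the right sizes, independent of $b$).

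Next I would use the symmetry of shuffling to reduce the divergence computation to the joint distribution of the bucket \emph{counts}. With $(A,B,C,D)\sim\mnom(n-1;p,p,p,1-3p)$ counting how many of the $n-1$ clients landed in the $\qo{x_1^0}$, $\qo{x_1^1}$, $\qzi$, and left-over buckets respectively, the output of $\shuffler(X_0)$ becomes a post-processing of the count vector with the special client's contribution added: on $X_0$ we must additionally decide, via~\eqref{TVeq}, whether client $1$'s output lands in bucket $\qo{x_1^0}$ (probability $q$) or in the shared bucket $\qzi$ (probability $1-q$); writing $\Gamma\sim\Ber(q)$ for that choice, the counts for $\shuffler(X_0)$ become $(A+\Gamma,\,B,\,C+1-\Gamma,\,D)$ and symmetrically for $\shuffler(X_1)$ they become $(A,\,B+\Gamma,\,C+1-\Gamma,\,D)$. (The $D$ coordinate is determined by the others and is irrelevant to distinguishing.) This is exactly the claimed reduction to $\Phi$. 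By the post-processing inequality for the hockey-stick divergence it now suffices to bound $\dalpha{e^\eps}$ between the two count distributions.

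For the closed-form bound~\eqref{newproofeps} I would condition on $\Gamma$. When $\Gamma=1$ (probability $q$) the two distributions $(A+1,B,C)$ and $(A,B+1,C)$ differ by moving one unit of mass between the first two coordinates, and by symmetry of $A$ and $B$ this is a symmetric pair of distributions over $(A+B)$ split binomially --- essentially the same ``two-binomial'' situation handled in Lemma~\ref{binomials}, except the relevant binomial is $\Bin(n-1,?)$ with success parameter controlled by $p$ rather than $e^{-\eps_0}$, and the conditioning on $A+B$ being close to its mean $2p(n-1)$ is what the hypothesis $p\ge 8\ln(2/\delta)/n$ buys us. When $\Gamma=0$ the two count vectors coincide, contributing zero to the divergence. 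Using advanced joint convexity (Lemma~\ref{advancedjointconvexity}) with mixing weight $q$ to ``pull out'' the $\Gamma=0$ branch, the overall divergence at level $\eps'=\log(1+q(e^\eps-1))$ is bounded by $q$ times the divergence of the $\Gamma=1$ branch, and the latter is bounded by $\delta$ once $e^\eps-1\ge \frac{4\sqrt{2\log(4/\delta)}}{\sqrt{pn}}+\frac{4}{pn}$ by a Chernoff-plus-binomial-divergence estimate essentially identical to the proof of Lemma~\ref{binomials} (with $e^{-\eps_0}$ replaced by $p$ and the resulting $\delta$ by $q\delta$).

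The main obstacle, as in the pure-DP case, is the two-binomial hockey-stick estimate: one must show that conditioned on the sum $A+B$ (equivalently on $C$, since $A+B = (n-1)-C-D$ has the right concentration), shifting one coordinate by $\pm1$ in a $\Bin(m,1/2)$ changes the distribution by a multiplicative factor close to $1+O(1/\sqrt{m})$ except on a $\delta$-fraction tail, and then integrate the Chernoff tail for $A+B$ against this. This is exactly the content (up to the substitution $e^{-\eps_0}\mapsto p$ and tracking the extra factor $q$) of Lemma~\ref{binomials}, so I would either invoke a parametrized version of that lemma or repeat its short computation; everything else is bookkeeping around the multinomial reduction.
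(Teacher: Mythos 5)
Your reduction is the paper's: the four-bucket report vector, the $b$-independent post-processing $\Phi$, and the identification of $\shuffler(X_b)$ with a post-processing of the multinomial counts $(A+\Gamma,B,C+1-\Gamma)$ and $(A,B+\Gamma,C+1-\Gamma)$ all match, as does your observation that the $\Gamma=1$ branch reduces to the two-binomial estimate of Lemma~\ref{binomials-aux} with $A+B\sim\Bin(n-1,2p)$ (which is where the hypothesis $p\ge 8\ln(2/\delta)/n$ enters). The gap is in your application of advanced joint convexity. Writing $\tau_b=(1-q)\kappa_0+q\kappa_1^b$, where $\kappa_0$ is the law of $(A,B,C+1)$ and $\kappa_1^0,\kappa_1^1$ are the laws of $(A+1,B,C)$ and $(A,B+1,C)$, Lemma~\ref{advancedjointconvexity} gives
\[
\dalpha{e^{\eps'}}(\tau_0\|\tau_1)\le q\max\left\{\dalpha{e^{\eps}}(\kappa_1^0\|\kappa_0),\ \dalpha{e^{\eps}}(\kappa_1^0\|\kappa_1^1)\right\},
\]
not ``$q$ times the divergence of the $\Gamma=1$ branch.'' The cross term $\dalpha{e^{\eps}}(\kappa_1^0\|\kappa_0)$ does not disappear merely because the $\Gamma=0$ components of $\tau_0$ and $\tau_1$ coincide, and—unlike in the proof of Lemma~\ref{binomials}, where the shared component $T$ was itself the mixture $\frac12(P_0+Q_0)$ and ordinary convexity finished the job—here $\kappa_0$ is \emph{not} a mixture of $\kappa_1^0$ and $\kappa_1^1$: it places the extra unit of mass on the third coordinate. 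So bounding only $\dalpha{e^{\eps}}(\kappa_1^0\|\kappa_1^1)$ is not enough.

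The missing step is short but essential, and it is how the paper proceeds: all three pairs $(\kappa_1^0,\kappa_1^1)$, $(\kappa_1^0,\kappa_0)$ and $(\kappa_1^1,\kappa_0)$ arise by shifting one unit between two coordinates of a count vector whose two relevant coordinates are distributed as the first two coordinates of $\mnom(n-1;p,p,1-2p)$, with the remaining coordinate recoverable by conditional sampling ($c\sim\Bin(n-a-b,\,p/(1-2p))$) followed by a permutation of coordinates. Hence each pair is a post-processing of the single pair $P=(A'+1,B')$, $Q=(A',B'+1)$, and Lemma~\ref{binomials-aux} applied with parameter $2p$ bounds all three divergences by $\delta$ at $\eps=\log\left(1+\frac{4\sqrt{2\log(4/\delta)}}{\sqrt{pn}}+\frac{4}{pn}\right)$. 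With that in hand, advanced joint convexity yields the stated $\left(\log(1+q(e^{\eps}-1)),\,q\delta\right)$ guarantee, and the rest of your argument goes through.
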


The reduction to analysis of the divergence between a pair of multinomial distributions in the proof of Theorem~\ref{higheps0new} is achieved using the same argument as the one we used to prove Theorem~\ref{shuffletobinoms}. The analysis of the resulting divergence
relies on decomposing each of the resulting multinomial distributions into a mixture of two distributions. We then show that distributions we analyzed when proving Lemma~\ref{binomials} can be postprocessed into pairs of components of these mixtures. This allows us to show that all components of these mixtures are $(\eps',\delta')$-indistinguishable.  Joint convexity can then be used to obtain the bound. The details of the proof appear in Appendix~\ref{specificproof}.

The main strength of Theorem~\ref{higheps0new} is that it allows us to prove tighter bounds than Theorem~\ref{higheps0} for specific local randomizers. In the next section, we will show how Theorem~\ref{higheps0new} can be used to provide a tighter analysis of $k$ randomized response. However, we can also use it to rederive the bound from Theorem~\ref{higheps0}, although with a slightly worse constant factor. The decomposition in Lemma \ref{lem:kov} implies that for any $\eps_0$-DP local randomizer Equation~\eqref{TVeq} holds with $q=\frac{e^{\eps_0}-1}{e^{\eps_0}+1}$. That is, given a sequence of $\Aldp[i]$ of $\eps_0$-DP local randomizers, and neighbouring datasets $X_0$ and $X_1$ such that $x_0^0\neq x_1^1$, there exist distributions $\Q^{(i)}(z_{1:i-1},x_1^0)$ and $\Q^{(i)}(z_{1:i-1},x_1^1)$ that satisfy
equations \eqref{eq:decomp0} and \eqref{eq:decomp1}. Thus, by setting
\[\qzi = \frac{1}{2}\left(\Q^{(i)}(z_{1:i-1},x_1^0) + \Q^{(i)}(z_{1:i-1},x_1^1) \right) \mbox{ and } \qo{x_1^b} = \Q^{(i)}(z_{1:i-1},x_1^b) \]
we get that for $b\in\zo$ and $q=\frac{e^{\eps_0}-1}{e^{\eps_0}+1}$, \[\Aldp[i](z_{1:i-1}, x_1^b) = (1-q)\qzi+q \qo{x_1^b}.\]
Further, Equation~\eqref{mixture} implies that for any $x\in\mathcal{D}$ and $b\in \zo$,
\alequn{
\Aldp[i](z_{1:i-1},x) &= \frac{1}{2e^{\eps_0}}\Q^{(i)}(z_{1:i-1},x_1^0)+\frac{1}{2e^{\eps_0}}\Q^{(i)}(z_{1:i-1},x_1^1)+\left(1-\frac{1}{e^{\eps_0}}\right)\lo^{(i)}(z_{1:i-1},x)\\
&= \frac{1}{3e^{\eps_0}} \qzi + \frac{1}{3e^{\eps_0}}\qo{x_1^b} + \frac{1}{3e^{\eps_0}}\qo{x_1^{1-b}} + \left(1-\frac{1}{e^{\eps_0}}\right)\lo^{(i)}(z_{1:i-1},x).}
Letting $q=\frac{e^{\eps_0}-1}{e^{\eps_0}+1}$ and $p=e^{-\eps_0}/3$ and assuming $\eps_0\le\ln\left(\frac{n}{24\ln(2/\delta)}\right)$ (which implies $p\ge \frac{8\ln(2/\delta)}{n}$),  Equation~\eqref{newproofeps} gives the bound
\[\eps \leq \ln\left(1+\frac{e^{\eps_0}-1}{e^{\eps_0}+1}\left(\frac{4\sqrt{6e^{\eps_0}\log(4/\delta)}}{\sqrt{n}}+\frac{12e^{\eps_0}}{n}\right)\right),\]
which matches the bound stated in Equation~\eqref{epsbound} of Theorem~\ref{higheps0} up to a factor of $\sqrt{3/2}$.

\subsection{$k$-Randomized Response}

For any $k\in\mathbb{N}$ and $\eps_0 >0$, the $k$-randomized response $\kRR\colon [k]\to[k]$ is defined as \[\kRR(x) = \begin{cases} x & \text{with probability } \frac{e^{\eps_0}-1}{e^{\eps_0}+k-1}\\  y\sim \unif{[k]} & \text{with probability } \frac{k}{e^{\eps_0}+k-1}\end{cases},\] where $\unif{[k]}$ is the uniform distribution over $[k]$. That is, with probability $\frac{e^{\eps_0}-1}{e^{\eps_0}+k-1}$ the true data point is reported, and otherwise a random value is reported.

\begin{restatable}{corollary}{rkRRthm}\label{kRRthm} For a domain $\mathcal{D}$, let $\Aldp[i]:\out[1]\times\cdots\times\out[i-1]\times\mathcal{D}\to\out[i]$ for $i\in[n]$ (where $\out[i]$ is the range space of $\Aldp[i]$) be a sequence of algorithms such that $\Aldp[i](z_{1:i-1}, \cdot)$ is a $\eps_0$-DP local randomizer for all values of auxiliary inputs $z_{1,i-1}\in\out[1]\times\cdots\times\out[i-1]$. Let $\shuffler:\mathcal{D}^n\to\out[1]\times\cdots\times \out[n]$ be the algorithm that given a dataset $x_{1:n}\in\mathcal{D}^n$, samples a uniformly random permutation $\pi$, then sequentially computes $z_i=\Aldp[i](z_{1:i-1}, x_{\pi(i)})$ for $i\in[n]$ and outputs $z_{1:n}$. Assume that for some $k\in\mathbb{N}$ we have that for all $i\in[n]$, there exists a function $f^{(i)}:\out[1]\times\cdots\times\out[i-1]\times\mathcal{D}\to[k]$ such that $\Aldp[i](z_{1:i-1}, x)=\kRR(f^{(i)}(z_{1:i-1},x))$. Then for $\delta\in[0,1]$ such that such that $\eps_0\le\log(\frac{n}{16\log(2/\delta)})$,
$\shuffler$ is $(\eps,\delta)$-DP where
\begin{equation}\label{krrepsbound}
\eps\le \log\left(1+(e^{\eps_0}-1)\left(\frac{4\sqrt{2(k+1)\log(4/\delta)}}{\sqrt{(e^{\eps_0}+k-1)kn}}+\frac{4(k+1)}{kn}\right)\right)
\end{equation}
\end{restatable}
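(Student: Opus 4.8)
The plan is to obtain Corollary~\ref{kRRthm} as a direct instantiation of Theorem~\ref{higheps0new}: almost all of the work is to exhibit, for the randomizer $\kRR\circ f^{(i)}$, a decomposition of the form \eqref{TVeq}--\eqref{hidingpair} in which the parameters $p$ and $q$ are taken as large as the structure of $k$-ary randomized response permits, and then to substitute those values into the bound \eqref{newproofeps}. The facts about $\kRR$ that I would use are: for every $j\in[k]$ the distribution $\kRR(j)$ puts mass $\frac{e^{\eps_0}}{e^{\eps_0}+k-1}$ on $j$ and mass $\frac{1}{e^{\eps_0}+k-1}$ on each of the other $k-1$ symbols; in particular every coordinate of $\kRR(j)$ is at least $\frac{1}{e^{\eps_0}+k-1}$, and the coordinate-wise minimum of any two such distributions $\kRR(a),\kRR(b)$ with $a\ne b$ equals $\frac{k}{e^{\eps_0}+k-1}\,\unif{[k]}$.

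Fix $i$ and $z_{1:i-1}$ and write $a=f^{(i)}(z_{1:i-1},x_1^0)$ and $b=f^{(i)}(z_{1:i-1},x_1^1)$; I would assume $a\ne b$ for every choice of $(i,z_{1:i-1})$ --- if $f^{(i)}$ ever collapses $x_1^0$ and $x_1^1$ the corresponding step only increases privacy, and this can be dealt with by a routine perturbation argument. Writing $\delta_j$ for the point mass at $j\in[k]$, set $\qzi=\unif{[k]}$ and $\qo{x_1^0}=\delta_a$, $\qo{x_1^1}=\delta_b$. A one-line check (comparing the masses on and off the symbol $a$) shows that \eqref{TVeq} holds with $q=\frac{e^{\eps_0}-1}{e^{\eps_0}+k-1}$. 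For \eqref{hidingpair}, every remaining local randomizer equals $\kRR(c)$ for some $c\in[k]$, and $\kRR(c)-p\delta_a-p\delta_b-p\,\unif{[k]}$ is nonnegative in every coordinate --- hence a sub-probability vector that, after rescaling by $1/(1-3p)$, supplies the leftover $\lo^{(i)}$ --- precisely when $p+\frac pk\le\frac{1}{e^{\eps_0}+k-1}$, the binding coordinates being $a$ and $b$ (where $\kRR(c)$ can be as small as $\frac{1}{e^{\eps_0}+k-1}$), while every other coordinate imposes only the weaker $\frac pk\le\frac{1}{e^{\eps_0}+k-1}$. So the largest admissible value is $p=\frac{k}{(k+1)(e^{\eps_0}+k-1)}$. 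Since $e^{\eps_0}+k-1\ge k$ we get $p\le\frac1{k+1}\le\frac13$ for $k\ge2$ (the case $k=1$ is trivial), so $p\in(0,1/3]$, and the hypothesis $\eps_0\le\log\!\big(\frac{n}{16\log(2/\delta)}\big)$ supplies the remaining requirement $p\ge\frac{8\ln(2/\delta)}{n}$ of Theorem~\ref{higheps0new}.

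Theorem~\ref{higheps0new} then applies with these $p,q$: $\shuffler(X_0)$ and $\shuffler(X_1)$ are $(\eps,q\delta)$-indistinguishable for $\eps$ bounded as in \eqref{newproofeps}. Substituting $p=\frac{k}{(k+1)(e^{\eps_0}+k-1)}$ and $q=\frac{e^{\eps_0}-1}{e^{\eps_0}+k-1}$ --- so that $\frac{1}{pn}=\frac{(k+1)(e^{\eps_0}+k-1)}{kn}$, and the factors of $e^{\eps_0}+k-1$ arising from $1/pn$ partially absorb the denominator of $q$ --- turns \eqref{newproofeps} into exactly the bound \eqref{krrepsbound}. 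Finally $q\le1$ gives $q\delta\le\delta$, and since the neighboring pair $X_0,X_1$ was arbitrary (after the usual reduction to the case that $X_0,X_1$ differ only in the first record and no other record equals $x_1^0$ or $x_1^1$, as at the start of the proof of Theorem~\ref{shuffletobinoms}), $\shuffler$ is $(\eps,\delta)$-DP.

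I expect the only genuinely non-routine step to be finding the decomposition in the second paragraph: recognizing that the common component of $\kRR(a)$ and $\kRR(b)$ is exactly a $\frac{k}{e^{\eps_0}+k-1}$ fraction of $\unif{[k]}$ (which pins down $q$ and makes the $e^{\eps_0}+k-1$ factors cancel cleanly in the end), and that the constraint on $p$ in \eqref{hidingpair} is active only at the two ``special'' coordinates $a$ and $b$ (which pins down $p=\frac{k}{(k+1)(e^{\eps_0}+k-1)}$, the source of the $\frac{k+1}{k}$ improvement over the generic bound). Everything after that is algebraic bookkeeping inside \eqref{newproofeps}, the trivial verifications $p\le1/3$ and $p\ge8\ln(2/\delta)/n$, and the harmless edge case where $f^{(i)}$ identifies $x_1^0$ with $x_1^1$.
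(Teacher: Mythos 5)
Your proposal is correct and follows essentially the same route as the paper's proof in Appendix~\ref{appendixkrr}: the same choice $\qzi=\unif{[k]}$, $\qo{x_1^b}=\indicator_{f^{(i)}(z_{1:i-1},x_1^b)}$, the same parameters $q=\frac{e^{\eps_0}-1}{e^{\eps_0}+k-1}$ and $p=\frac{k}{(k+1)(e^{\eps_0}+k-1)}$ (the paper obtains $p$ by rewriting $\unif{[k]}=\frac{1}{k+1}\unif{[k]}+\frac{1}{k+1}\sum_j\indicator_j$ rather than by your coordinate-wise nonnegativity argument, but these are the same computation), followed by the identical substitution into Theorem~\ref{higheps0new}. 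Your explicit flagging of the degenerate case $f^{(i)}(z_{1:i-1},x_1^0)=f^{(i)}(z_{1:i-1},x_1^1)$ is a point the paper passes over silently, and is worth the remark, but does not change the argument.
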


Notice that when $k$ is small, this bound matches that given in Theorem~\ref{higheps0}, but when $k$ and $\eps_0$ are large, $\eps$ scales like $\frac{e^{\eps_0}}{\sqrt{kn}}$. The proof of Corollary~\ref{kRRthm} can be found in Appendix~\ref{appendixkrr}. The key observation is that if we let $\nu$ be the uniform distribution on $[k]$, then for any $x\in[k]$, \[\kRR(x) = \frac{k}{e^{\eps_0}+k-1}\nu+\frac{e^{\eps_0}-1}{e^{\eps_0}+k-1}\indicator_x,\] where $\indicator_x$ is the distribution that always outputs $x$.

\section{Applications}
\subsection{Frequency and Distribution Estimation}\label{optimal}
In this section we show how our amplification result immediately implies an algorithm for frequency and distribution estimation in the shuffle model with nearly optimal utility/privacy trade-off in both the local and the central models and low communication. For an integer $k$, a discrete distribution over $[k]$ is described by a vector $p$ such that $\sum_{i \in [k]} p_i =1$ and $p_i \geq 0$ for all $i\in [k]$. We also write $x \sim p$ to say that $x$ is drawn from the distribution defined by $p$.
We say that an algorithm $\A$ achieves $\ell_2$ error of $\alpha$ for distribution estimation if for all distributions over $p$, the algorithm given $n$ i.i.d.~ samples from $p$ outputs  a vector $\hat p$ such that $$\E[\|p - \hat p\|_2^2] \leq \alpha^2 ,$$ where the expectation is over the randomness of samples and the algorithm.

The problem of distribution estimation is closely related to the problem of frequency (or histogram) estimation. In this problem, given a dataset $X\in [k]^n$ the goal is to estimate the vector of empirical frequencies $p(X)$ of elements held by the clients (namely, for all $i \in [k]$, $p(X)_i = \frac{1}{n} |\{ j \in [k] \ | \ x_j = i\}|$).  It is one of the most well-studied problems in private data analysis and the optimal utility/privacy trade-offs for this problem are well-understood \cite{Dwork:2006} (although the bounds are typically stated only for the $\ell_\infty$ error in the estimation).

For simplicity we will again focus on the $\ell_2$ error or $\sqrt{\E[\|p(X) - \hat p\|_2^2]}$ (with expectation being only with respect to the randomness of the algorithm that outputs the estimate $\hat p$). It is easy to see that
$$\E_{X\sim p^n}\left[\|p - p(X)\|_2^2\right] \leq \frac{1}{n} ,$$ and therefore an algorithm for frequency estimation implies an algorithm for distribution estimation with error larger by at most $1/\sqrt{n}$.

A number of algorithms for frequency and distribution estimation in the local model have been developed \citep{hsu2012distributed,erlingsson2014rappor,bassily2015local,kairouz2016discrete,wang2016mutual,WangBLJ:17,ye2018optimal,Acharya:2019,acharya2019communication,bun2019heavy,bassily2020practical,chen2020breaking,FeldmanTalwar21}. Recent work focuses on achieving (asymptotically) optimal accuracy in the $\eps_0 > 1$ regime and low communication. 
In particular, \citet{Acharya:2019} give an efficient, low communication ($\log k + 2$ bits per user) $\eps_0$-DP local algorithm that is asymptotically optimal for all $\eps_0$ (their result is stated for the distribution estimation problem but also applies to the frequency estimation problem). A related algorithm with similar theoretical guarantees but somewhat better empirical performance is given in \citep{chen2020breaking}.

\begin{theorem}[\citep{Acharya:2019,chen2020breaking}]\label{thm:hr}
For every positive integer $k$ and $\eps_0>0$, there exists an $\eps_0$-LDP protocol for frequency estimation that outputs a vector of frequencies $\hat p$ such that for every dataset $X \in [k]^n$,
 \[\E \left[\|p(X) - \hat p\|_2^2\right]= O \left( \frac{1}{n} + \frac{ k e^{\eps_0}}{n (e^{\eps_0} -1)^2} \right) .\]
The server-side decoding takes time $\tilde O(n+k)$ and each local randomizer sends a single message of length $\log k +2$ and takes time $O(\log k + \eps_0)$.
\end{theorem}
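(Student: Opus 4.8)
\emph{Proof plan.} The bound above is quoted from \citep{Acharya:2019,chen2020breaking}; the plan to establish it is to instantiate their Hadamard Response protocol and to check privacy, accuracy, and the computational claims in turn. First I would let $K$ be the least power of two that is at least $2k$, embed $[k]$ into $\{0,1,\dots,K-1\}$, take $H\in\{\pm1\}^{K\times K}$ to be the Hadamard matrix (so $H_{j,a}=(-1)^{\langle j,a\rangle}$ with $\langle j,a\rangle$ the bitwise inner product of the binary expansions of $j$ and $a$), and associate to each symbol $a$ the ``good set'' $C_a=\{\,j\in[K]:H_{j,a}=1\,\}$, which has size $K/2$. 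The (fixed, non-adaptive) local randomizer $\lr$ on input $x$ outputs a uniformly random element of $C_x$ with probability $\tfrac{e^{\eps_0}}{e^{\eps_0}+1}$ and a uniformly random element of $[K]\setminus C_x$ otherwise. Since $|C_x|=|[K]\setminus C_x|=K/2$ for every $x$, the mass $\lr(x)$ puts on any fixed $j\in[K]$ is either $\tfrac{2e^{\eps_0}}{K(e^{\eps_0}+1)}$ or $\tfrac{2}{K(e^{\eps_0}+1)}$, so the ratio of these masses across two inputs is at most $e^{\eps_0}$ and $\lr$ is $\eps_0$-DP. The report is an element of $[K]$, costing $\lceil\log K\rceil\le\log k+2$ bits, and it is produced in $O(\log k+\eps_0)$ time: sampling a uniform element of $C_x$ reduces to drawing $O(\log k)$ free bits and fixing one parity bit, and the $\tfrac{e^{\eps_0}}{e^{\eps_0}+1}$-biased coin needs $O(\eps_0)$ random bits.

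For accuracy I would study the natural debiased estimator. Let $h\in\mathbb{R}^K$ be the histogram of the $n$ received reports. Row orthogonality of $H$ (equivalently $|C_a\cap C_b|=K/4$ for $a\ne b$) gives, for $j$ drawn from $\lr(b)$, that $\E[H_{j,a}]=\tfrac{e^{\eps_0}-1}{e^{\eps_0}+1}\,\indicator_{a=b}$; hence $(Hh)_a=\sum_{i\in[n]}H_{j_i,a}$ has expectation $n\,\tfrac{e^{\eps_0}-1}{e^{\eps_0}+1}\,p(X)_a$ and $\hat p_a:=\tfrac{e^{\eps_0}+1}{(e^{\eps_0}-1)\,n}\,(Hh)_a$ is an unbiased estimate of the frequency $p(X)_a$. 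It then remains to prove $\sum_{a\in[k]}\var(\hat p_a)=O\!\big(\tfrac{ke^{\eps_0}}{n(e^{\eps_0}-1)^2}\big)$; combining this with the elementary bound $\E_{X\sim p^n}\big[\|p-p(X)\|_2^2\big]\le 1/n$ and the triangle inequality gives the stated $\ell_2$ error. The decoding cost is then immediate: $Hh$ is exactly the Walsh--Hadamard transform of $h$ and is computed in $O(K\log K)=\tilde O(k)$ time by the Fast Walsh--Hadamard Transform, building $h$ from the reports is $O(n)$, and reading off $k$ coordinates is $O(k)$, for $\tilde O(n+k)$ total; the per-user message length and encoding time were already accounted for above.

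The one genuinely delicate step---and the main obstacle---is the variance bound, because a naive analysis of the plain Hadamard scheme only yields $\var(\hat p_a)=O\!\big(\tfrac{1}{n(e^{\eps_0}-1)^2}\big)$, losing a factor of $e^{\eps_0}$ when $e^{\eps_0}$ is much smaller than $k$. To close this gap one splits into regimes: when $e^{\eps_0}\ge k$ one instead runs $\kRR$ directly, whose debiased indicator estimator has per-coordinate variance $O\!\big(\tfrac{e^{\eps_0}+k}{n(e^{\eps_0}-1)^2}\big)=O\!\big(\tfrac{e^{\eps_0}}{n(e^{\eps_0}-1)^2}\big)$; when $e^{\eps_0}<k$ one uses the refined set/hashing structure of \citep{Acharya:2019,chen2020breaking}, in which the reporting alphabet is arranged so that each target coordinate is ``hit'' by only a $\Theta(e^{\eps_0}/k)$ fraction of possible outputs, which is exactly what pushes $\var(\hat p_a)$ down to $O\!\big(\tfrac{e^{\eps_0}}{n(e^{\eps_0}-1)^2}\big)$. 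Everything else is routine bookkeeping around these two ingredients.
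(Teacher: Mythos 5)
First, note that the paper does not prove Theorem~\ref{thm:hr}: the statement is imported verbatim from \cite{Acharya:2019} and \cite{chen2020breaking} and used as a black box in the proof of Theorem~\ref{shufflehist}, so there is no in-paper proof to compare your reconstruction against. Judged on its own terms, your plan gets the routine parts right: the privacy calculation for the two-level output distribution, the $\log k+2$ communication bound, the unbiasedness of the Walsh--Hadamard estimator, and the $\tilde O(n+k)$ decoding via the Fast Walsh--Hadamard Transform are all correct. But the one step that carries the actual content of the theorem, namely $\sum_{a}\var(\hat p_a)=O\big(ke^{\eps_0}/(n(e^{\eps_0}-1)^2)\big)$, is left as ``it remains to prove'' and then resolved only by pointing back to the refined block/hashing structure of the cited papers. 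That structure is precisely where the theorem lives, so what you have is a correct framing of the proof rather than a proof.

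Second, the one quantitative claim you do make about the plain scheme is wrong, in a direction that makes the writeup internally inconsistent. For the plain Hadamard response, a user holding $b\neq a$ contributes to $(Hh)_a$ a $\pm 1$ random variable with mean $0$ and variance $1$, so $\var(\hat p_a)=\Theta\big((e^{\eps_0}+1)^2/(n(e^{\eps_0}-1)^2)\big)$, not $O\big(1/(n(e^{\eps_0}-1)^2)\big)$ as you state. Your stated bound, summed over $k$ coordinates, would give $O\big(k/(n(e^{\eps_0}-1)^2)\big)$, which is already \emph{smaller} than the target $O\big(ke^{\eps_0}/(n(e^{\eps_0}-1)^2)\big)$; if that formula were right there would be no gap to close and no need for the regime split you then introduce. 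The true deficit of the plain scheme is the factor $(e^{\eps_0}+1)^2/e^{\eps_0}=\Theta(e^{\eps_0})$ between its variance and the target, and closing it in the regime $1\ll e^{\eps_0}\ll k$ --- exactly the regime Theorem~\ref{shufflehist} relies on, since there $\eps_0$ is taken as large as $\ln k$ --- genuinely requires the block construction of \cite{Acharya:2019,chen2020breaking} (or, as you note, falling back to $\kRR$ when $e^{\eps_0}\ge k$). That variance argument needs to be written out for the proof to stand on its own.
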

The bounds achieved by this algorithm are asymptotically optimal for LDP \citep{ye2018optimal} when $\eps_0 \leq \log k$. (A different setting of parameters can be used to achieve optimality in the regime when $\eps_0 > \log k$ but we omit this regime as it does not appear to be practically relevant.) The best accuracy for low-communication protocols, specifically
 \[\E \left[\|p(X) - \hat p\|_2^2\right]= \frac{1}{n} + \frac{4 k e^{\eps_0}}{n (e^{\eps_0} -1)^2}\]
 is achieved by an algorithm in a recent work of \citet{FeldmanTalwar21} albeit at the expense of slower server-side decoding time.

Using the fact that our amplification bound has optimal dependence on $\eps_0$, we are able to leverage any of the asymptotically optimal results for local DP to immediately obtain a low communication, single round algorithm in the shuffle model whose error matches the optimal error for distribution estimation in the central model up to a $O(\sqrt{\log(1/\delta)})$ factor. For frequency estimation the error in the central model is optimal up to the $1/\sqrt{n}$ additive term (that is comparable to the statistical error) which is known to be necessary for algorithms in the single message shuffle model~\citep{Ghazi:2019}.

\begin{theorem}\label{shufflehist}
	For every positive integer $k$ and $\eps,\delta  \in (0,1)$, there exists an $(\eps,\delta)$-DP protocol
 for frequency estimation that outputs a vector of frequencies $\hat p$ such that for every dataset $X \in [k]^n$,
 \[\E \left[\|p - \hat p\|_2^2\right]= O\left(\frac{k\log(1/\delta)}{(\eps n)^2} + \frac{1}{n}\right) .\]
	The server-side decoding takes time $\tilde O(n+k)$. Each local randomizer sends a single message of length $\log k +2$ and satisfies $\eps_0$-LDP for
	$$\eps_0= \begin{cases}  \frac{\eps \sqrt{n}}{16\sqrt{\log(1/\delta)}}  & \text{when }  \eps \leq \sqrt{\log(1/\delta)/n};\\  \log\left(\frac{\eps^2 n}{100\log(1/\delta)}\right) & \text{when } \eps \in (\sqrt{\log(1/\delta)/n},1).\end{cases}  $$
\end{theorem}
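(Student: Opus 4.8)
The plan is to instantiate the local-model protocol of Theorem~\ref{thm:hr} with local privacy parameter $\eps_0$ chosen as in the statement, run it in the shuffle model (each client applies the fixed $\eps_0$-LDP randomizer $\lr$ of Theorem~\ref{thm:hr} and sends its $(\log k+2)$-bit message to the shuffler), and have the server apply the same decoding procedure to the shuffled reports. Privacy will follow from privacy amplification by shuffling, utility from substituting the chosen $\eps_0$ into the error bound of Theorem~\ref{thm:hr}, and the communication/running-time claims are inherited verbatim from Theorem~\ref{thm:hr} since the shuffler and the decoder are untouched, and the protocol is single-round.

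For privacy, I would first check that the chosen $\eps_0$ satisfies the precondition $\eps_0\le\log(n/(16\log(2/\delta)))$ of Theorem~\ref{higheps0}: in the regime $\eps\le\sqrt{\log(1/\delta)/n}$ we have $\eps_0\le 1/16$, and in the regime $\eps\in(\sqrt{\log(1/\delta)/n},1)$ we have $e^{\eps_0}=\eps^2 n/(100\log(1/\delta))<n/(16\log(2/\delta))$, so the hypothesis holds (for $n$ at least an absolute constant times $\log(1/\delta)$). Since $\lr$ is fixed and non-adaptive, shuffling the randomized reports is distributed identically to applying $\lr$ to a uniformly random permutation of the data, so Theorem~\ref{higheps0} applies and the shuffled output is $(\eps',\delta)$-DP with $\eps'$ bounded by the right-hand side of Equation~\eqref{epsbound}. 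It then remains to verify $\eps'\le\eps$, which splits into the two cases corresponding exactly to the two branches defining $\eps_0$: when $\eps_0\le 1$ one uses $1-e^{-\eps_0}=\Theta(\eps_0)$ and $e^{\eps_0}=\Theta(1)$, so the bound is $\Theta(\eps_0\sqrt{\log(1/\delta)/n})$, which by $\eps_0=\eps\sqrt{n}/(16\sqrt{\log(1/\delta)})$ is at most $\eps$; when $\eps_0\ge 1$ one uses $1-e^{-\eps_0}=\Theta(1)$ and $e^{\eps_0/2}=\Theta(\eps\sqrt{n/\log(1/\delta)})$ from the choice of $\eps_0$, so the leading term $8\sqrt{e^{\eps_0}\log(4/\delta)}/\sqrt{n}$ is $\Theta(\eps)$ while the $8e^{\eps_0}/n$ term is lower order. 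The constants $16$ and $100$ are exactly what make these estimates deliver $\eps'\le\eps$ and not merely $\eps'=O(\eps)$. Finally, the decoder is a post-processing of the shuffled output, so $\hat p$ is $(\eps,\delta)$-DP.

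For utility, I would substitute the chosen $\eps_0$ into $\E[\|p(X)-\hat p\|_2^2]=O\!\left(1/n+ke^{\eps_0}/(n(e^{\eps_0}-1)^2)\right)$ from Theorem~\ref{thm:hr} and simplify the factor $e^{\eps_0}/(e^{\eps_0}-1)^2$ in each regime. When $\eps_0\le 1$ this factor is $\Theta(1/\eps_0^2)=\Theta(\log(1/\delta)/(\eps^2 n))$, and when $\eps_0\ge 1$ it is $\Theta(e^{-\eps_0})=\Theta(\log(1/\delta)/(\eps^2 n))$; in both cases the second term becomes $O(k\log(1/\delta)/(\eps^2 n^2))=O(k\log(1/\delta)/(\eps n)^2)$, and adding the $O(1/n)$ term gives exactly the stated error.

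The step I expect to require the most care is the verification $\eps'\le\eps$ with the specific constants: one must track the $\log(4/\delta)$ versus $\log(1/\delta)$ discrepancy and the additive $8e^{\eps_0}/n$ term in Equation~\eqref{epsbound}, and handle the handoff between the two branches of the definition of $\eps_0$ (including checking that the branch used yields an in-range, nonnegative $\eps_0$). Everything else — the reduction from the shuffle model to shuffling-before-randomization, the invocation of Theorem~\ref{higheps0}, closure under post-processing, and the arithmetic for the utility bound — is routine.
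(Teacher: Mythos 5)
Your proposal is correct and follows essentially the same route as the paper's proof: instantiate the LDP frequency-estimation protocol of Theorem~\ref{thm:hr} with the stated $\eps_0$, invoke Theorem~\ref{higheps0} (after checking its hypothesis) for the central privacy guarantee, and substitute $\eps_0$ into the utility bound. The paper's own proof is terser and leaves the case analysis and constant-tracking implicit, whereas you spell them out, but the argument is the same.
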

\begin{proof} For a local randomizer $\AldpNI$, let $\shuffler(\AldpNI)$ be the algorithm that given a data set $x_{1:n}\in\cD^n$, samples a uniform random permutation $\pi$ over $[n]$, then outputs $(\AldpNI(x_{\pi(1)}), \cdots, \AldpNI(x_{\pi(n)}))$.

By Theorem~\ref{higheps0}, setting
	$$\eps_0= \begin{cases}  \frac{\eps \sqrt{n}}{16 \sqrt{\log(1/\delta)}}  & \text{when }  \eps \leq \sqrt{\log(1/\delta)/n};\\  \log\left(\frac{\eps^2 n}{100 \log(1/\delta)}\right)  & \text{when } \eps \in (\sqrt{\log(1/\delta)/n},1).\end{cases}  $$
ensure that for a local randomizer $\AldpNI$ that is $\varepsilon_0$-DP, the shuffled output $\shuffler(\AldpNI)$ is $(\varepsilon, \delta)$-DP.

According to Theorem~\ref{thm:hr}, there exists an $\eps_0$-DP local randomizer $\AldpNI$ and algorithm $f$ such that $f\circ\AldpNI$ has error 
 \[\E \left[\|p(X) - \hat p\|_2^2\right]= O \left( \frac{1}{n} + \frac{ k e^{\eps_0}}{n (e^{\eps_0} -1)^2} \right) .\]
Substituting the value of $\eps_0$ gives the claimed result.
\end{proof}

We note that for the frequency estimation problem in the shuffle model the first algorithm that achieves communication cost that is $o(k)$ while being asymptotically nearly optimal for the central model was given in~\cite{Ghazi:2019}. Their relatively involved multi-message protocol has communication cost of  $O(\log k \cdot \log n \cdot \log(1/(\eps\delta))/\eps^2)$. It also does not have local DP guarantees and is relatively inefficient computationally.

\subsection{Privacy Analysis of Private Stochastic Gradient Descent}
\label{sec:sgd}
In this section we show how our amplification result can be used to analyse private stochastic gradient descent (SGD), an algorithm for empirical risk minimization (ERM).
Suppose there exists a loss function $\ell$ and each user has a data point $x_i$. The goal is to minimize \[\mathcal{L}(\theta) = \sum_{i=1}^n \ell(\theta, x_i),\]
where $\theta \in \mathbb{R}^d$.
Gradient descent is a popular algorithm for solving ERM by taking iterative steps in the negative direction of the gradient. In \emph{stochastic} gradient descent, rather than compute each gradient of $\mathcal{L}$ at each step, a single data point is chosen, and the gradient $\nabla \ell(\cdot, x_i)$ is used to update the estimate. A differentially private version of stochastic gradient descent has been studied in \cite{song2013stochastic,Bassily:2014}, where the gradients are projected onto a bounded $\ell_2$ ball and zero-mean isotropic Gaussian noise is added at each iteration to preserve privacy.
 \citetall{Bassily:2014} analysed a version of private SGD where the data point at each round was chosen using sampling \emph{with} replacement. Our amplification result allows us to achieve almost the same utility using sampling \emph{without} replacement.
 Without replacement sampling is much more common and typically more efficient in practice. It also admits a simpler analysis of the generalization error (for example via an online-to-batch conversion~\cite{CesabianchiCG02}).
\begin{algorithm}\DontPrintSemicolon
  \textbf{Input:} $X$, privacy parameters $\eps_0, \delta_0$ and learning rate $\eta_{1:n}$\\
  Choose a random permutation $\pi$ of $[n]$\\
  Let $\sigma = \frac{1+\sqrt{2\log(1/\delta_0)}}{\eps_0}$\\
  Choose arbitrary initial point $\widetilde{\theta_0}$\\
  \For{$i=1:n$}{
    $b_i \sim \mathcal{N}({\bf 0}, \sigma^2 {\mathbb I}_d ) $\\
    $g_i = \nabla \ell(\widetilde{\theta}_{i-1}, x_{\pi(i)})$\\
    $\tilde{g}_i = g_i / \max(1, \|g_i\|_2)$ \tcp*{Clip to norm $1$}
    $\widetilde{\theta_i} = \widetilde{\theta}_{i-1}-\eta_i\cdot (g_i +b_i)$
  }
  {\bf return} $\widetilde{\theta_n}$
  \caption{Differentially Private Stochastic Gradient Descent} \label{PSGD}
\end{algorithm}
\begin{prop}\label{PSGDperm}
For any $\delta\in[0,1]$ such that $\eps_0\le\log(\frac{n}{16\log(2/\delta)})$, Algorithm~\ref{PSGD} is $(\eps, \delta+O(e^{\eps}\delta_0n))$-DP where
\begin{equation}\label{SGDbound}
    \eps=O\left((1-e^{-\eps_0})\left(\frac{\sqrt{e^{\eps_0}\log(1/\delta)}}{\sqrt{n}}+\frac{e^{\eps_0}}{n}\right)\right).\end{equation}
\end{prop}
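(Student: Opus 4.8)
The plan is to recognize Algorithm~\ref{PSGD} as an instance of the adaptive shuffled protocol analyzed in Theorem~\ref{approxDP}, apply that theorem, and then simplify the resulting parameters. First I would put the algorithm into the canonical form: set $z_i=\widetilde\theta_i$ and, for auxiliary input $z_{1:i-1}=(\widetilde\theta_1,\dots,\widetilde\theta_{i-1})$ and a data point $x$, define
\[
  \Aldp[i](z_{1:i-1},x)\;=\;\widetilde\theta_{i-1}-\eta_i\bigl(\mathrm{clip}(\nabla\ell(\widetilde\theta_{i-1},x))+b_i\bigr),\qquad b_i\sim\mathcal N(0,\sigma^2\id_d),
\]
where $\mathrm{clip}(g)=g/\max(1,\|g\|_2)$ and $\sigma=(1+\sqrt{2\log(1/\delta_0)})/\eps_0$. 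On a uniformly random permutation $\pi$, Algorithm~\ref{PSGD} computes precisely $z_i=\Aldp[i](z_{1:i-1},x_{\pi(i)})$, and its output $\widetilde\theta_n=z_n$ is a deterministic post-processing of the transcript $z_{1:n}$. Hence, by preservation of $(\eps,\delta)$-indistinguishability under post-processing, it suffices to bound the privacy of the algorithm $\shuffler$ that outputs the whole transcript $z_{1:n}$.

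Second, I would check that each $\Aldp[i](z_{1:i-1},\cdot)$ is an $(\eps_0,\delta_0)$-DP local randomizer. Fixing $z_{1:i-1}$ (equivalently, fixing $\widetilde\theta_{i-1}$), the map $x\mapsto\mathrm{clip}(\nabla\ell(\widetilde\theta_{i-1},x))$ takes values in the unit $\ell_2$-ball, so $x\mapsto\mathrm{clip}(\nabla\ell(\widetilde\theta_{i-1},x))+b_i$ is a Gaussian mechanism with $\ell_2$-sensitivity at most $1$ and noise scale $\sigma$; by the standard analysis of the Gaussian mechanism the choice $\sigma=(1+\sqrt{2\log(1/\delta_0)})/\eps_0$ makes it $(\eps_0,\delta_0)$-DP, and composing with the affine post-processing $v\mapsto\widetilde\theta_{i-1}-\eta_i v$ preserves this. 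Therefore $\shuffler$ is exactly the adaptive shuffled protocol of Theorem~\ref{approxDP} with per-client parameters $(\eps_0,\delta_0)$, and the hypothesis $\eps_0\le\log(n/(16\log(2/\delta)))$ is met.

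Third, I would invoke Theorem~\ref{approxDP} and simplify. It gives that $\shuffler$, and hence Algorithm~\ref{PSGD}, is $\bigl(\eps,\ \delta+(e^\eps+1)(1+e^{-\eps_0}/2)\,n\delta_0\bigr)$-DP with $\eps$ as in Equation~\eqref{epsbound}. Since $e^\eps\ge1$ we have $(e^\eps+1)(1+e^{-\eps_0}/2)\le 3e^\eps$, so the second parameter is at most $\delta+O(e^\eps\delta_0 n)$, as claimed. For $\eps$, using $\log(1+t)\le t$ together with $\frac{e^{\eps_0}-1}{e^{\eps_0}+1}\le\frac{e^{\eps_0}-1}{e^{\eps_0}}=1-e^{-\eps_0}$, Equation~\eqref{epsbound} yields
\[
  \eps\;\le\;(1-e^{-\eps_0})\left(\frac{8\sqrt{e^{\eps_0}\log(4/\delta)}}{\sqrt n}+\frac{8e^{\eps_0}}{n}\right),
\]
which is $O\!\left((1-e^{-\eps_0})\left(\sqrt{e^{\eps_0}\log(1/\delta)/n}+e^{\eps_0}/n\right)\right)$, i.e., Equation~\eqref{SGDbound}.

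This is essentially a direct corollary of Theorem~\ref{approxDP}, so there is no deep obstacle; the only points needing care are (i) matching the neighboring relation for the single data point handled by $\Aldp[i]$ to the $\ell_2$-sensitivity of the clipped gradient in the Gaussian-mechanism step (so that the stated $\sigma$ indeed yields $(\eps_0,\delta_0)$-DP per step), and (ii) noting that the algorithm releases only $\widetilde\theta_n$ rather than the full trajectory, which is handled for free by post-processing; the remaining $\eps$/$\delta$ simplifications are routine.
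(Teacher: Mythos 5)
Your proof takes essentially the same route as the paper's: view each SGD step as an adaptive $(\eps_0,\delta_0)$-DP local randomizer applied to a shuffled dataset, invoke Theorem~\ref{approxDP}, note that the final iterate is a post-processing of the transcript, and absorb constants — the paper's proof is exactly this and even omits the per-step DP verification that you supply. The one imprecision is in that added verification: for the \emph{replacement} notion of Definition~\ref{localrandomizer} the relevant $\ell_2$-sensitivity of the clipped gradient is the diameter $2$ of the unit ball rather than $1$ (sensitivity $1$ corresponds to the deletion notion with the zero gradient as reference), so the stated $\sigma$ strictly calibrates to roughly $(2\eps_0,\delta_0)$ replacement DP per step; this is a constant-factor issue in the exponent that the paper glosses over as well and does not affect the structure of the argument.
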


\begin{proof} Note that if we let \[\Aldp[i](z_{1:i-1}, x) = \widetilde{\theta_i} = \widetilde{\theta_{i-1}}(z_{1:i-1})-\eta_t\cdot (\nabla \ell(\widetilde{\theta_{i-1}}(z_{1:i-1}), x_{\pi(i)})+b_i),\] then the output of Algorithm~\ref{PSGD} can be obtained by post-processing of the shuffled output $\shuffler(X)$. Since each $\Aldp[i](z_{1:i-1}, \cdot)$ is a $(\eps_0,\delta_0)$-DP local randomizer and $\eps_0\le\log(\frac{n}{16\log(2/\delta)})$, Theorem~\ref{approxDP} implies that Algorithm~\ref{PSGD} is $(\eps, \delta+(e^{\eps}+1)(1+e^{-\eps_0}/2)n\delta_0)$-DP, where $\eps$ is as given in Equation~\eqref{SGDbound}.
\end{proof}

For a single pass over the data and $\eps_0 > 1$, the privacy analysis in Proposition~\ref{PSGDperm} is tighter than that given by \cite{Bassily:2014}, who analyse sampling with replacement.
For any $\delta>0$, their analysis, which relies on privacy amplification by subsampling and advanced composition of differentially private algorithms, gives a privacy guarantee of $(\eps, n\delta+\frac{\delta_0}{n})$ where \begin{equation}\label{bassilybound}\eps~=~O\left(\sqrt{\log(1/\delta)n}\log(1+\frac{e^{\eps_0}-1}{n})+(e^{\eps_0}-1)\log(1+\frac{e^{\eps_0}-1}{n})\right)\approx \frac{\sqrt{\log(1/\delta)}(e^{\eps_0}-1)}{\sqrt{n}}.\end{equation}
For large $\eps_0$, our result above (Proposition~\ref{PSGDperm}) improves on this by a factor of $\Theta(\sqrt{e^{\eps_0}})$. We note that for multiple passes over the data, our analysis suffers an extra $\sqrt{\log(1/\delta)}$ factor in the privacy loss compared to Equation~\eqref{bassilybound}. \new{Work subsequent to \cite{Bassily:2014} has improved the privacy analysis for multiple passes using concentrated differential privacy to shave an additional $\sqrt{\log(1/\delta)}$  \citep{DLDP, Wang:2021}.} Our analysis technique for Algorithm~\ref{PSGD} is also the basis of the analysis of the optimization algorithms in \cite{Balle:2020}.
We also note that Proposition \ref{PSGDperm} can be easily extended to the batched version of SGD by viewing each batch of size $b$ as a single data element in $\mathcal D^b$.

\section{Numerical Results}\label{experiment}

Theorem~\ref{shuffletobinoms} provides an efficient method for numerically computing an amplification bound that is tighter than our closed-form bound. Our implementation is outlined in Appendix~\ref{pseudocode}, but the main component is numerically computing the indistinguishability bound for multinomial random variables from Lemma~\ref{binomials}. In this section, we provide numerical evaluations of the privacy amplification bound in a variety of parameter regimes. In Figure~\ref{graphs}, \texttt{Clones, Theoretical} is the bound presented in Theorem~\ref{higheps0} and \texttt{Clones} is the numerical version derived from  Theorem~\ref{shuffletobinoms}. Also, \texttt{BBGN'19} is the privacy amplification bound for general algorithms given in \cite{Balle:2019}\footnote{\texttt{BBGN'19} curves were produced using open source code released by \citetall{Balle:2019}, they correspond to the curve \texttt{(Bennett, Generic)} in \cite{Balle:2019}.} and \texttt{BKMTT'20} is the bound proven in~\cite{Balle:2020}. Finally, \texttt{CSUZZ'19, 2RR} is the closed form amplification bound for binary randomized response proved in \cite{Cheu:2019} and \texttt{2RR, lower bound} is a lower bound on the shuffling privacy guarantee of binary randomized response that we compute directly. A description of our implementation of \texttt{2RR, lower bound} is found in Appendix~\ref{2RRimplementation}. We do not include the bounds from \cite{ErlingssonFMRTT19} in the comparison since \cite{Balle:2020} gives a tighter bound for the same analysis.

In all parameter regimes tested, \texttt{Clones} gives a tighter bound compared to \texttt{BBGN'19} as well as \texttt{BKMTT'20}. We can see in Figure~\ref{varyeps} that this effect is particularly pronounced when $\eps_0$ is large, which is the parameter regime where our closed-form bounds asymptotically improve over prior work. Being a lower bound for a particular algorithm, \texttt{2RR, lower bound} provides a lower bound on any general privacy amplification result. We can see in Figure~\ref{varyeps} that for large $n$, \texttt{Clones} closely tracks \texttt{2RR, lower bound}, particularly for small $\eps_0$. We note in all three graphs in Figure~\ref{graphs}, \texttt{Clones} should be monotone. The slight non-monotonicity results from some optimizations that speed up the computation at the cost of slightly looser upper bound. We defer details to Appendix~\ref{pseudocode}.

\begin{figure}
  \centering
  \begin{subfigure}{0.47\textwidth}
  \includegraphics[width=1\textwidth]{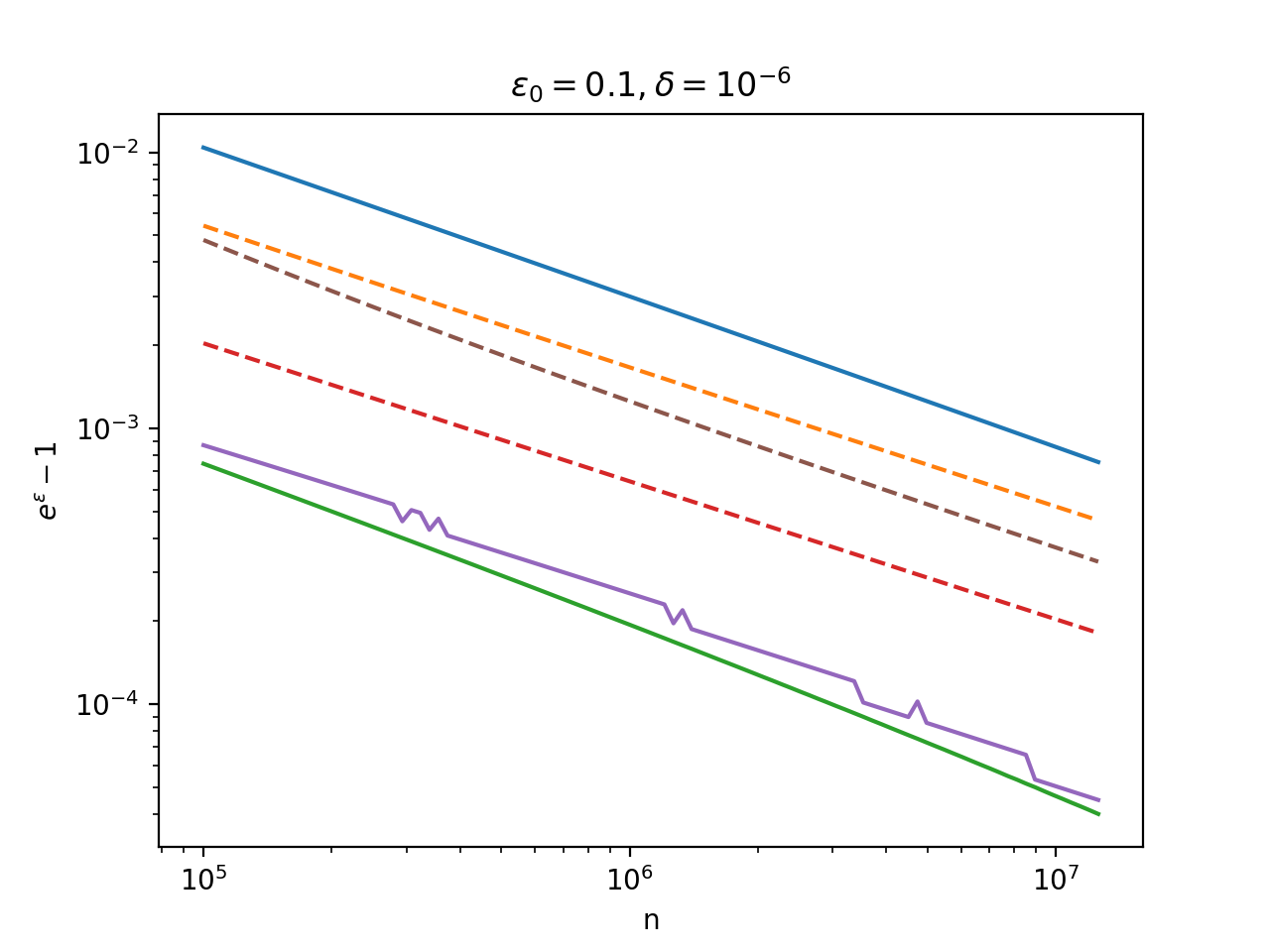}
  \caption{Comparison of central $\eps$ for fixed $\eps_0=0.1$ and $\delta=10^{-6}$ and $n$ ranging between $10^5$ and $10^7$.}\label{varynsmalleps}
  \end{subfigure}
  \hspace{0.1in}
  \begin{subfigure}{0.47\textwidth}
  \includegraphics[width=1\textwidth]{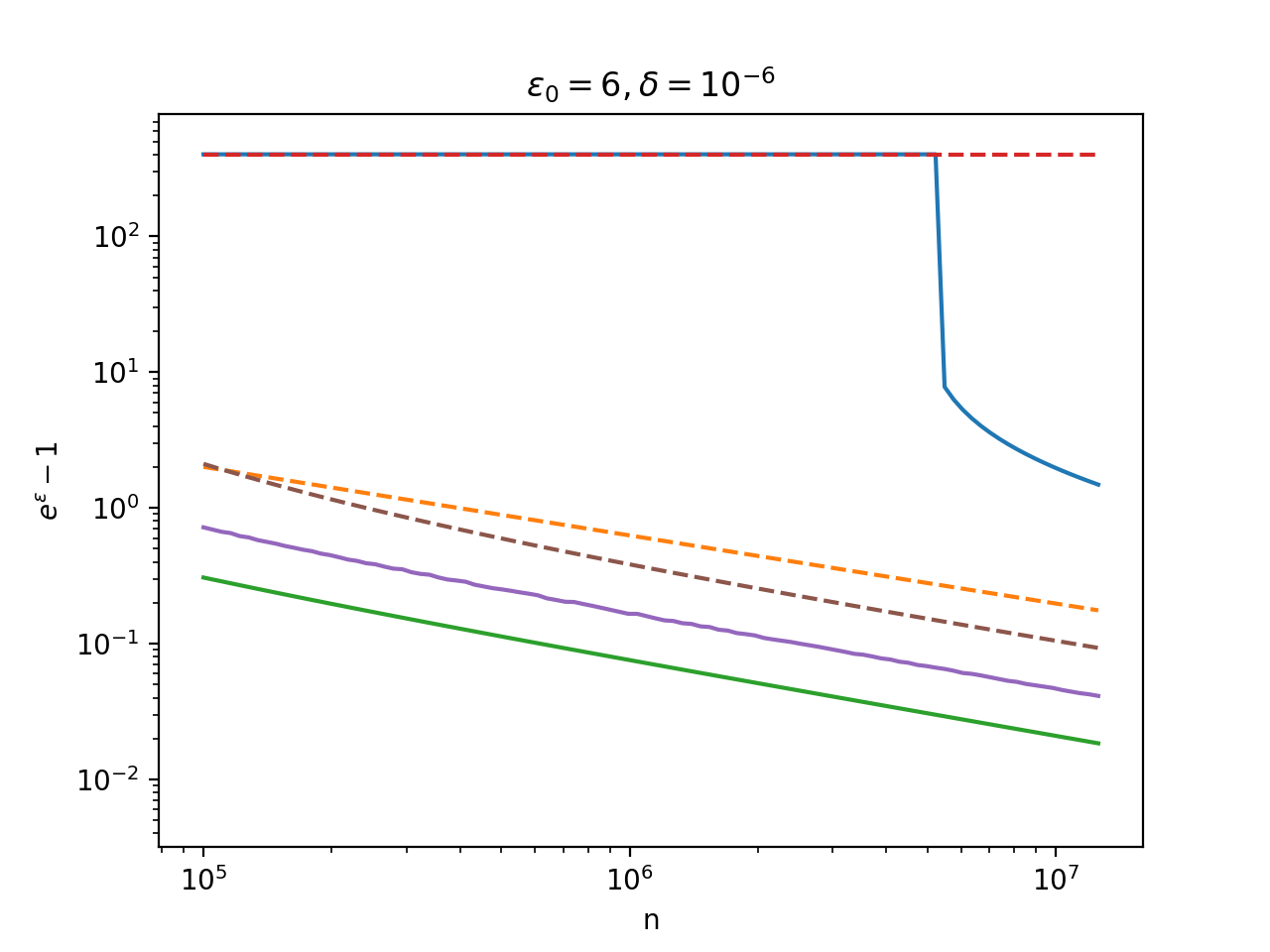}
  \caption{Comparison of central $\eps$ for fixed $\eps_0=6$ and $\delta=10^{-6}$ and $n$ ranging between $10^5$ and $10^7$.}\label{varynlargeeps}
  \end{subfigure}
  \begin{subfigure}{0.45\textwidth}
  \includegraphics[width=1\textwidth]{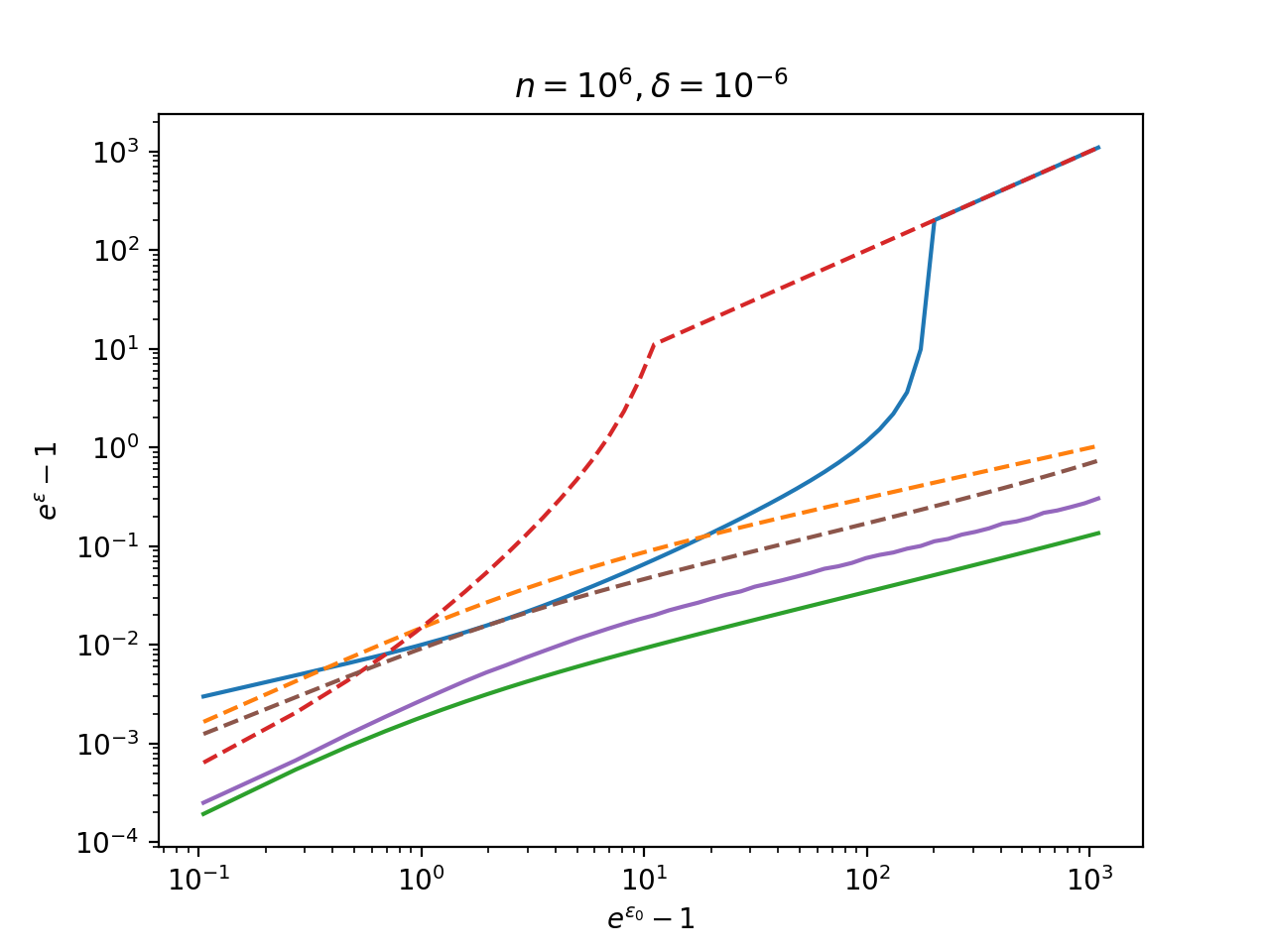}
  \caption{Comparison of central $\eps$ for fixed $n=10^6$ and $\delta=10^{-6}$ and $\eps_0$ ranging between $0.01$ and $8$.}\label{varyeps}
  \end{subfigure}
  \hspace{0.1in}
  \begin{subfigure}{0.45\textwidth}
  \begin{center}
  \hspace{0.3in}\includegraphics[width=0.7\textwidth]{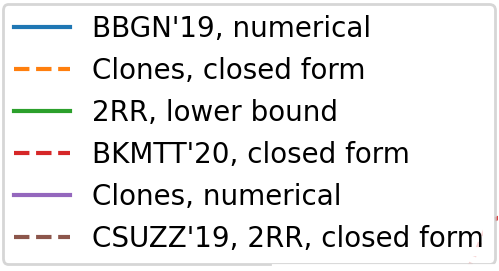}
  \vspace{0.4in}
  \end{center}
  \end{subfigure}
  \caption{Comparison of new privacy amplification by shuffling bounds given in this work to bounds given in \cite{Balle:2019} and \cite{Balle:2020} and to bounds specific to $\2RR$ given in \cite{Cheu:2019}. Dotted lines represent closed-form bounds, while solid lines represent bounds that were computed numerically.}\label{graphs}
\end{figure}

\subsection{Privacy Amplification in terms of R\'enyi DP}\label{renyiDPexperiments}

Indistinguishability of random variables is only one way of several possible ways to measure to similarity in the output distributions of neighboring datasets. The R\'enyi divergence, that captures the moment generating function of the privacy loss random variable, is an alternate measure resulting in R\'enyi differential privacy \cite{DworkR16,DLDP,mironov2017renyi,Bun:2016}.

\begin{definition}[R\'enyi divergence]
For two random variables $P$ and $Q$, the R\'enyi divergence of $P$ and $Q$ of order $\alpha>1$ is \[D^{\alpha}(P\|Q) = \frac{1}{\alpha-1}\log\mathbb{E}_{x\sim Q}\left(\frac{P(x)}{Q(x)}\right)^{\alpha}\]
\end{definition}

\begin{definition} For $\alpha>1$,
an algorithm $\mathcal{A}:\mathcal{D}^n\to\output$ is $(\eps(\alpha), \alpha)$-\emph{R\'enyi differentially private} (RDP) if for all neighboring databases $X$ and $X'$, $D^{\alpha}(\mathcal{A}(X)\|\mathcal{A}(X'))\le\eps(\alpha)$.
\end{definition}

Theorem~\ref{shuffletobinoms} implies that to bound the RDP guarantees of $\shuffler$ it suffices to compute the R\'enyi divergence between the two multinomial distributions $(A+\Delta,C-A+1-\Delta)$ and  $(A+1-\Delta,C-A+\Delta)$. Below we evaluate this approach numerically in Section~\ref{renyiDPexperiments}.

In Figure~\ref{graphsrenyi} we plot the R\'enyi privacy parameter for shuffling $n$ $\eps_0$-DP local randomizers as a function of the RDP order $\alpha$.
 \texttt{Clones} is computed by directly computing the R\'enyi divergence between the two multinomial distributions $(A+1-\Delta,C-A+\Delta)$ and $(A+\Delta,C-A+1-\Delta)$, providing an upper bound on the privacy amplification by shuffling for RDP. For comparison, we include a bound computed using Theorem~1 from \citep{Girgis:2021} which we label as \texttt{GDDSK'21}. (We remark that the results in this subsection were not included in the earlier version of our work.)  \texttt{2RR, lower bound} provides a lower bound on the shuffling privacy guarantee of binary randomized response, and hence a lower bound on any general amplification result for RDP. Figure~\ref{renyiDP} shows that privacy amplification is achieved for small/moderate moments, but the R\'enyi DP parameter $\eps(\alpha)$ approaches $\eps_0$ as $\alpha$ increases.

A key property of differentially private algorithms is that the composition of differentially private algorithms is differentially private. The advanced composition theorem quantifies the privacy guarantee after composing $T$ $(\eps,\delta)$-DP algorithms. When composing a large number of algorithms, \citep{DLDP} showed that tighter privacy guarantees can be obtained by computing the composition guarantees in terms of RDP and then converting back to approximate differential privacy. In Figure~\ref{graphsrenyi}, we plot the privacy guarantee for $T$ adaptively composed outputs of a shuffler with each shuffler operating on $n$, $\eps_0$-DP local randomizers as defined as in Theorem~\ref{higheps0}. We compare two methods for computing the resulting privacy guarantee. \texttt{Clones, via Approximate DP} computes the privacy guarantee of $\shuffler$ numerically in terms of approximate differential privacy, then used advanced composition \cite[Theorem 4.3]{Kairouz:2015} to compute the privacy guarantee of the composition. \texttt{Clones, via RDP} computes the privacy guarantee of $\shuffler$ numerically in terms of R\'enyi differential privacy, computes the privacy guarantee of the composition using the composition theorem for R\'enyi DP \cite{mironov2017renyi}, then converts to an approximate DP guarantee \cite[Proposition 12]{canonne2020discrete}. We also compare to the latter approach performed using the RDP privacy amplification result presented in \cite{Girgis:2021} (with parameters chosen as in \cite{Girgis:2021}). As can be seen from the results, our technique provides a significantly tighter upper bound.
\color{black}

\begin{figure}
  \centering
  \begin{subfigure}{0.47\textwidth}
  \includegraphics[width=1\textwidth]{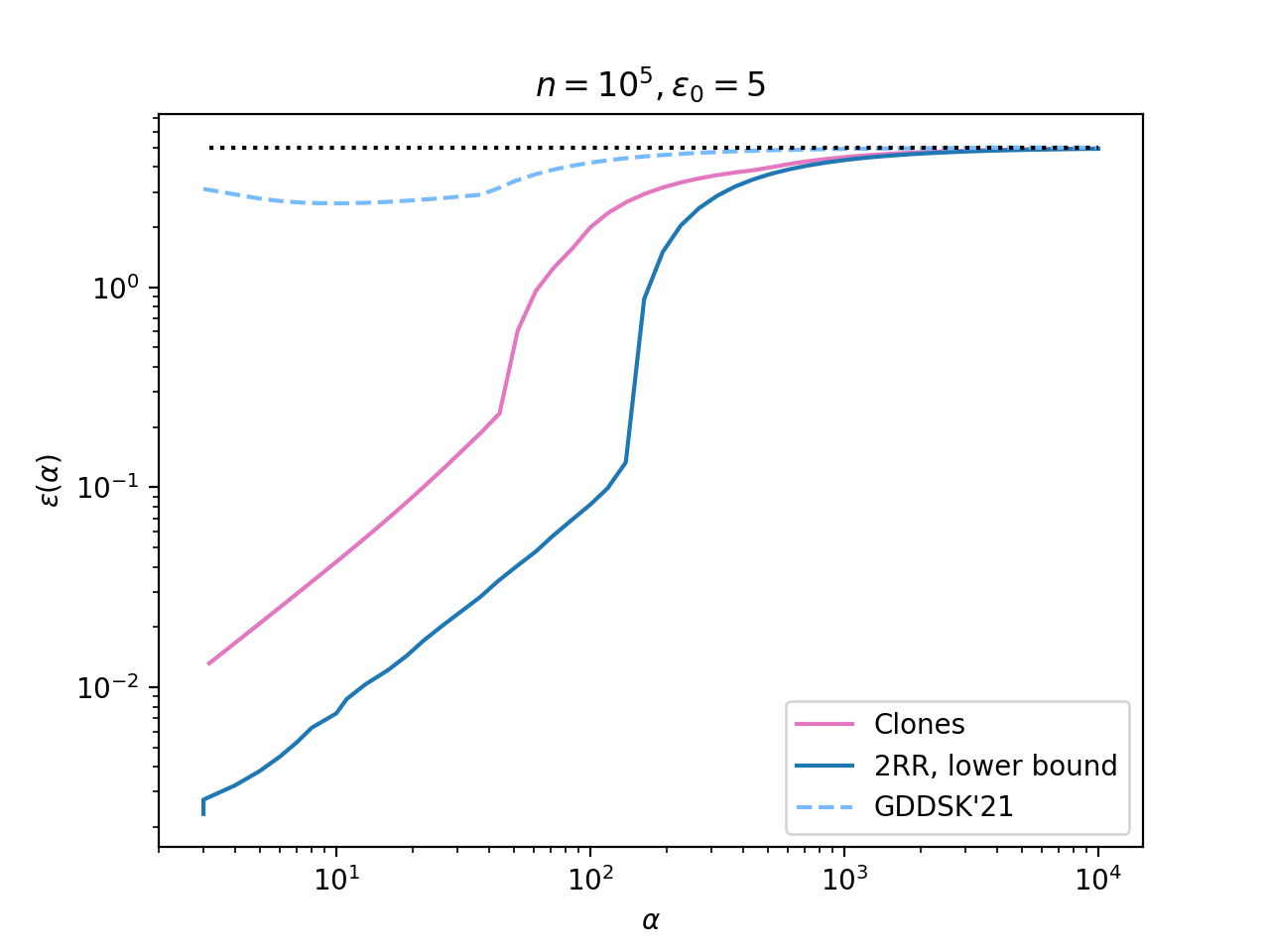}
  \caption{Comparison of the bounds on $\eps(\alpha)$ for RDP for fixed $\eps_0=5$ and $\alpha$ ranging between $3$ and $10^4$. The horizontal black, dotted line is at $\eps_0$.}\label{renyiDP}
  \end{subfigure}
  \hspace{0.1in}
  \begin{subfigure}{0.47\textwidth}
  \includegraphics[width=1\textwidth]{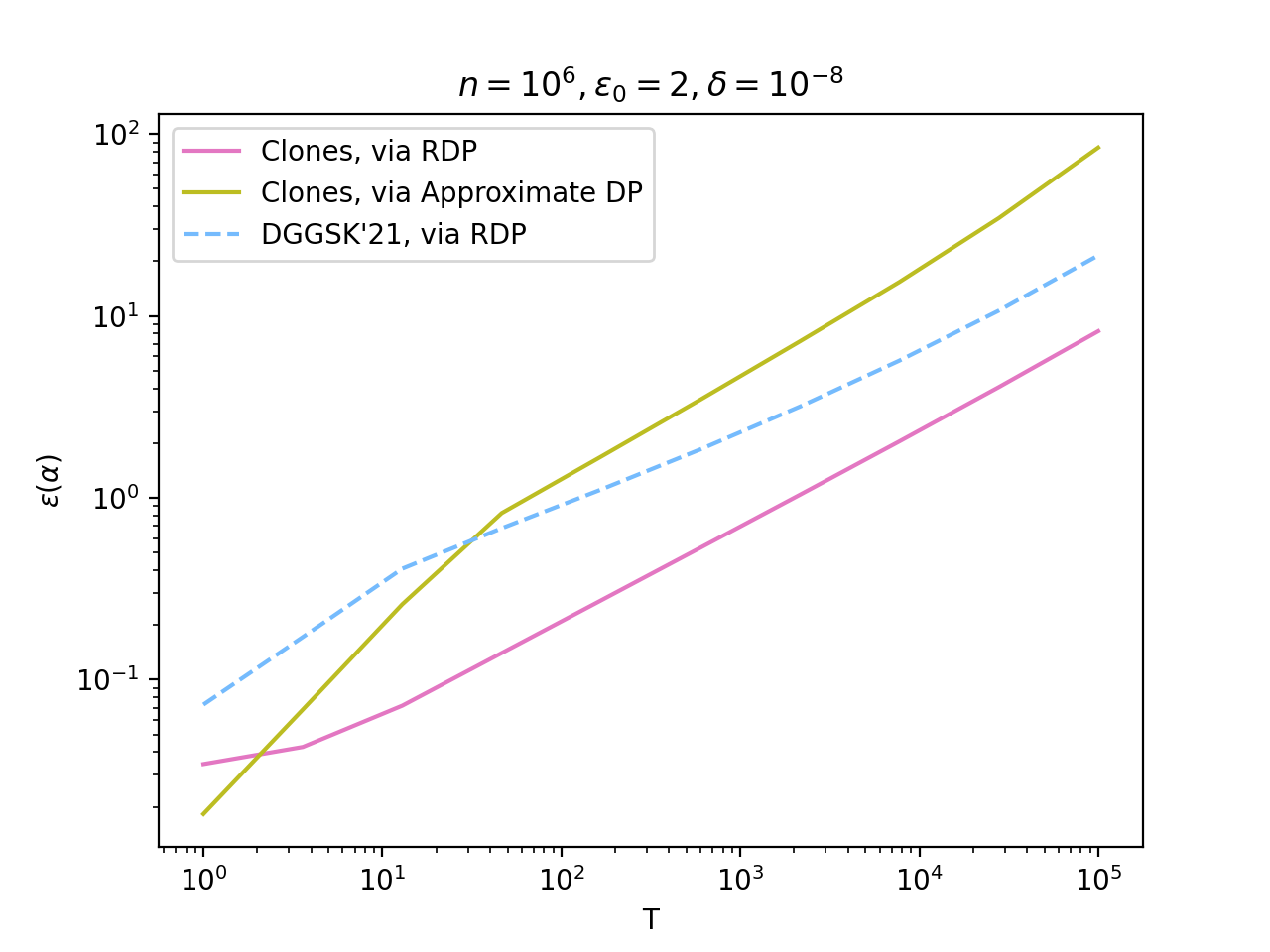}
  \caption{Approximate DP guarantees for composition of a sequence of $T$ results of amplification by shuffling.}\label{compositionrenyi-1}
  \end{subfigure}
    \begin{subfigure}{0.47\textwidth}
  \includegraphics[width=1\textwidth]{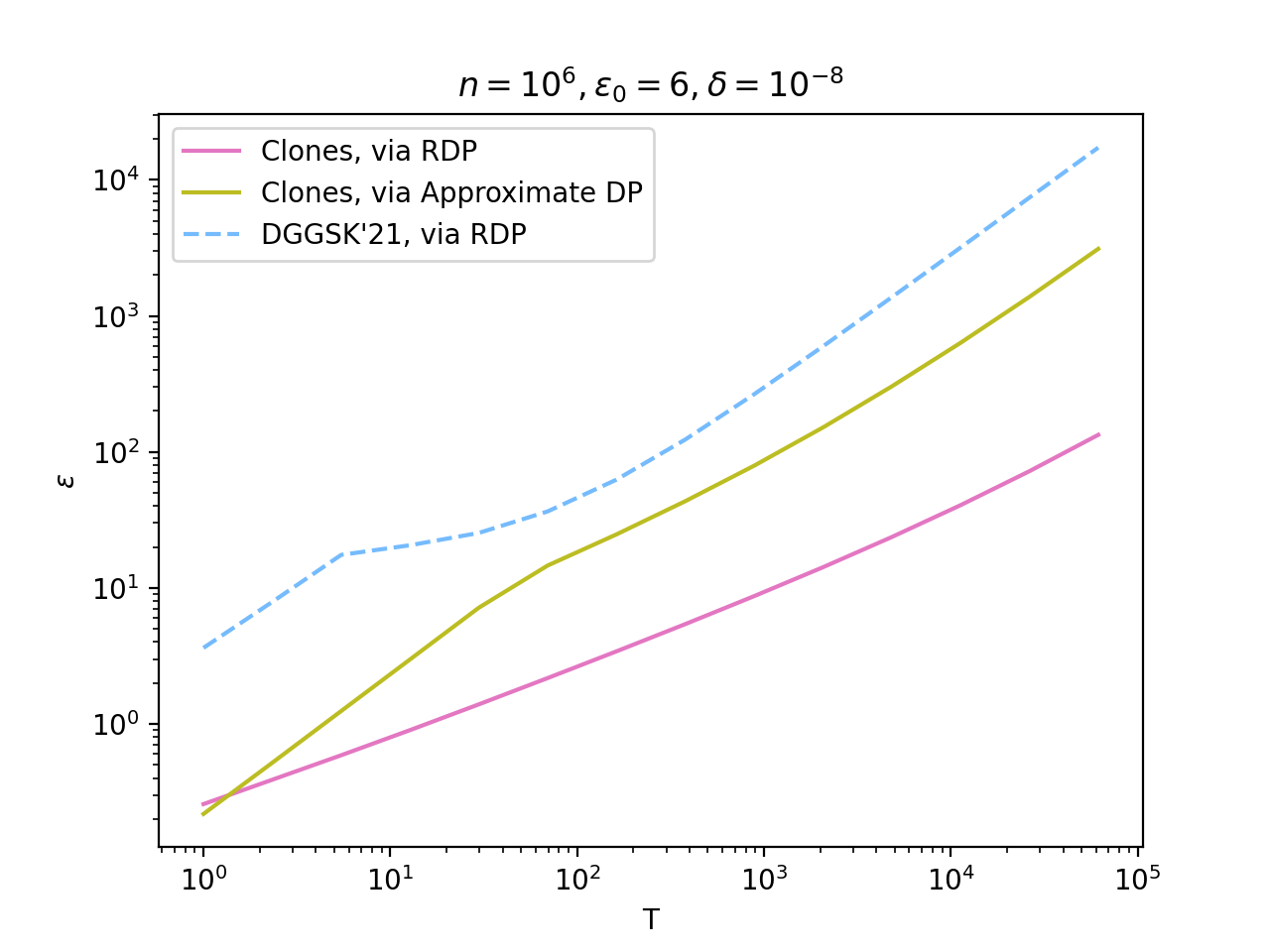}
  \caption{Approximate DP guarantees for composition of a sequence of $T$ results of amplification by shuffling.}\label{compositionrenyi-2}
  \end{subfigure}
      \begin{subfigure}{0.47\textwidth}
  \includegraphics[width=1\textwidth]{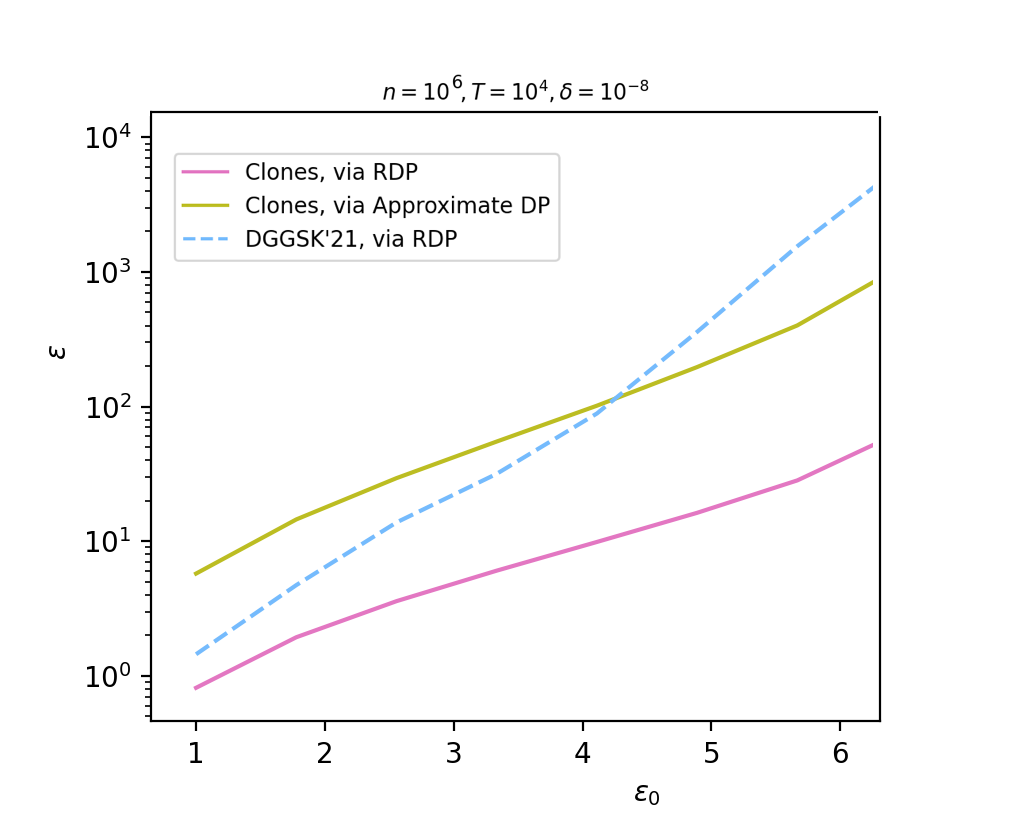}
  \caption{Approximate DP guarantees for composition of a sequence of $10,000$ results of amplification by shuffling as a function of $\eps_0$.}\label{compositionrenyi-3}
  \end{subfigure}
  \caption{RDP guarantees for privacy amplification by shuffling and implications for approximate DP bounds for composition.   }
  \label{graphsrenyi}
\end{figure} 

\printbibliography

\appendix

\newpage

\section{Proof of Lemma~\ref{binomials}}\label{binomialsproof}

\rbinomials*
To prove Lemma~\ref{binomials} we first upper bound the divergence between a pair of slightly simpler distributions.
\begin{lemma}\label{binomials-aux} Let $p\in[0,1]$ and $n\in\mathbb{N}$ be such that $p~\ge~ \frac{16\ln(2/\delta)}{n}$, and let \[\eps=\log\left(1+\frac{\sqrt{64\log(4/\delta)}}{\sqrt{pn}}+\frac{8}{pn}\right).\]
Consider the process where we sample $C\sim \bin(n-1, p)$ and $A\sim\bin(C, 1/2)$. Let $P=(A+1,C-A)$ and $Q=(A,C-A+1)$, then
\[\Pr_{(a,c)\sim P}\left(-\eps\le\ln\frac{\Pr(P=(a,c))}{\Pr(Q=(a,c))}\le \eps\right)\ge 1-\delta \text{.   and    } \Pr_{(a,c)\sim Q}\left(-\eps\le\ln\frac{\Pr(P=(a,c))}{\Pr(Q=(a,c))}\le \eps\right)\ge 1-\delta.\]
In particular, $P$ and $Q$ are $(\eps, \delta)$- indistinguishable.
\end{lemma}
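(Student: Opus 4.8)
The plan is to first pin down the privacy loss random variable exactly, and then reduce the statement to a routine Chernoff/Hoeffding concentration estimate. Writing out the joint law of $(A,C)$, for $0\le a\le c\le n-1$ we have $\Pr[A=a,C=c]=\binom{n-1}{c}p^{c}(1-p)^{n-1-c}\binom{c}{a}2^{-c}$. Re-parametrising by the realised pair $(u,v)$, for $u\ge 1,\ v\ge 1,\ u+v\le n$ the probabilities $\Pr[P=(u,v)]$ and $\Pr[Q=(u,v)]$ share every factor except the binomial coefficient, namely $\binom{u+v-1}{u-1}$ versus $\binom{u+v-1}{u}$. Since $\binom{m}{u-1}/\binom{m}{u}=u/(m-u+1)$ with $m=u+v-1$, the likelihood ratio at $(u,v)$ is exactly $u/v$. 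Hence the privacy loss $\ln\frac{\Pr[P=\cdot]}{\Pr[Q=\cdot]}$ equals $\ln\frac{A+1}{C-A}$ when evaluated at a draw from $P$ (where $(u,v)=(A+1,C-A)$) and equals $\ln\frac{A}{C-A+1}$ at a draw from $Q$. Under the substitution $A\mapsto C-A$ these two expressions become reciprocals of one another, and since $(A,C)\stackrel{d}{=}(C-A,C)$ (as $A\mid C\sim\bin(C,1/2)$), the law of $\ln\frac{A+1}{C-A}$ and the law of $\ln\frac{A}{C-A+1}$ are related by $x\mapsto -x$; in particular it suffices to prove the tail bound for a single one of them. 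It then remains to show that, over a draw of $(A,C)$, we have $\big|\ln\frac{A+1}{C-A}\big|\le\eps$ with probability at least $1-\delta$, since the ``in particular'' clause follows from the standard argument: for any event $E$, $\Pr[P\in E]\le\Pr[P\in E,\ \ln(P/Q)\le\eps]+\delta\le e^{\eps}\Pr[Q\in E]+\delta$, and symmetrically with $P,Q$ swapped.

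For the concentration step, first control $C\sim\bin(n-1,p)$ from below: a multiplicative Chernoff bound gives $C\ge c_0:=(n-1)p/2$ except with probability at most $e^{-(n-1)p/8}$, which is at most $\delta/2$ because the hypothesis $p\ge 16\ln(2/\delta)/n$ yields $(n-1)p\ge 8\ln(2/\delta)$ (for $n\ge 2$). Conditioned on $C=c$ with $c\ge c_0$, $A\sim\bin(c,1/2)$ concentrates around $c/2$: Hoeffding's inequality gives $\Pr\!\big[\,|A-c/2|\ge\sqrt{(c/2)\ln(4/\delta)}\ \big|\ C=c\big]\le\delta/2$, uniformly in $c$. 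A union bound then produces an event of probability at least $1-\delta$ on which simultaneously $C\ge c_0$ and $|A-C/2|\le\sqrt{(C/2)\ln(4/\delta)}$.

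On that event, bound the privacy loss. The cleanest route is to translate the bad event directly: $\ln\frac{A+1}{C-A}>\eps$ is equivalent to $A>\frac{e^{\eps}C-1}{e^{\eps}+1}$, and a short computation gives $\frac{e^{\eps}c-1}{e^{\eps}+1}-\frac c2=\frac{c(e^{\eps}-1)-2}{2(e^{\eps}+1)}$; with $\eps$ as in the statement and $c\ge c_0\asymp pn$, this offset exceeds the Hoeffding radius $\sqrt{(c/2)\ln(4/\delta)}$, so the bad event is already excluded on the good event. The term $\sqrt{64\log(4/\delta)/(pn)}$ inside $e^{\eps}-1$ is precisely what makes the $\sqrt{c\ln(4/\delta)}$-sized fluctuation of $A$ fit, and the additive $8/(pn)$ absorbs the $\pm1$ offsets coming from the ``$A+1$'' and ``$C-A$''. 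The other three inequalities ($\ln(\cdot)<-\eps$ for the $P$-expression, and both directions for the $Q$-expression) are handled identically — their thresholds deviate from $C/2$ by at least as much — or simply read off from the reciprocal relation noted in the first paragraph.

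The only real work is this last step: getting the absolute constants to land on the stated closed form requires being a little careful about how the failure probability $\delta$ is split between the lower tail of $C$ and the tails of $A$, and about moving between the regimes $e^{\eps}-1\approx\eps$ and $e^{\eps}+1\approx2$; everything else is bookkeeping. In the write-up it is convenient to first prove the conditional statement ``for every fixed $c\ge c_0$, the conditional privacy loss lies in $[-\eps,\eps]$ with conditional probability at least $1-\delta/2$'', and then average over $C$, adding the $\delta/2$ from the lower tail of $C$.
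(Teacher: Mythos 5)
Your proposal is correct and follows essentially the same route as the paper's proof: you identify the exact likelihood ratio $u/v$ (the paper's $a/b$), apply Chernoff to $C$ and Hoeffding to $A\mid C$, and verify the ratio on the resulting good event; your only genuine additions are the observation that the distributional symmetry $(A,C)\stackrel{d}{=}(C-A,C)$ makes the $P$- and $Q$-tail statements literally equivalent (the paper bounds $a/b$ and $b/a$ separately), and a reformulation of the final step as a threshold-versus-radius comparison. Both of these are harmless repackagings of the same argument, so no further comparison is needed.
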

\begin{proof}
Using a Chernoff bound and Hoeffding's inequality, with probability $\delta$ since $p\ge\frac{3\log(4/\delta)}{n}$ both
\begin{equation}\label{chernoff}
|C-p n|\le \sqrt{3p n\log(4/\delta)}\;\; \text{ and }\;\; |A-C/2|\le\sqrt{\frac{C}{2}\log(4/\delta)}.
\end{equation}
Now, note that $p\ge \frac{16\ln(4/\delta)}{n}$ implies that $\sqrt{3p n\log(4/\delta)}\le p n/2$ and, $\sqrt{\frac{C}{2}\log(4/\delta)}\le C/4$.
Given a specific output $(a,b)$,
\begin{align*}
\Pr(P=(a,b)) &= \Pr(C=a+b-1)\cdot\Pr(A=a-1|C=a+b-1)\\
&= \Pr(C=a+b-1)\cdot\frac{a}{b}\cdot\Pr(A=a|C=a+b-1)\\
&= \frac{a}{b}\cdot\Pr(Q=(a,b))
\end{align*}
Now, if $A$ and $C$ satisfy equation (\ref{chernoff}) then
\begin{align*}
\frac{a}{b}&=\frac{A+1}{C-A}\\
&\le \frac{C/2+\sqrt{C/2\log(4/\delta)}+1}{C/2-\sqrt{C/2\log(4/\delta)}}\\
&\le 1+\frac{\sqrt{32\log(4/\delta)}}{\sqrt{C}}+\frac{4}{C} \\
&\le \textcolor{black}{1+\frac{\sqrt{32\log(4/\delta)}}{\sqrt{p n-\sqrt{3p n\log(4/\delta)}}}}+\frac{8}{p n}\\
&\le 1+\frac{\sqrt{64\log(4/\delta)}}{\sqrt{p n}}+\frac{8}{pn}\\
\end{align*}
Further,
\begin{align*}
\frac{b}{a}&= \frac{C-A}{A+1}\\
&\le \frac{C/2+\sqrt{C/2\log(4/\delta)}}{C/2-\sqrt{C/2\log(4/\delta)}+1}\\
&\le \frac{C/2+\sqrt{C/2\log(4/\delta)}+1}{C/2-\sqrt{C/2\log(4/\delta)}}\\
&\le 1+\frac{\sqrt{64\log(4/\delta)}}{\sqrt{p n}}+\frac{8}{pn}\\
\end{align*}

Therefore, setting $\eps=\log(1+\frac{\sqrt{64\log(4/\delta)}}{\sqrt{pn}}+\frac{8}{pn})$ gives that $P$ and $Q$ are $(\eps, \delta)$ indistinguishable.

Further, \begin{equation}\Pr_{(a,c)\sim P}\left(-\eps\le\ln\frac{\Pr(P=(a,c))}{\Pr(Q=(a,c))}\le \eps\right)\ge 1-\delta\end{equation} and \begin{equation}\Pr_{(a,c)\sim Q}\left(-\eps\le\ln\frac{\Pr(P=(a,c))}{\Pr(Q=(a,c))}\le \eps\right)\ge 1-\delta.\end{equation}

\end{proof}

We can now complete the proof of Lemma~\ref{binomials} by using the advanced joint convexity of the hockey-stick divergence.
\begin{proof}[Proof of Lemma~\ref{binomials}]
Let $P_0=(A+1,C-A)$ and $Q_0=(A,C-A+1)$ and let $\rho_0$ and $\rho_1$ denote their respective probability distributions. Then, by Lemma~\ref{binomials-aux} for $p= e^{-\eps_0}$,
\begin{equation}\label{indistinguishableDP}
\Pr_{z\sim \rho_0}\left(-\eps\le\ln\frac{\Pr(P_0=z)}{\Pr(Q_0=z)}\le\eps\right)\ge1-\delta \text{   and    } \Pr_{z\sim \rho_1}\left(-\eps\le\ln\frac{\Pr(P_0=z)}{\Pr(Q_0=z)}\le\eps\right)\ge1-\delta\end{equation}
for
\[
\eps\le \log\left(1+\frac{8\sqrt{\log(4/\delta)}}{\sqrt{pn}}+\frac{8}{pn}\right)
\]

Now, note that if we let $T=\frac{1}{2}(P_0+Q_0)$ then we obtain that

\[
P = \frac{2}{e^{\eps_0}+1} T + \frac{e^{\eps_0}-1}{e^{\eps_0}+1}P_0 \;\;\text{and}\;\;Q = \frac{2}{e^{\eps_0}+1} T+\frac{e^{\eps_0}-1}{e^{\eps_0}+1} Q_0.
\]

Further, for any $\eps>0$, by the convexity of the hockey-stick divergence, \[\max\{\dalpha{e^{\eps}}(T\|P_0)), \dalpha{e^{\eps}}(Q_0\|T))\}\le \dalpha{e^{\eps}}(P_0\|Q_0).\]

Thus, by Lemma~\ref{advancedjointconvexity}, $P$ and $Q$ are $(\eps,p\delta)$-indistinguishable where \[\eps\le\log\left(1+\frac{e^{\eps_0}-1}{e^{\eps_0}+1}\left(\frac{8\sqrt{\log(4/\delta)}}{\sqrt{pn}}+\frac{8}{pn}\right)\right) = \log\left(1+\frac{e^{\eps_0}-1}{e^{\eps_0}+1}\left(\frac{8\sqrt{e^{\eps_0}\log(4/\delta)}}{\sqrt{n}}+\frac{8e^{\eps_0}}{n}\right)\right).
\]
\end{proof}

\section{Tails of the Privacy Loss}
\label{app:verify-tails}
\begin{figure}
    \centering
    \includegraphics{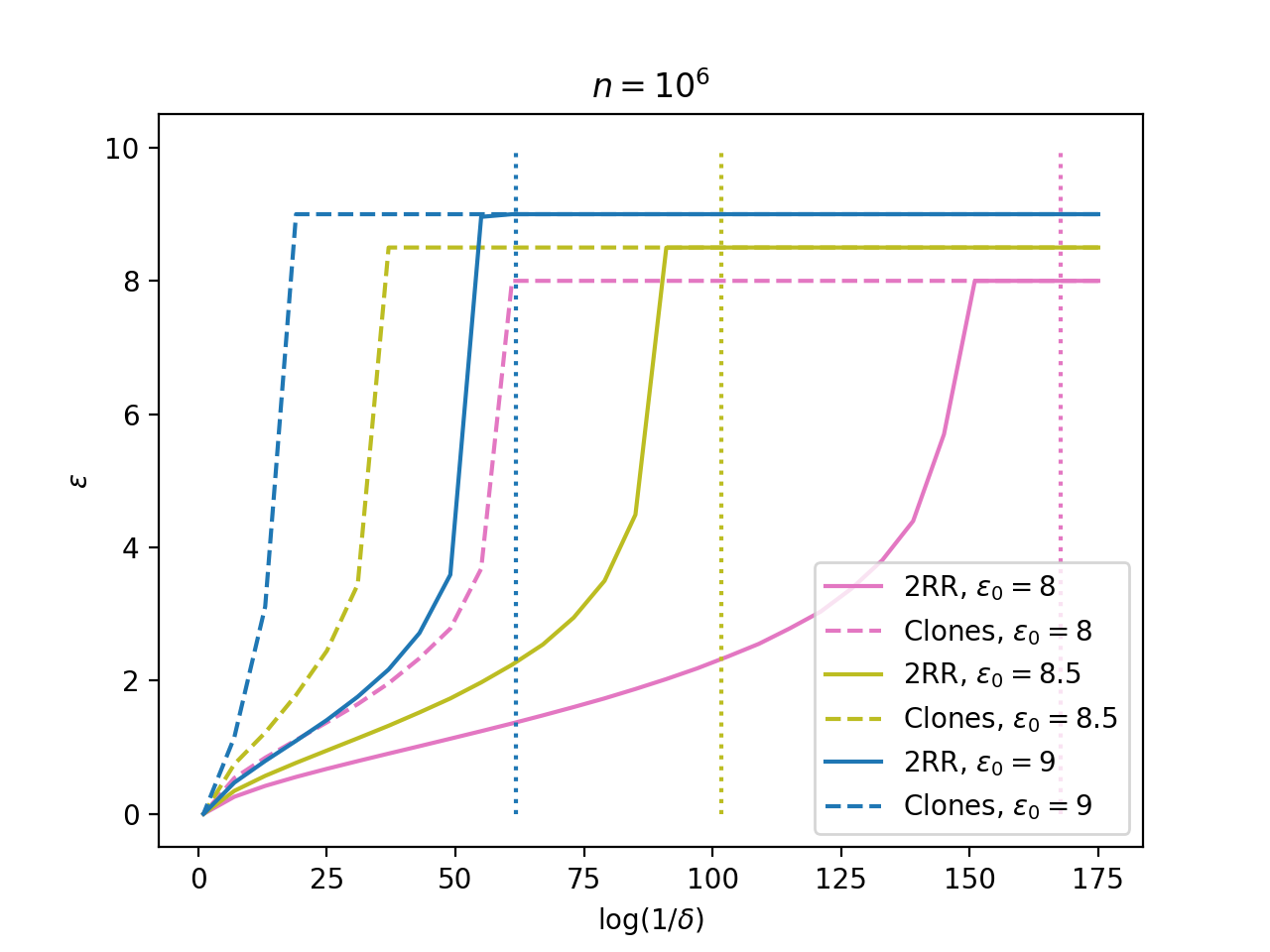}
    \caption{Bounds on the privacy amplification parameter as a function of $\delta$.}
    \label{varydelta}
\end{figure}

Recall from Theorem~\ref{higheps0} that the closed form bound for privacy amplification by shuffling was only valid when $\log(2/\delta)\le \frac{ne^{\eps_0}}{16}$. This condition arises in the proof when obtaining concentration bounds for the binomial, so a natural question is whether it is an artifact of the proof, or inherent. That is, do we get amplification when $\log(2/\delta)\ge \frac{ne^{\eps_0}}{16}$? In Figure~\ref{varydelta}, we show the tail of privacy loss as $\delta$ varies. The solid lines show the exact computation of the $\eps$ for shuffled binary randomised response. The dashed lines show the general upper bound obtained through the numerical computation derived from  Theorem~\ref{shuffletobinoms}, as in Section~\ref{experiment}. The horizontal, dotted lines mark $\log(1/\delta)=\frac{ne^{-\eps_0}}{2}$. Notice that in all three settings of $\eps_0$, the shuffled privacy loss undergoes a sharp transition from $\eps<\eps_0$ to $\eps=\eps_0$. Further, this transition point closely aligns with $\log(1/\delta)=\frac{ne^{-\eps_0}}{2}$, indicating that amplification is not achieved when $\log(1/\delta)=\Omega(ne^{-\eps_0})$.

\section{Proof of Proposition~\ref{approxDP}}
\label{app:approxproofs}

\rapproxtopure*

\begin{figure}
\centering
\includegraphics[scale=0.5]{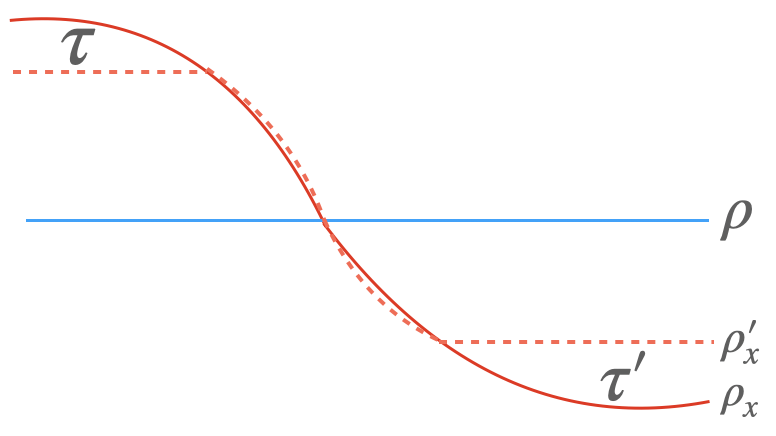}
\caption{Graphical representation of $\reference$, $\reference_x$ and $\reference_x'$ from the proof of Lemma~\ref{approxtopure}.}\label{fig:PQ}
\end{figure}
\begin{proof}[Proof of Lemma~\ref{approxtopure}]
Let $x\in\mathcal{D}$, $\reference$ be the reference distribution and $\reference_x$ be the probability density function of $\lr(x)$. Thus $\reference_x$ is $(\eps, \delta)$-indistinguishable from $\reference$. For $a,b,c\in \mathbb{R}$ such that $b<c$, let $[a]_b^c=\max\{\min\{a,c\},b\}$ be $a$ projected onto the interval $[b,c]$. Define a function $\reference_x'$ by \[\reference_x'(y) = [\reference_x(y)]_{e^{-\eps}\reference(y)}^{e^{\eps}\reference(y)}\] so that for all $y$,
\begin{equation}\label{ratio}
\ln\frac{\reference(y)}{\reference_x'(y)}\in[-\eps, \eps].
\end{equation}
Further, by the definition of $(\eps,\delta)$-indistinguishability,
\begin{equation}\label{TV}
\max\left\{\int_y \max\{\reference_x(y)-\reference_x'(y), 0\} dy, \int_y \max\{\reference_x'(y)-\reference_x(y), 0\} dy\right\}\le\delta.
\end{equation}

Now, $\reference_x'$ is not necessarily a distribution since $\int_y \reference_x'(y) dy$ is not necessarily 1. Thus, our goal is to define a distribution $\reference_x''$ that preserves Equations (\ref{ratio}) and (\ref{TV}). Let $\tau=\int_y \max\{\reference_x(y)-\reference_x'(y), 0\} dy$ and $\tau'=\int_y \max\{\reference_x'(y)-\reference_x(y), 0\} dy$, as in Figure~\ref{fig:PQ}. Note that $\tau=\tau'$ if and only if $\int_y \reference_x'(y) dy=1$. Thus there are two ways that $\reference_x'$ can fail to be a distribution.

Suppose first that $\tau> \tau'$,
then intuitively, we can convert $\reference_x'$ into a distribution by moving $\reference_x'$ closer to $\reference$ only in the region where 
$\reference>\reference_x$. That is, for all $z>0$, define \[\reference_x^z(y)= [\reference_x(y)]_{e^{-z}\reference(y)}^{e^{\eps}\reference(y)}\] and define a function $f(z)$ by \[f(z)=\int_y \max\{\reference_x^z(y)-\reference_x(y), 0\} dy=\int_{\reference(y)>\reference_x(y)} \max\{\reference_x(y), e^{-z}\reference(y)\}-\reference_x(y) dy.\] Now, $f(\eps)=\tau'<\tau$ and $f(0)=\TV(\reference, \reference_x)\ge\tau$. Since $f$ is continuous in $z$, by the intermediate value theorem, there exists $0\le\eps'<\eps$ such that $f(\eps')=\tau$. Since the distributions $\reference_x'$ and $\reference_x^{\eps'}$ agree on the region where $\reference_x>\reference_x'>\reference$, we have \[\int_y \max\{\reference_x(y)-\reference_x^{\eps'}(y), 0\} dy = \int_y \max\{\reference_x(y)-\reference_x'(y), 0\} dy = \tau = \int_y \max\{\reference_x^{\eps'}(y)-\reference_x(y), 0\} dy.\] This implies that $\reference_x^{\eps'}$ is a distribution. Equation~\eqref{ratio} still holds with $\reference_x^{\eps'}$ in place of $\reference_x'$ since $\eps'\le\eps$, and \[\int_y \max\{\reference_x(y)-\reference_x^{\eps'}(y), 0\} dy = \int_y \max\{\reference_x(y)-\reference_x^{\eps'}(y), 0\} dy = \int_y \max\{\reference_x(y)-\reference_x'(y), 0\} dy\le\delta.\] We can perform a similar operation if $\tau<\tau'$. Now, we let the output of $\lr'(x)$ be distributed according to $\reference_x^{\eps'}$.
\end{proof}

For the proof of Theorem~\ref{approxDP} we will need the following simple lemma about the hockey-stick divergence.
\begin{lemma}\cite[Lemma 3.17]{DR14-book}\label{tvdistancefromprivate}
Given random variables $P$, $Q$, $P'$ and $Q'$, if  $D_{e^{\eps}}(P',Q')\le\delta$, $\TV(P,P')\le\delta'$ and $\TV(Q,Q')\le\delta'$ then $D_{e^{\eps}}(P,Q)\le\delta+(e^{\eps}+1)\delta'$.
\end{lemma}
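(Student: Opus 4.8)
\textbf{Proof proposal for Lemma~\ref{tvdistancefromprivate}.}
The plan is to work with the integral representation of the hockey-stick divergence, $D_{e^\eps}(P\|Q)=\int\max\{0,P(x)-e^{\eps}Q(x)\}\,dx$ (treating $P,Q,P',Q'$ as densities), and to reduce the claim to a pointwise inequality followed by a single integration. The point is that for every $x$ we can write the integrand as a telescoping sum,
\[
P(x)-e^{\eps}Q(x)=\bigl(P(x)-P'(x)\bigr)+\bigl(P'(x)-e^{\eps}Q'(x)\bigr)+e^{\eps}\bigl(Q'(x)-Q(x)\bigr),
\]
and since $\max\{0,a+b+c\}\le\max\{0,a\}+\max\{0,b\}+\max\{0,c\}$ for any reals, we get
\[
\max\{0,P(x)-e^{\eps}Q(x)\}\le\max\{0,P(x)-P'(x)\}+\max\{0,P'(x)-e^{\eps}Q'(x)\}+e^{\eps}\max\{0,Q'(x)-Q(x)\}.
\]

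The second step is to integrate this inequality over $x$. The middle term integrates to $D_{e^\eps}(P'\|Q')\le\delta$ by hypothesis. For the first term, I would note that because $P$ and $P'$ are both probability densities, $\int\max\{0,P(x)-P'(x)\}\,dx=\int\max\{0,P'(x)-P(x)\}\,dx=\TV(P,P')\le\delta'$; similarly $\int\max\{0,Q'(x)-Q(x)\}\,dx=\TV(Q,Q')\le\delta'$. Summing the three contributions gives $D_{e^\eps}(P\|Q)\le\delta'+\delta+e^{\eps}\delta'=\delta+(e^{\eps}+1)\delta'$, as claimed. (If one prefers to avoid densities, the same argument runs verbatim with $P,P',Q,Q'$ replaced by their densities with respect to a common dominating measure, e.g.\ $P+P'+Q+Q'$, which always exists.)

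There is essentially no hard step here: the only thing to be careful about is the direction of the one-sided integrals, i.e.\ making sure the first decomposition term contributes $\max\{0,P-P'\}$ (which equals the total variation, not something larger) and the last contributes $\max\{0,Q'-Q\}$ with the factor $e^\eps$ pulled out correctly. The subadditivity of $a\mapsto\max\{0,a\}$ and the fact that the two one-sided masses of $P-P'$ coincide because both are normalized are the whole content of the argument.
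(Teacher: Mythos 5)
Your proof is correct. Note that the paper does not prove this lemma at all --- it is imported by citation from \cite{DR14-book} --- so there is no in-paper argument to compare against; your write-up is a valid self-contained derivation of the cited fact. The three ingredients all check out: the telescoping decomposition $P-e^{\eps}Q=(P-P')+(P'-e^{\eps}Q')+e^{\eps}(Q'-Q)$, the subadditivity $\max\{0,a+b+c\}\le\max\{0,a\}+\max\{0,b\}+\max\{0,c\}$, and the identification $\int\max\{0,P-P'\}\,dx=\TV(P,P')$, which holds because $P$ and $P'$ are normalized so the positive and negative parts of $P-P'$ carry equal mass (and likewise for $Q,Q'$). Integrating and using $D_{e^{\eps}}(P'\|Q')\le\delta$ gives exactly $\delta+(e^{\eps}+1)\delta'$, and the same argument applied with the roles of the two arguments exchanged covers the symmetric direction of the divergence when needed. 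Your remark about passing to densities with respect to a common dominating measure disposes of the only measure-theoretic technicality, so nothing is missing.
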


\rapproxDP*
\begin{proof}[Proof of Theorem~\ref{approxDP}]
Let $X_0$ and $X_1$ be neighboring datasets of size $n$ such that $x^0_1\neq x_1^1$. As in the proof of Theorem~\ref{shuffletobinoms}, we can assume without loss of generality that for all $j \in [2:n]$, $x_j \not\in\{x_1^0,x_1^1\}$.

Now fixing $i\in[n]$ and auxiliary input $z_{1:i-1}$, for $x\in\mathcal{D}$, let $\lr(x)~=~\Aldp[i](z_{1:i-1}, x)$.
If we define $\reference=\lr(x_1^0)$ then $\lr$ is a deletion $(\eps_0,\delta_0)$-DP local randomizer with reference distribution $\rho$.
By Lemma~\ref{approxtopure}, there exists a randomizer $\lr'$ that is a deletion $\eps_0$-DP local randomizer with the same reference distribution. In particular, $\lr(x_1^0)$ and $\lr'(x_1^1)$ are $\eps_0$-indistinguishable, and $\TV(\lr(x_1^1), \lr'(x_1^1))\le\delta_0$. Now, by Lemma \ref{lem:kov}, there exist distributions $\Q(x_1^0)$ and $\Q(x_1^1)$ so that
\begin{equation}\label{mix1}\lr(x_1^0)=\frac{e^{\eps_0}}{e^{\eps_0}+1}\Q(x_1^0)+\frac{1}{e^{\eps_0}+1}\Q(x_1^1),\end{equation} and \begin{equation}\label{mix2}\lr'(x_1^1)=\frac{1}{e^{\eps_0}+1}\Q(x_1^0)+\frac{e^{\eps_0}}{e^{\eps_0}+1}\Q(x_1^1).\end{equation}
Our next goal is to decompose $\lr(x)$ in terms of $\Q(x_1^0)$ and $\Q(x_1^1)$ for all $x \in \D$.
By convexity of the hockey-stick divergence, for any $x\in\mathcal{D}$, $\lr(x)$ is $(\eps_0, \delta_0)$-indistinguishable from $\frac{1}{2}(\lr(x_1^0)+\lr(x_1^1))$. That is, $\lr$ is a $(\eps_0,\delta_0)$ deletion DP local randomizer with reference distribution $\frac{1}{2}(\lr(x_1^0)+\lr(x_1^1))$. Therefore, by Lemma~\ref{approxtopure}, we can define $\lr''$ such that for all $x\in\mathcal{D}$, $\lr''(x)$ and $\frac{1}{2}(\lr(x_1^0)+\lr(x_1^1))$ are $\eps_0$-indistinguishable and $\TV(\lr''(x), \lr(x))\le\delta_0$. This implies that there exists a randomized algorithm $\lo \colon \D \to \cS$ such that we can decompose $\lr''(x)$ as \[\lr''(x) = \frac{1}{2e^{\eps_0}} \lr(x_1^0)+\frac{1}{2e^{\eps_0}} \lr(x_1^1)+ \left(1-\frac{1}{e^{\eps_0}}\right)\lo(x).\]
Next we define a randomizer $\mathcal{L}$ by $\mathcal{L}(x_1^0)=\lr(x_1^0)$, $\mathcal{L}(x_1^1)=\lr'(x_1^1)$ and for other $x\in\mathcal{D}$, \begin{align}
\mathcal{L}(x) &= \frac{1}{2e^{\eps_0}} \lr(x_1^0)+ \frac{1}{2e^{\eps_0}} \lr'(x_1^1)+  \left(1-\frac{1}{e^{\eps_0}}\right)\lo(x)\nonumber\\
&= \frac{1}{2e^{\eps_0}}\Q(x_1^0)+\frac{1}{2e^{\eps_0}}\Q(x_1^1)+ \left(1-\frac{1}{e^{\eps_0}}\right)\lo(x) \label{mix3}.
\end{align}

Note that $\TV(\lr(x_1^0), \mathcal{L}(x_1^0))=0$, $\TV(\lr(x_1^1), \mathcal{L}(x_1^1)) \leq \delta_0$ and for all $x\in\mathcal{D}\setminus\{x_1^0,x_1^1\}$, \[\TV(\lr(x), \mathcal{L}(x))\le \TV(\lr(x), \lr''(x))+\TV(\lr''(x), \mathcal{L}(x))\le \delta_0+ \frac{1}{2e^{\eps_0}}\TV(\lr'(x_1^1), \lr(x_1^1))\le \left(1+ \frac{1}{2e^{\eps_0}}\right)\delta_0.\]

For all $i\in[n]$ and $z_{1:i-1}\in\out[1]\times \cdots \times \out[i-1]$, let $\mathcal{L}^{(i)}(z_{1:i-1}, \cdot)$, $\Q^{(i)}(z_{1:i-1}, \cdot)$ and $\lo^{(i)}(z_{1:i-1}, \cdot)$ denote the result of this transformation applied to $\lr^{(i)}(z_{1:i-1},\cdot)$.
Let ${\A}_{\mathcal{L}}$ be the same algorithm as $\shuffler$ except $\lr^{(i)}(z_{1:i-1},x)$ is replaced by $\mathcal{L}^{(i)}(z_{1:i-1},x)$. Note that, $\shuffler$ applies each randomizer exactly once and hence, by the union bound, \[\TV(\shuffler(X_0), \A_{\mathcal{L}}(X_0))\le n(1+e^{-\eps_0}/2)\delta_0\;\;\; \text{and}\;\;\; \TV(\shuffler(X_1), \A_{\mathcal{L}}(X_1))\le n(1+e^{-\eps_0}/2)\delta_0.\] Thus, by Lemma~\ref{tvdistancefromprivate}, for any $\eps>0$, if $\A_\L(X_0)$ and $\A_\L(X_1)$ are $(\eps,\delta)$-indistinguishable then $\shuffler(X_0)$ and $\shuffler(X_1)$ are $(\eps, \delta+(e^{\eps}+1)(1+e^{-\eps_0}/2)n\delta_0)$-indistinguishable. So, all we need is to analyze the hockey-stick divergence between $\A_\L(X_0)$ and $\A_\L(X_1)$.

Equations~\eqref{mix1}, ~\eqref{mix2}, and~\eqref{mix3}
mirror Equations~\eqref{eq:decomp0}, ~\eqref{eq:decomp1} and ~\eqref{mixture} in the proof of Theorem~\ref{shuffletobinoms}, so the proof will proceed similarly. We define a randomizer $\L_\Q^{(i)}$ as follows:
For all $x\in \D$, $i\in[n]$ and values of $z_{1:i-1}$ we let
\[\L_\Q^{(i)}(z_{1:i-1}, x)=
\begin{cases} \Q^{(i)}(z_{1:i-1},x_1^0) & \text{if } x=x_1^0; \\
                                    \Q^{(i)}(z_{1:i-1},x_1^1) & \text{if } x=x_1^1;  \\
                                    \L^{(i)}(z_{1:i-1}, x) & \text{otherwise.} \end{cases}\]
Let $\A_\Q$ be defined in the same way as $\shuffler$, except $\Aldp[i]$ is replaced with $\L_\Q^{(i)}$.
Equations~\eqref{mix1} and~\eqref{mix2}, allow us to decompose $\A_\L(X_0)$ and $\A_\L(X_1)$ into the mixture of two components as follows:
\begin{equation}\label{shufflemix}\A_\L(X_0) = \frac{e^{\eps_0}}{e^{\eps_0}+1}\A_\Q(X_0)+\frac{1}{e^{\eps_0}+1}\A_\Q(X_1)\;\;\text{and}\;\;\A_\L(X_1) = \frac{1}{e^{\eps_0}+1}\A_\Q(X_0)+\frac{e^{\eps_0}}{e^{\eps_0}+1}\A_\Q(X_1).
\end{equation}

To compute the divergence between $\A_\Q(X_0)$ and $\A_\Q(X_1)$ we note that by Equation~\eqref{mix3}, for all $x\notin\{x_1^0,x_1^1\}$,
\[\L_\Q^{(i)}(z_{1:i-1}, x) = \frac{1}{2e^{\eps_0}}\L_\Q^{(i)}(z_{1:i-1}, x_1^0)+\frac{1}{2e^{\eps_0}}\L_\Q^{(i)}(z_{1:i-1}, x_1^1)+\left(1-\frac{1}{e^{\eps_0}}\right)\lo^{(i)}(z_{1:i-1},x).\] Therefore, by Lemma~\ref{binomstomixtures}, $\A_\Q(X_0)$ and $\A_\Q(X_1)$ can be seen as postprocessing of random variables $(A+1, C-A)$ and $(A,C-A+1)$, respectively, where $C\sim \bin(n-1, e^{-\eps_0})$ and $A\sim\bin(C, 1/2)$.

Finally, as before, Equation~\eqref{shufflemix} implies that ${\A}_{\mathcal{L}}(X_0)$ and ${\A}_{\mathcal{L}}(X_1)$ can be seen as postprocessing of random variables $(A+\Delta, C-A+1-\Delta)$ and $(A+1-\Delta,C-A+\Delta)$, respectively, where $\Delta \sim \Ber\left(\frac{e^{\eps_0}}{e^{\eps_0}+1}\right)$. By Lemma~\ref{binomials}, we get that
${\A}_{\mathcal{L}}(X_0)$ and ${\A}_{\mathcal{L}}(X_1)$ are $\left(\eps, \delta\right)$-indistinguishable where
\[\eps\le\log\left(1+\frac{e^{\eps_0}-1}{e^{\eps_0}+1}\left(\frac{8\sqrt{\log(4/\delta)}}{\sqrt{pn}}+\frac{8}{pn}\right)\right).
\]
\end{proof}
\section{Proof of Theorem~\ref{higheps0new}}\label{specificproof}

\rhighepsOnew*
\begin{proof}[Proof of Theorem~\ref{higheps0new}]
The proof follows the proof of Theorem~\ref{higheps0} with a modification to handle an additional mixture component.
Define a random variable $Y$ as follows \[Y = \begin{cases} 0 & \text{w.p.   } p\\ 1 & \text{w.p.   } p\\ 2 & \text{w.p.   } p \\ 3 & \text{w.p.   } 1-3p \end{cases}.\] Given a dataset $X_b$ for $b \in \zo$ we generate $n$ samples from $\zotf$ in the following way. Client number one (holding the first element of the dataset) samples $d$ from $\Ber(q)$. If $d=1$ it reports $b$. Otherwise, it reports 2. Clients $2,\ldots,n$ each report an independent sample from $Y$. We then shuffle the reports randomly. Let $\rho_b$ denote the resulting distribution over $\zotf^n$.

We claim that there exists a post-processing function $\Phi$ such that for $y$ sampled from $\rho_b$, $\Phi(y)$ is distributed identically to $\shuffler(X_b)$. The argument is essentially identical to the argument we used in Lemma~\ref{binomstomixtures} with the postprocessing done as described in Algorithm~\ref{postprocessing-general}.

To analyze the divergence between $\rho_0$ and $\rho_1$ we, again, observe that these are symmetric distributions. By definition, we know that $\rho_0$ is identical to the distribution of $(A+\Gamma, B, C+1-\Gamma,D)$, where $(A,B,C,D)\sim\mnom(n-1;p,p,p,1-3p)$ and $\Gamma \sim \Ber(q)$. Similarly, $\rho_1$ is identical to the distribution of $(A, B+\Gamma, C+1-\Gamma,D)$. Note that $D=n-1-A-B-C$ and therefore does not affect the divergence.

\begin{algorithm}
  \textbf{Input:} $x^0_1,x^1_1,x_2,\ldots, x_n$; $y \in \zotf^n$\\
  $J := \emptyset$\\
  \For{$i=1,\ldots,n$}{
  \If{$y_i =3$}{
  Let $j_i$ be a randomly and uniformly chosen element of $[2:n]\setminus J$\\
  $J := J \cup \{j_i\}$
  }
  Sample $z_i$ from $ \begin{cases} \qozi & \text{if } y_i=0; \\
                                    \qooi & \text{if } y_i=1;  \\
                                    \qzi & \text{if } y_i=2;  \\
                                    \lo^{(i)}(z_{1:i-1}, x_{j_i}) & \text{if } y_i=3. \end{cases}$\\
  }
  \textbf{return} $z_1, \ldots, z_n$
  \caption{Post-processing function, $\Phi$} \label{postprocessing-general}
\end{algorithm}

We now analyze the hockey-stick divergence between the distribution of $(A+\Gamma, B, C+1-\Gamma)$ and the distribution of $(A, B+\Gamma, C+1-\Gamma)$ which we denote by $\tau_0$ and $\tau_1$, respectively. We let $\kappa_0$ be the distribution of $(A,B,C+1)$ for $(A,B,C,D)\sim\mnom(n-1;p,p,p,1-3p)$, $\kappa_1^0$ be the distribution of $(A+1,B,C)$ and  $\kappa_1^1$ be the distribution of $(A,B+1,C)$. Then by definition we have that for $b\in \zo$,
\equ{\tau_b = (1-q) \kappa_0 + q \kappa_1^b .\label{eq:general-alg-mixture}}

We now upper-bound the divergence between $\kappa_0$ and $\kappa_1^b$, and between $\kappa_1^0$ and $\kappa_1^1$ by the divergence between the pair of distributions given in Lemma~\ref{binomials-aux}. For this purpose we restate the lemma with the notation of multinomial distributions and setting the parameter $p$ of the lemma to be $2p$. For $p\in(0,1/2]$ and $n\in\mathbb{N}$ such that $p~\ge~ \frac{8\ln(2/\delta)}{n}$ let $(A',B',C')~ \sim \mnom(n-1; p,p,1-2p)$. Then $P=(A'+1,B')$ and $Q=(A',B'+1)$ are $(\log(1+\frac{4\sqrt{2\log(4/\delta)}}{\sqrt{pn}}+\frac{4}{pn}), \delta)$- indistinguishable.

Given a sample $(a,b)\in [n]\times[n]$ we postprocess it by sampling $c \sim \Bin(n-a-b,p/(1-2p))$ and outputting $(a,b,c)$. This postprocesses a sample from $P$ into a sample from $\kappa_1^0$ and a sample from $Q$ into a sample from $\kappa_1^1$. By the postprocessing property of the hockey-stick divergence and Lemma~\ref{binomials-aux}, we get that $\kappa_1^0$ and $\kappa_1^1$  are \[\left(\log\left(1+\frac{4\sqrt{2\log(4/\delta)}}{\sqrt{pn}}+\frac{4}{pn}\right), \delta\right)\text{-indistinguishable.}\]
By sampling $c$ as before and outputting $(a,c,b)$ we can postprocess from $P$ and $Q$ to the pair $\kappa_1^0$ and $\kappa_0$.  Similarly, by outputting $(c,a,b)$  we can postprocess $P$ and $Q$ to the pair $\kappa_1^1$ and $\kappa_0$. Thus we obtain that for $b\in \zo$, $\kappa_0$ and $\kappa_1^b$ are \[\left(\log\left(1+\frac{4\sqrt{2\log(4/\delta)}}{\sqrt{pn}}+\frac{4}{pn}\right), \delta\right)\text{-indistinguishable.}\]

Now, Equation~\eqref{eq:general-alg-mixture} and advanced joint convexity of the hockey-stick divergence (Lemma~\ref{advancedjointconvexity}) imply that $\shuffler(X_0)$ and $\shuffler(X_1)$ are \[\left(\log\left(1+q\left(\frac{4\sqrt{2\log(4/\delta)}}{\sqrt{pn}}+\frac{4}{pn}\right)\right), q\delta\right)\text{-indistinguishable.}\]
\end{proof}

\section{Proof of Corollary~\ref{kRRthm}}\label{appendixkrr}

\rkRRthm*

\begin{proof}[Proof of Corollary~\ref{kRRthm}] To show that $\shuffler$ is $(\eps,\delta)$-DP, it is sufficient to show that for any pair of neighboring datasets $X_0$ and $X_1$, $\shuffler(X_0)$ and $\shuffler(X_1)$ are $(\eps,\delta)$-indistinguishable. Let $X_0$ and $X_1$ be a pair of neighboring datasets with $x_1^0\neq x_1^1$.
Let $\qzi$ be the uniform distribution on $[k]$, and for any $j\in[k]$, let $\indicator_j$ is the distribution that always outputs $j$. So for any $x\in\mathcal{D}$,
\[\Aldp[i](z_{1:i-1}, x)=\kRR(f^{(i)}(z_{1:i-1},x)) = \frac{k}{e^{\eps_0}+k-1}\qzi+\frac{e^{\eps_0}-1}{e^{\eps_0}+k-1}\indicator_{f^{(i)}(z_{1:i-1},x)}.\]
Let $p=\frac{k}{(k+1)(e^{\eps_0}+k-1)}$. Now, note that $\qzi = \frac{1}{k+1}\qzi+\frac{1}{k+1}\sum_{j=1}^k \indicator_{j}.$ So, for any $x$, if we let \[\lo^{(i)}(z_{1:i-1},x) = \frac{1}{(1-3p)}\left(\frac{k}{(k+1)(e^{\eps_0}+k-1)}\sum_{j\in [k],\ j\notin\{ f^{(i)}(z_{1:i-1},x_1^0), f^{(i)}(z_{1:i-1},x_1^1)\}} \indicator_j + \frac{e^{\eps_0}-1}{e^{\eps_0}+k-1}\indicator_{f^{(i)}(z_{1:i-1},x)}\right)\]
then
\begin{align*}
\Aldp[i](z_{1:i-1}, x) &= \frac{k}{e^{\eps_0}+k-1}\qzi+\frac{e^{\eps_0}-1}{e^{\eps_0}+k-1}\indicator_{f^{(i)}(z_{1:i-1},x)}\\
&= \frac{k}{e^{\eps_0}+k-1}\left(\frac{1}{k+1}\qzi+\frac{1}{k+1}\sum_{\substack{j=1}} ^k \indicator_j\right)+\frac{e^{\eps_0}-1}{e^{\eps_0}+k-1}\indicator_{f^{(i)}(z_{1:i-1},x)}\\
&= p\qzi+p\indicator_{f^{(i)}(z_{1:i-1},x_1^0)}+p\indicator_{f^{(i)}(z_{1:i-1},x_1^1)}+(1-3p)\lo_b^{(i)}(z_{1:i-1},x)
\end{align*}
Therefore, $\Aldp[i](z_{1:i-1}, x)$
satisfies the conditions of Theorem~\ref{higheps0new} with $q=\frac{e^{\eps_0}-1}{e^{\eps_0}+k-1}$ and $p=\frac{k}{(k+1)(e^{\eps_0}+k-1)}$. Note that $\eps_0\le\ln\left(\frac{n}{16\log(2/\delta)}\right)$ implies that $p\ge \frac{8\ln(2/\delta)}{n}$ so by Theorem~\ref{higheps0new}, $\shuffler(X_0)$ and $\shuffler(X_1)$ are $(\eps,\delta)$-indistinguishable where
\begin{align*}
\eps &\leq \log\left(1+\frac{e^{\eps_0}-1}{e^{\eps_0}+k-1}\left(\frac{4\sqrt{2(k+1)(e^{\eps_0}+k-1)\log(4/\delta)}}{\sqrt{kn}}+\frac{4(k+1)(e^{\eps_0}+k-1)}{kn}\right)\right)\\
&\le \log\left(1+(e^{\eps_0}-1)\left(\frac{4\sqrt{2(k+1)\log(4/\delta)}}{\sqrt{(e^{\eps_0}+k-1)kn}}+\frac{4(k+1)}{kn}\right)\right)\\
\end{align*}
\end{proof}

\section{Implementation of \texttt{Clones, empirical}}\label{pseudocode}

In this section we outline our implementation of the proof of Theorem~\ref{higheps0} to compute the amplification bound for general local randomizers. Let $q=\frac{e^{\eps_0}}{e^{\eps_0}+1}$ and $p=e^{-\eps_0}$. Recall that the shuffled $\eps$ is upper bounded by the divergence between the random variables
\begin{equation}\label{eq:PQ} P = \begin{cases} (A, C) & \text{w.p. } q\\(A+1,C) & \text{w.p. } 1-q \end{cases} \;\;\;\text{and} \;\;\; Q = \begin{cases} (A, C) & \text{w.p. } 1-q\\(A+1,C) & \text{w.p. } q \end{cases},
\end{equation} where $C\sim\Bin(n-1,p)$ and $A\sim\Bin(C,1/2)$. For a given $\delta$, our goal is to compute an approximately minimal $\eps$ such that $P$ and $Q$ are $(\eps,\delta)$-indistinguishable. It is computationally easier to compute an approximately minimal $\delta$ for a given $\eps$ than the converse, so for a given $\delta$ we'll use binary search to find such an $\eps$. Algorithm~\ref{binarysearch} takes as input a function $M$ that computes an approximation to the smallest $\delta$ such that $P$ and $Q$ are $(\eps,\delta)$-indistinguishable for a given $\eps$.

\begin{algorithm}\DontPrintSemicolon
  \textbf{Input:} $\eps_0, \delta, T, M$\\
  $\eps^L = 0$\\
  $\eps^R = \eps_0$\\
  \For{$t\in[T]$}{
  $\eps_t = \frac{\eps^L+\eps^R}{2}$\\
  $\delta_t = M(\eps_t)$ \tcp*{$M(\eps_t)=$ the smallest value such that $P$ and $Q$ are $(\eps_t, \delta_t)$-indistinguishable}
  \If{$\delta_t<\delta$}{$\eps^R = \eps_t$}
  \Else{$\eps^L = \eps_t$}
  }
  {\bf return} $\eps^R$
  \caption{Binary Search, \texttt{BinS}} \label{binarysearch}
\end{algorithm}

Next, we need to design the algorithm $M$. Note that for a given $\eps$, the minimal $\delta$ is given by the equation \[\delta = D_{e^{\eps}}(P,Q)= \max\left\{ \int_{(a,c)} \max\{0,P(a,c)-e^{\eps}Q(a,c)\} d(a,c), \int_{(a,c)} \max\{0,Q(a,c)-e^{\eps}P(a,c)\} d(a,c)\right\},\] where for ease of notation we use $P$ and $Q$ for both the random variables and their probability density functions (pdf). We present an algorithm $M$ that upper bounds this integral. Let us first look at an important subroutine: for a given $c$, Algorithm~\ref{partint} computes $\int_{a} \max\{0, P(a,c)-e^{\eps}Q(a,c)\}da$ if $b=+$ and $\int_{a} \max\{0,Q(a,c)-e^{\eps}P(a,c)\}da$ if $b=-$.

\begin{algorithm}\DontPrintSemicolon
  \textbf{Input:} $c, \eps, \eps_0, b$\\
  $q = \frac{e^{\eps_0}}{e^{\eps_0}+1}$\\
  $\eps_{q,\eps} = \ln\left(\frac{(e^{\eps}+1)q-1}{(e^{\eps}+1)q-e^{\eps}}\right)$\\
  \If{$b=+$}{
  $\beta = \frac{1}{e^{\eps_{q,\eps}}+1}$}
  \Else{
  $\beta = \frac{1}{e^{-\eps_{q,\eps}}+1}$}
  $\tau = \beta (c+1)$\\
  $\gamma_P = q*\Pr(\Bin(c,0.5)\le \tau)+(1-q)\Pr(\Bin(c,0.5)\le\tau-1)$\\
  $\gamma_Q = (1-q)*\Pr(\Bin(c,0.5)\le \tau)+q\Pr(\Bin(c,0.5)\le\tau-1)$\\
  \If{$b=+$}{\textbf{return} $\gamma_P-e^{\eps}\gamma_Q$}
  \Else{\textbf{return} $(1-\gamma_Q)-e^{\eps}(1-\gamma_P)$}
  \caption{$\onedint$} \label{partint}
\end{algorithm}

\begin{lemma}\label{algosubroutine}
For $c\in\mathbb{Z}$, and with $P$ and $Q$ as in Equation~\eqref{eq:PQ}, if $\eps<\eps_0$ then \[\Pr(\Bin(n-1,p)=c)\cdot\onedint(c,\eps,\eps_0, +) =\int_{\mathbb{Z}} \max\{0,P(a,c)-e^{\eps}Q(a,c)\} da\] and \[\Pr(\Bin(n-1,p)=c)\cdot\onedint(c,\eps,\eps_0, -) =\int_{\mathbb{Z}} \max\{0,Q(a,c)-e^{\eps}P(a,c)\} da.\]
\end{lemma}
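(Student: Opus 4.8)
The plan is to reduce everything to a computation conditional on the value of $C$. Fix $c$ (it suffices to treat $0\le c\le n-1$, as both sides vanish otherwise since $\Pr(\Bin(n-1,p)=c)=0$ and $P(\cdot,c)=Q(\cdot,c)\equiv 0$ outside this range). Writing $f(a):=\Pr(\Bin(c,1/2)=a)=\binom{c}{a}2^{-c}$ and $q=\frac{e^{\eps_0}}{e^{\eps_0}+1}$, the definition of $P,Q$ in Equation~\eqref{eq:PQ} gives $P(a,c)=\Pr(\Bin(n-1,p)=c)\cdot\bar P(a)$ and $Q(a,c)=\Pr(\Bin(n-1,p)=c)\cdot\bar Q(a)$ with $\bar P(a)=q f(a)+(1-q)f(a-1)$ and $\bar Q(a)=(1-q)f(a)+q f(a-1)$. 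The common factor $\Pr(\Bin(n-1,p)=c)$ pulls out of both integrals, so it is enough to show $\onedint(c,\eps,\eps_0,+)=\sum_a\max\{0,\bar P(a)-e^\eps\bar Q(a)\}$ and the analogous identity for $b=-$.

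Next I would determine the sign pattern of $\bar P(a)-e^\eps\bar Q(a)=\alpha f(a)-\beta f(a-1)$, where $\alpha:=q-e^\eps(1-q)$ and $\beta:=e^\eps q-(1-q)$. Using $q=\frac{e^{\eps_0}}{e^{\eps_0}+1}$ one computes $\alpha=\frac{e^{\eps_0}-e^\eps}{e^{\eps_0}+1}$ and $\beta=\frac{e^{\eps_0+\eps}-1}{e^{\eps_0}+1}$, so $\alpha>0$ precisely because $\eps<\eps_0$ — this is where the hypothesis enters, and it is also exactly the condition making $\frac{(e^\eps+1)q-1}{(e^\eps+1)q-e^\eps}$ (the argument of the logarithm defining $\eps_{q,\eps}$) a well-defined positive number — while $\beta>0$ unconditionally. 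Since the likelihood ratio $f(a)/f(a-1)=(c-a+1)/a$ is strictly decreasing on $\{1,\dots,c\}$, the quantity $\alpha f(a)-\beta f(a-1)$ is positive exactly for integers $a<\tau^\star$, zero at $a=\tau^\star$, and negative for $a>\tau^\star$, where $\tau^\star=\frac{\alpha(c+1)}{\alpha+\beta}$; the boundary cases $a\le 0$ and $a\ge c+1$ follow directly from $f(-1)=f(c+1)=0$. The necessary (but routine) algebra is to check that $\tau^\star$ equals the threshold $\tau=\frac{c+1}{e^{\eps_{q,\eps}}+1}$ used in Algorithm~\ref{partint} for $b=+$, i.e.\ that $\frac{\alpha}{\alpha+\beta}=\frac{1}{e^{\eps_{q,\eps}}+1}$, which is immediate once one observes $e^{\eps_{q,\eps}}=\frac{(e^\eps+1)q-1}{(e^\eps+1)q-e^\eps}=\beta/\alpha$.

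With the threshold identified, the remainder is bookkeeping. For $b=+$, $\sum_a\max\{0,\bar P(a)-e^\eps\bar Q(a)\}=\sum_{a\le\tau}(\bar P(a)-e^\eps\bar Q(a))$ (the borderline term at $a=\tau$, when $\tau\in\mathbb Z$, is zero and harmless), and $\sum_{a\le\tau}\bar P(a)=q\Pr(\Bin(c,1/2)\le\tau)+(1-q)\Pr(\Bin(c,1/2)\le\tau-1)=\gamma_P$, and likewise $\sum_{a\le\tau}\bar Q(a)=\gamma_Q$, so the sum equals $\gamma_P-e^\eps\gamma_Q$, the return value of the algorithm. For $b=-$, the same analysis applied to $\bar Q(a)-e^\eps\bar P(a)=-\beta f(a)+\alpha f(a-1)$ — now using that $f(a-1)/f(a)=a/(c-a+1)$ is strictly increasing — shows its positive region is the complementary tail $\{a>\tau'\}$ with $\tau'=\frac{\beta(c+1)}{\alpha+\beta}=(c+1)-\tau^\star$, and $\frac{\beta}{\alpha+\beta}=1-\frac{1}{e^{\eps_{q,\eps}}+1}=\frac{1}{e^{-\eps_{q,\eps}}+1}$, matching the algorithm's $\tau$ for $b=-$. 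Summing over this tail and using $\sum_{a>\tau'}\bar Q(a)=1-\gamma_Q$ and $\sum_{a>\tau'}\bar P(a)=1-\gamma_P$ gives $(1-\gamma_Q)-e^\eps(1-\gamma_P)$, as claimed.

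I do not expect a genuine obstacle; the one place to be careful is the boundary behavior — verifying that "the positive part is exactly the contribution of $\{a\le\tau\}$'' remains correct when $\tau$ is an integer (the borderline term vanishes, so it does not matter whether it is included), when $c$ is small, and at $a\in\{0,c+1\}$ where one of $f(a),f(a-1)$ is zero — together with correctly switching from a lower-interval region (for $b=+$) to an upper-tail region (for $b=-$) and passing to the complementary binomial CDFs. Everything else is the monotone-likelihood-ratio observation plus the elementary identification of $\tau^\star$ and $\tau'$ with the $\tau$ computed by the algorithm.
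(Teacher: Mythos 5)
Your proposal is correct and follows essentially the same route as the paper's proof: determine the sign of $P(a,c)-e^{\eps}Q(a,c)$ via the ratio $\binom{c}{a}/\binom{c}{a-1}=(c+1-a)/a$, identify the threshold $\tau=(c+1)/(e^{\eps_{q,\eps}}+1)$ (using $\eps<\eps_0$ to ensure the coefficient $q-e^{\eps}(1-q)$ is positive), and express the resulting partial sums through the binomial CDFs defining $\gamma_P$ and $\gamma_Q$. Your explicit verification that $e^{\eps_{q,\eps}}=\beta/\alpha$ and your attention to the integer-threshold and boundary terms are slightly more careful than the paper's write-up but add nothing substantively different.
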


\begin{proof}
First, let us characterize when $P(a,c)-e^{\eps}Q(a,c)>0$. We will let $C$ denote both the random variable $\Bin(n-1,p)$ and its pdf. Similarly, $A_c$ denotes the random variable $\Bin(c,1/2)$ and its pdf. 
\begin{align*}
    P(a,c)-e^{\eps}Q(a,c)>0 &\iff C(c)((q A_c(a) + (1-q) A_c(a-1))-e^{\eps}((1-q) A_c(a) + q A_c(a-1)))>0\\
    &\iff (q-e^{\eps}(1-q))A_c(a)>(e^{\eps}q-(1-q))A_c(a-1)\\
\end{align*}
Now, if $\eps<\eps_0$ then $q-e^{\eps}(1-q)> 0$.
Let $\eps_{q,\eps}=\ln\left(\frac{e^{\eps}q-(1-q)}{q-e^{\eps}(1-q)}\right)$. So,
\begin{align*}
    P(a,c)-e^{\eps}Q(a,c)>0 &\iff \binom{c}{a}>e^{\eps_{q,\eps}}\binom{c}{a-1}\\
    &\iff \frac{c+1-a}{a} > e^{\eps_{q,\eps}}\\
    &\iff (e^{\eps_{q,\eps}}+1)a<c+1\\
    &\iff a<\frac{c+1}{e^{\eps_{q,\eps}}+1}
\end{align*}
If $b=+$ then $\tau=\frac{c+1}{e^{\eps_{q,\eps}}+1}$ so $P(a,c)-e^{\eps}Q(a,c)>0\iff a<\tau$. Note that $\gamma_P = \Pr(P\le\tau)$ and $\gamma_Q = \Pr(Q\le\tau)$. Therefore, \[\int_{\mathbb{Z}} \max\{0,P(a,c)-e^{\eps}Q(a,c)\} da = \int_{-\infty}^{\tau} (P(a,c)-e^{\eps}Q(a,c)) da = C(c)(\gamma_P-e^{\eps}\gamma_Q)\]

\noindent Now, for the second integral,
\begin{align*}
    Q(a,c)-e^{\eps}P(a,c)>0 &\iff C(c)(((1-q) A_c(a) + q A_c(a-1))-e^{\eps}(q A_c(a) + (1-q) A_c(a-1)))>0\\
    &\iff ((1-q)-e^{\eps}q)A_c(a)>(e^{\eps}(1-q)-q)A_c(a-1)
\end{align*}
For any $\eps$, $(1-q)-e^{\eps}q<0$ so,
\begin{align*}
    Q(a,c)-e^{\eps}P(a,c)>0 &\iff A_c(a)<e^{-\eps_{q,\eps}}A_c(a-1)\\
    &\iff \frac{c+1-a}{a}<e^{-\eps_{q,\eps}}\\
    &\iff (e^{-\eps_{q,\eps}}+1)a>c+1\\
    &\iff a>\frac{c+1}{e^{-\eps_{q,\eps}}+1}
\end{align*}
If $b=-$ then $\tau=\frac{c+1}{e^{-\eps_{q,\eps}}+1}$ so $Q(a,c)-e^{\eps}P(a,c)>0\iff a>\tau$. Noting that $1-\gamma_P=\Pr(P\ge\tau)$ and $1-\gamma_Q=\Pr(Q\ge\tau)$ we have, \[\int_{\mathbb{Z}} \max\{0,Q(a,c)-e^{\eps}P(a,c)\} da = \int_{\tau}^{\infty} Q(a,c)-e^{\eps}P(a,c) da = C(c)((1-\gamma_Q)-e^{\eps}(1-\gamma_P).\]
\end{proof}

\begin{algorithm}\DontPrintSemicolon
  \textbf{Input:} $n, \eps_0, \delta^U, S, \eps$\\
  $\delta_P^0 = 0$ \tcp*{will keep track of $\int \max\{P(x)-e^{\eps}Q(x), 0\} dx$}
  $\delta_Q^0 = 0$ \tcp*{will keep track of $\int \max\{Q(x)-e^{\eps}P(x), 0\} dx$}
  $\zeta_C^0 = 0$ \tcp*{will keep track of the probability mass of $C$ covered}
  $p = e^{-\eps_0}$\\
  $T=\lfloor n/S \rfloor$\\
  \For{$t = \{0, \cdots, T \}$}{
  $B^t=t*S$\\
  \vspace{0.2in}
  \textbf{Step 1:} if $\max(\delta_P^{t}, \delta_Q^{t})>\delta^U$ then $P$ and $Q$ are not $(\eps, \delta^U)$-indistinguishable so exit.\\
  \If{$\max(\delta_P^{t}, \delta_Q^{t})>\delta^U$ }{\textbf{return} $\delta^U$}
  \vspace{0.2in}
  \textbf{Step 2:} any further contribution to either $\delta_P$ or $\delta_Q$ will not exceed $1-\zeta_C$, so if this is small we exit.\\
  \ElseIf{$1-\zeta_C^t<\delta_P^t$ and $1-\zeta_C^t<\delta_Q^t$}{\textbf{return} $\max\{\delta_P^t+1-\zeta_C^t, \delta_Q^t+1-\zeta_C^t\}$}
  \vspace{0.2in}
  \textbf{Step 3:} we estimate the contribution to $\delta_P$ and $\delta_Q$ from the next interval $[B^t, B^t+S)$\\
  \Else{
  $C_{\max} = B^t+S-1$\\
  $C_{\min} = B^t$\\
  $Pr_{[C_{\min}, C_{\max}]} = \Pr(\Bin(n-1,p)\in[C_{\min}, C_{\max}])$\\
  \vspace{0.2in}
  \textbf{Step 3a:} Compute contribution to $\delta_P$\\
  $c_{P, \max, \delta} = \onedint(C_{\max}, n, p, \eps, \eps_0, +)$\\
$c_{P, \min, \delta} = \onedint(C_{\min}, n, p, \eps, \eps_0, +)$\\
$\delta_P^{t+1} = \delta_P^t+Pr_{[C_{\min}, C_{\max}]}\cdot \max\{c_{P, \max, \delta}, c_{P, \min, \delta}\}$\\
\vspace{0.2in}
\textbf{Step 3b:} Compute contribution to $\delta_Q$\\
$c_{Q, \max, \delta} = \onedint(C_{\max}, n, p, \eps, \eps_0, -)$\\
$c_{Q, \min, \delta} = \onedint(C_{\min}, n, p, \eps, \eps_0, -)$\\
$\delta_Q^{t+1} = \delta_Q^t+\Pr_{[C_{\min}, C_{\max}]}\cdot \max\{c_{Q, \max, \delta}, c_{Q, \min, \delta}\}$\\
\vspace{0.2in}
\textbf{Step 3c:} Compute contribution to $\zeta_C$\\
$\zeta_C^{t+1} = \zeta_C^{t}+\Pr_{[C_{\min}, C_{\max}]}$
  }
  }
  \textbf{return} $\max\{\delta_P^{T+1},\delta_Q^{T+1}\}$
  \caption{$M_1$} \label{alg:ampbound}
\end{algorithm}

Lemma~\ref{algosubroutine} shows that for a fixed $c$, $\onedint$ computes the integral $\onedint(c,\eps,\eps_0, +)=\int_{a} \max\{0, P(a,c)-e^{\eps}Q(a,c)\}da$. Recall, our goal is to estimate \[\int_c\int_{a} \max\{0, P(a,c)-e^{\eps}Q(a,c)\}dadc=\int_c\onedint(c,\eps,\eps_0, +)dc.\] We could compute $\onedint(c,\eps,\eps_0, +)$ for every $c$, but in order to make the computation more efficient, instead of computing $\onedint(c,\eps,\eps_0, +)$ for every $c$, Algorithm~\ref{alg:ampbound} defines a parameter $S$, and only computes it for every $S$th value of $c$. For values between $c$ and $c+S$, we can leverage the fact that $\onedint(c,\eps,\eps_0, +)$ is monotone to bound their contribution to the integral.

\begin{lemma}\label{ismonotone}
For any $\eps<\eps_0$ and $b\in\{+,-\}$, if $c>c'$ then $\onedint(c,\eps,\eps_0, b)<\onedint(c',\eps,\eps_0,b)$.
\end{lemma}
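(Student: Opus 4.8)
The plan is to reinterpret $\onedint(c,\eps,\eps_0,b)$ as a hockey-stick divergence between an explicit pair of distributions, and then exploit the fact that incrementing $c$ is merely a post-processing of that pair. First, by Lemma~\ref{algosubroutine} (dividing both sides by $\Pr(\Bin(n-1,p)=c)>0$), for $\eps<\eps_0$ one has $\onedint(c,\eps,\eps_0,+)=\dalpha{e^{\eps}}(\pp_c\|\Q_c)$ and $\onedint(c,\eps,\eps_0,-)=\dalpha{e^{\eps}}(\Q_c\|\pp_c)$, where $\pp_c,\Q_c$ are the conditional laws of the first coordinate of $P,Q$ from~\eqref{eq:PQ} given $C=c$. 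Writing $A_c$ for the pmf of $\Bin(c,1/2)$ and $q=e^{\eps_0}/(e^{\eps_0}+1)$, these are $\pp_c(a)=qA_c(a)+(1-q)A_c(a-1)$ and $\Q_c(a)=(1-q)A_c(a)+qA_c(a-1)$, which are genuine distributions for every integer $c\ge0$ and depend only on $c,\eps_0$ (not on $n$). So it suffices to show that $c\mapsto\dalpha{e^{\eps}}(\pp_c\|\Q_c)$ and $c\mapsto\dalpha{e^{\eps}}(\Q_c\|\pp_c)$ are strictly decreasing.

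The structural observation driving the argument is that $\pp_{c+1}=\pp_c\ast\Ber(1/2)$ and $\Q_{c+1}=\Q_c\ast\Ber(1/2)$, which is immediate from $\Bin(c+1,1/2)=\Bin(c,1/2)+\Ber(1/2)$ together with Pascal's rule $A_c(a)+A_c(a-1)=2A_{c+1}(a)$. Convolution with $\Ber(1/2)$ is a Markov kernel (from $x$, output $x$ or $x+1$ with probability $1/2$ each), so the post-processing property of the hockey-stick divergence gives $\dalpha{e^{\eps}}(\pp_{c+1}\|\Q_{c+1})\le\dalpha{e^{\eps}}(\pp_c\|\Q_c)$, and likewise with the two arguments swapped. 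Iterating along $c'<c'+1<\cdots<c$ already yields monotonicity, and reduces the lemma to establishing that at least one of these single-step inequalities is strict.

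The strict inequality is the main obstacle, and I would obtain it by tracking the sign pattern of $\ell_c(a):=\pp_c(a)-e^{\eps}\Q_c(a)=\alpha A_c(a)+\alpha' A_c(a-1)$, where $\alpha=q-e^{\eps}(1-q)$ and $\alpha'=(1-q)-e^{\eps}q$; since $\eps<\eps_0$ we have $\alpha>0>\alpha'$. On the support $\{0,\dots,c+1\}$ one checks $\ell_c(0)=\alpha A_c(0)>0$ and $\ell_c(c+1)=\alpha' A_c(c)<0$, while $\ell_c(a)>0\iff (c+1-a)/a>-\alpha'/\alpha=e^{\eps_{q,\eps}}$, whose left side is strictly decreasing in $a$, so $\{a:\ell_c(a)>0\}$ is an initial segment and there is an index $a^{\star}$ with $\ell_c(a^{\star})>0>\ell_c(a^{\star}+1)$. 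Now each term of $\dalpha{e^{\eps}}(\pp_{c+1}\|\Q_{c+1})$ equals $\bigl(\tfrac12\ell_c(a)+\tfrac12\ell_c(a-1)\bigr)_+\le\tfrac12(\ell_c(a))_++\tfrac12(\ell_c(a-1))_+$, and summing the right side over $a$ reproduces $\dalpha{e^{\eps}}(\pp_c\|\Q_c)$; at $a=a^{\star}+1$ the inequality is strict since $(s+t)_+<s_++t_+$ whenever $s>0>t$. This gives $\dalpha{e^{\eps}}(\pp_{c+1}\|\Q_{c+1})<\dalpha{e^{\eps}}(\pp_c\|\Q_c)$, and the $b=-$ case is identical with $\pp_c$ and $\Q_c$ interchanged (now $\ell_c$ runs from negative to positive). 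The one point needing care is the exceptional $c$ with $(c+1)/(e^{\eps_{q,\eps}}+1)\in\mathbb Z$, where $\ell_c$ merely touches $0$ at the crossing instead of changing sign strictly; since $1/(e^{\eps_{q,\eps}}+1)\in(0,1/2)$, at most one of any two consecutive values of $c$ is exceptional, so strictness of the composed inequality for $c>c'$ is retained. Composing the single-step bounds completes the proof.
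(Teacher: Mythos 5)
Your core argument is the same as the paper's: both identify $\onedint(c,\eps,\eps_0,b)$ with the hockey-stick divergence between the conditional laws of $P$ and $Q$ given $C=c$, observe that passing from $c'$ to $c$ amounts to convolving both laws with an independent $\Bin(c-c',1/2)$ (you do this one $\Ber(1/2)$ step at a time via Pascal's rule; the paper does it in one shot), and then invoke the post-processing property of the hockey-stick divergence. Where you go further is in trying to prove the \emph{strict} inequality that the lemma literally asserts: the paper's own proof only concludes with $\le$, and indeed only weak monotonicity is ever used downstream (to bound the per-bucket contributions in Algorithm~\ref{alg:ampbound}), so the strict sign in the statement is an overstatement rather than a load-bearing claim.

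Your strictness argument (sign change of $\ell_c$ plus strict subadditivity of $(\cdot)_+$ across a genuine sign flip) is correct for non-exceptional $c$, but the patch for the exceptional case does not close the gap when $c=c'+1$ and $c'$ is exceptional. In that situation there is only a single convolution step, $\ell_{c'}$ vanishes exactly at the crossing index $a_0$, and every adjacent pair $(\ell_{c'}(a),\ell_{c'}(a-1))$ then has weakly agreeing signs (patterns $(+,+)$, $(0,+)$, $(-,0)$, $(-,-)$), so $\bigl(\tfrac12\ell_{c'}(a)+\tfrac12\ell_{c'}(a-1)\bigr)_+=\tfrac12(\ell_{c'}(a))_++\tfrac12(\ell_{c'}(a-1))_+$ term by term and $\onedint(c'+1,\eps,\eps_0,b)=\onedint(c',\eps,\eps_0,b)$ exactly. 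So for such degenerate parameter choices the strict inequality is genuinely false, and your "at most one of two consecutive $c$ is exceptional" observation only helps when $c\ge c'+2$. This is a defect of the lemma's phrasing rather than of your proof of the substantive, non-strict claim, which is complete and matches the paper's.
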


\begin{proof} Let us prove the result for $b=+$, the proof for $b=-$ is identical. Recall $q=\frac{e^{\eps_0}}{e^{\eps_0}+1}$ and $c>0$, let \[P_c = \begin{cases} \Bin(c,1/2) & \text{w.p. } q \\ \Bin(c,1/2)+1 & \text{w.p. } 1-q\end{cases} \;\; \text{ and } \;\; Q_c = \begin{cases} \Bin(c,1/2) & \text{w.p. } 1-q \\ \Bin(c,1/2)+1 & \text{w.p. } q\end{cases}.\] Now, suppose that $c>c'$ then $P_{c} = P_{c'}+\Bin(c-c',1/2)$ and $Q_{c} = Q_{c'}+\Bin(c-c',1/2)$, where we are summing the random variables (not the pdfs). 
That is, $P_c$ and $Q_c$ can be obtained by post-processing $P_{c'}$ and $Q_{c'}.$ Also $\int_{\mathbb{Z}}\max\{0,P_c(a)-e^{\eps}Q_c(a)\}da$ is the hockey-stick divergence between $P_c$ and $Q_c$. Therefore, since the hockey-stick divergence satisfies the data processing inequality and by Lemma~\ref{algosubroutine}, \[\mathcal{B}(c,\eps,\eps_0,b) = \int_{\mathbb{Z}}\max\{0,P_c(a)-e^{\eps}Q_c(a)\}da\le \int_{\mathbb{Z}}\max\{0,P_{c'}(a)-e^{\eps}Q_{c'}(a)\}da = \mathcal{B}(c',\eps,\eps_0,b).\]
\end{proof}

Additionally, the algorithm takes in a value $\delta^U$ (set to be the target $\delta$ when called from binary search) so that if the current guess for $\eps$ is too small, then we may save on computation by aborting as soon as we have established that $\delta < \delta^U$. We remark that while we have stated the algorithm as iterating over the values of $t$ starting at zero, the proof does not require that the values $\{0,\ldots, T\}$ be processed in increasing order. In an actual implementation, we process these in increasing order of $|t-T/2|$ so as to process the $c$'s which have large probability mass first.

\begin{prop} For any $n\in\mathbb{N},\eps_0>0,\delta\in[0,1], T\in\mathbb{N}$, let $\eps = \BinS(\eps_0,\delta,T,M_1(n, \eps_0,\delta,S,\cdot))$ be the output of Algorithm~\ref{binarysearch} where $M_1$ is given by Algorithm~\ref{alg:ampbound}. Then $P$ and $Q$, as given by Equation~\eqref{eq:PQ}, are $(\eps,\delta)$-indistinguishable.
\end{prop}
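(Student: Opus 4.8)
The plan is to show that Algorithm~\ref{binarysearch} maintains the invariant ``$P$ and $Q$ are $(\eps^R,\delta)$-indistinguishable'' throughout its execution, so that the returned value $\eps=\eps^R$ has the claimed property. The base case $\eps^R=\eps_0$ is immediate: writing $P(a,c)=\Pr(C=c)\,(q\,A_c(a)+(1-q)\,A_c(a-1))$ and $Q(a,c)=\Pr(C=c)\,((1-q)\,A_c(a)+q\,A_c(a-1))$, where $A_c$ is the pmf of $\Bin(c,1/2)$ and $q=e^{\eps_0}/(e^{\eps_0}+1)$, the likelihood ratio $P(a,c)/Q(a,c)$ equals $(qr+(1-q))/((1-q)r+q)$ with $r=A_c(a)/A_c(a-1)\in[0,\infty]$, and this always lies in $[(1-q)/q,\,q/(1-q)]=[e^{-\eps_0},e^{\eps_0}]$; hence $P$ and $Q$ are in fact $(\eps_0,0)$-indistinguishable.

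The inductive step reduces to the key claim: if $\eps<\eps_0$ and the call $M_1(n,\eps_0,\delta,S,\eps)$ (Algorithm~\ref{alg:ampbound}) does not exit through Step~1, then its output is at least $\max\{D_{e^{\eps}}(P\|Q),\,D_{e^{\eps}}(Q\|P)\}$. To prove this I would use two facts about the subroutine $\onedint$: by Lemma~\ref{algosubroutine}, $\Pr(C=c)\cdot\onedint(c,\eps,\eps_0,+)=\int_a\max\{0,P(a,c)-e^{\eps}Q(a,c)\}\,da$ (and symmetrically for the $-$ branch with $P,Q$ swapped), which also shows $\onedint(c,\eps,\eps_0,b)\ge 0$, while $\onedint=\gamma_P-e^{\eps}\gamma_Q\le\gamma_P\le 1$ gives $\onedint(c,\eps,\eps_0,b)\le 1$; and by Lemma~\ref{ismonotone}, $\onedint(\cdot,\eps,\eps_0,b)$ is non-increasing in $c$. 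For the block $[C_{\min},C_{\max}]=[B^t,B^t+S-1]$ handled in iteration $t$, monotonicity then gives
\[
\sum_{c\in[C_{\min},C_{\max}]}\Pr(C=c)\,\onedint(c,\eps,\eps_0,+)\ \le\ \Pr(C\in[C_{\min},C_{\max}])\cdot\max\{\onedint(C_{\min},\eps,\eps_0,+),\,\onedint(C_{\max},\eps,\eps_0,+)\},
\]
whose left-hand side equals $\sum_{c\in[C_{\min},C_{\max}]}\int_a\max\{0,P(a,c)-e^{\eps}Q(a,c)\}\,da$ by Lemma~\ref{algosubroutine} and whose right-hand side is exactly the increment Step~3a adds to $\delta_P^t$ (and symmetrically for $\delta_Q^t$). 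Since $\zeta_C^t$ accumulates the block masses, an induction on $t$ gives $\delta_P^t\ge\sum_{c<B^t}\int_a\max\{0,P(a,c)-e^{\eps}Q(a,c)\}\,da$, the analogous bound for $\delta_Q^t$, and $1-\zeta_C^t=\Pr(C\ge B^t)$. It then remains to treat the two non-Step~1 exits: when the loop completes, the processed blocks cover $\{0,\dots,n-1\}$ (since $(T+1)S>n$), which contains the support of $C\sim\Bin(n-1,e^{-\eps_0})$, so $\delta_P^{T+1}\ge D_{e^{\eps}}(P\|Q)$ and $\delta_Q^{T+1}\ge D_{e^{\eps}}(Q\|P)$; when Step~2 fires at iteration $t$, the tail $c\ge B^t$ contributes at most $\sum_{c\ge B^t}\Pr(C=c)\cdot 1=1-\zeta_C^t$ in either direction, so $D_{e^{\eps}}(P\|Q)\le\delta_P^t+(1-\zeta_C^t)$ and $D_{e^{\eps}}(Q\|P)\le\delta_Q^t+(1-\zeta_C^t)$, and the returned value $\max\{\delta_P^t,\delta_Q^t\}+(1-\zeta_C^t)$ is again an upper bound. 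This proves the claim.

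Finally, I would observe that every midpoint $\eps_t$ produced by the search satisfies $\eps_t<\eps_0$ (the invariant $0\le\eps^L<\eps^R\le\eps_0$ is preserved by each update, so $\eps_t=(\eps^L+\eps^R)/2<\eps^R\le\eps_0$), hence the claim applies and Lemmas~\ref{algosubroutine} and~\ref{ismonotone} are valid at $\eps_t$. When $\delta_t=M_1(n,\eps_0,\delta,S,\eps_t)<\delta$, the call cannot have returned through Step~1, which outputs $\delta^U=\delta$; so by the claim $\max\{D_{e^{\eps_t}}(P\|Q),D_{e^{\eps_t}}(Q\|P)\}\le\delta_t<\delta$, i.e.\ $P$ and $Q$ are $(\eps_t,\delta)$-indistinguishable, and this is exactly when $\eps^R$ is reset to $\eps_t$. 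Since $\eps^R$ is otherwise left at $\eps_0$, the invariant holds throughout and the output $\eps$ certifies $(\eps,\delta)$-indistinguishability of $P$ and $Q$.

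The step I expect to be the main obstacle is the bookkeeping across the three exit points of $M_1$: in particular, noticing that the value $\delta^U$ returned by Step~1 is only a ``reject'' signal rather than a certified bound on the divergence (so aborting there costs tightness but not soundness, because the binary search then simply leaves $\eps^R$ unchanged), whereas the completion and Step~2 exits must genuinely yield upper bounds, which is exactly where monotonicity of $\onedint$ (Lemma~\ref{ismonotone}) and the per-$c$ identity (Lemma~\ref{algosubroutine}) are used. Everything else is routine estimation.
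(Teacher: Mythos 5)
Your proposal is correct and follows essentially the same route as the paper's proof: both reduce soundness of the binary search to showing that every non--Step-1 exit of $M_1$ returns an upper bound on $\max\{D_{e^{\eps}}(P\|Q),D_{e^{\eps}}(Q\|P)\}$, using Lemma~\ref{algosubroutine} for the per-$c$ integrals, Lemma~\ref{ismonotone} for the block-wise endpoint bounds, and the same three-case analysis of the exits (the paper phrases the binary-search part as ``$M_1\ge\min\{\delta^U,\text{true divergence}\}$'' rather than as an invariant on $\eps^R$, but these are equivalent). Your explicit verification of the base case (that $P$ and $Q$ are $(\eps_0,0)$-indistinguishable) and of the fact that every midpoint satisfies $\eps_t<\eps_0$ are details the paper leaves implicit.
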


\begin{proof} Note that since $P$ and $Q$ are $(\eps_0,\delta)$-indistinguishable and Algorithm~\ref{binarysearch} outputs the largest value inside the final range $[\eps_L, \eps_R],$ it suffices to show that at each iteration, \[\delta_t = M_1(n,\eps_0,\delta,S,\eps_t) \ge \min\left\{\delta, \max\left\{\int_{\mathbb{Z}}\int_{\mathbb{Z}} P(a,c)-e^{\eps_t}Q(a,c) d(a,c), \int_{\mathbb{Z}}\int_{\mathbb{Z}}Q(a,c)-e^{\eps_t}P(a,c) d(a,c)\right\}\right\}.\] That is, that Algorithm~\ref{binarysearch} makes conservative choices at each iteration. Let us then focus on showing that for any choices of $n,\eps_0,\eps,\delta^U,S$ such that $\eps_0>\eps$, \begin{equation}\label{conservative} M_1(n,\eps_0,\delta^U,S,\eps_t) \ge \min\left\{\delta^U, \max\left\{\int_{\mathbb{Z}}\int_{\mathbb{Z}} P(a,c)-e^{\eps_t}Q(a,c) d(a,c), \int_{\mathbb{Z}}\int_{\mathbb{Z}}Q(a,c)-e^{\eps_t}P(a,c) d(a,c)\right\}\right\}.
\end{equation} 

Let $t\in\{0,\cdots,T\}$ and $B^t=t*S$. Since 
$\mathcal{B}(\cdot,\eps,\eps_0,b)$ is monotone, $\mathcal{B}(B^t,\eps,\eps_0,b)\ge \mathcal{B}(D, \eps,\eps_0,b)$ for any $D\in[B^t,B^t+S)$. Thus, \[Pr_{[C_{\min}, C_{\max}]}\cdot \max\{\mathcal{B}(B^t,\eps,\eps_0,+), \mathcal{B}(B^t+S-1,\eps,\eps_0,+)\}\ge \int_{B^t}^{B^t+S-1} \int_{\mathbb{Z}} \max\{0, P(a,c)-e^{\eps}Q(a,c)\} d(a,c)\] and \[Pr_{[C_{\min}, C_{\max}]}\cdot \max\{\mathcal{B}(B^t,\eps,\eps_0,-), \mathcal{B}(B^t+S-1,\eps,\eps_0,-)\}\ge \int_{B^t}^{B^t+S-1} \int_{\mathbb{Z}} \max\{0, Q(a,c)-e^{\eps}P(a,c)\} d(a,c).\]
Let $\mathcal{R}_C^t=[0,B^t)$ be the range of values of $C$ that have already been covered by the first $t-1$ iterations. So, the above equations imply that we always have \[\delta_P^t\ge \int_{\mathcal{R}_C^t} \int_{\mathbb{Z}} \max\{0, P(a,c)-e^{\eps}Q(a,c)\} d(a,c)\;\; \text{and} \;\;\delta_Q^t \ge \int_{\mathcal{R}_C^t} \int_{\mathbb{Z}} \max\{0, Q(a,c)-e^{\eps}P(a,c)\} d(a,c).\] Now, there are three ways Algorithm~\ref{alg:ampbound} can terminate.

Case 1: If at round $t$, $\max\{\delta_P^t,\delta_Q^t\}>\delta^U$ then $M_1(n,\eps_0,\delta,S,\eps_t)=\delta^U$ and Equation~\eqref{conservative} holds.

Case 2: If at round $t$, $1-\zeta_C^t<\delta_P^t$ and $1-\zeta_C^t<\delta_Q^t$ then
\begin{align*}
\int_{\mathbb{Z}}\int_{\mathbb{Z}} &\max\{0, P(a,c)-e^{\eps}Q(a,c)\} d(a,c)\\ &= \int_{\mathcal{R}_C^t}\int_{\mathbb{Z}} \max\{0, P(a,c)-e^{\eps}Q(a,c)\} d(a,c) + \int_{\mathbb{Z}\backslash\mathcal{R}_C^t}\int_{\mathbb{Z}} \max\{0, P(a,c)-e^{\eps}Q(a,c)\} d(a,c)\\
&\le \delta_P+\Pr(\Bin(n-1,p)\in\mathbb{Z}\backslash\mathcal{R}_C^t)\\
&\le \delta_P^t+1-\zeta_C^t,
\end{align*} and similarly \[\int_{\mathbb{Z}}\int_{\mathbb{Z}} \max\{0, Q(a,c)-e^{\eps}P(a,c)\} d(a,c)\le \delta_Q^t+1-\zeta_C^t. \] Since $M_1(n,\eps_0,\delta,S,\eps_t)=\max\{\delta_P^t+1-\zeta_C^t, \delta_Q^t+1-\zeta_C^t\}$, this implies Equation (\ref{conservative}) holds.

Case 3: If the algorithm doesn't terminate early, then $\mathcal{R}_C^{T+1}=\mathbb{Z}$ so \[\int_{\mathbb{Z}}\int_{\mathbb{Z}} \max\{0, P(a,c)-e^{\eps}Q(a,c)\} d(a,c)\le\delta_P^{T+1}\] and
\[\int_{\mathbb{Z}}\int_{\mathbb{Z}} \max\{0, Q(a,c)-e^{\eps}P(a,c)\} d(a,c)\le\delta_Q^{T+1}.\]
Since $M_1(n,\eps_0,\delta,S,\eps_t)=\max\{\delta_P^{T+1}, \delta_Q^{T+1}\}$ this implies Equation~\eqref{conservative} holds.
\end{proof}

\section{Implementation of \texttt{2RR, lower bound}}\label{2RRimplementation}

For any $\eps_0 >0$, binary randomized response $\2RR\colon \zo\to\zo$ is defined as \[\2RR(x) = \begin{cases} x & \text{with probability } \frac{e^{\eps_0}}{e^{\eps_0}+1}\\  1-x & \text{with probability } \frac{1}{e^{\eps_0}+1}\end{cases}.\]  Let $\shuffler:\zo^n\to\zo^n$ be the algorithm that given a dataset $x_{1:n}\in\zo^n$, samples a uniformly random permutation $\pi$,  computes $z_i=\2RR(x_{\pi(i)})$ for all $i\in[n]$, then outputs $z_{1:n}$. That is, each client simply reports their value using $\2RR$, and the reports are permuted.

For any $\delta\in[0,1]$, and random variables $P$ and $Q$, let $D_{\infty}^{\delta}(P,Q)$ be the minimal $\eps$ such that $P$ and $Q$ are $(\eps,\delta)$-indistinguishable. Let $\eps_{\delta}$ be the minimal $\eps$ such that $\shuffler$ is $(\eps,\delta)$-DP so \[\eps_{\delta} = \max_{X_0,X_1} D_{\infty}^{\delta}(\shuffler(X_0),\shuffler(X_1)),\] where the maximum is over all possible pairs of neigboring datasets $X_0,X_1\in\zo^n$.
In the implementation of \texttt{2RR, lower bound}, we set \[X_0 = (0,\ldots,0) \text{ and } X_1=(1,0,\ldots,0)\] and compute a lower bound on $D_{\infty}^{\delta}(\shuffler(X_0),\shuffler(X_1))$, which in turn, gives a lower bound on $\eps_{\delta}$.

Again, since it is computationally easier to compute the minimal $\delta$ for a given $\eps$, we use binary search to find a lower bound on the minimal $\eps$ for a given $\delta$. Since we want a lower bound, we use Algorithm~\ref{binarysearchlower}, which is the same as Algorithm~\ref{binarysearch}, except at the final stage it outputs $\eps_L$ rather than $\eps_R$. The final component we need to describe is the function $M$, which given $\eps$, computes $D_{e^{\eps}}(\shuffler(X_0), \shuffler(X_1))$. Note that the output of $\shuffler$ is characterized by simply the number of 0s and 1s in the local reports. Thus, the divergence between $\shuffler(X_0)=z_{1:n}^0$ and $\shuffler(X_1)=z_{1:n}^1$ is the same as the divergence between \[c^0 = \sum_{i=1}^n z_i^0 = \Bin\left(n,\frac{1}{e^{\eps_0}+1}\right) \;\text{and}\; c^1=\sum_{i=1}^n z_i^1=\Bin\left(n-1,\frac{1}{e^{\eps_0}+1}\right)+\Ber\left(\frac{e^{\eps_0}}{e^{\eps_0}+1}\right),\] where $\Ber\left(\frac{e^{\eps_0}}{e^{\eps_0}+1}\right)$ denotes a Bernoulli random variable, and we are summing the random variables (NOT the pdfs). Since $c^0$ and $c^1$ are discrete, provided $n$ is not too large, we can efficiently compute \begin{equation}\label{Mfunction}
M(\eps)=D_{e^{\eps}}(c_0,c_1) = \max\left\{\sum_{c=0}^n \max\{c_0(c)-e^{\eps}c_1(c),0\}, \sum_{c=0}^n \max\{c_1(c)-e^{\eps}c_0(c),0\}\right\},
\end{equation} by explicitly computing the pdfs of $c^0$ and $c^1$. Now, since this computation is exact (up to numerical precision), for all $t\in[T]$, Algorithm~\ref{binarysearchlower} moves in the right direction at every iterate. This implies that for the duration of the algorithm $D_\infty^\delta(c_0,c_1)\in[\eps^L, \eps^R]$, before finally outputting the lower bound $\eps^L$.
\begin{prop}
For any $n\in\mathbb{N},\eps_0>0,\delta\in[0,1], T\in\mathbb{N}$, let $\eps = \texttt{BinSLower}(\eps_0,\delta,T,M(\cdot))$ be the output of Algorithm~\ref{binarysearchlower} where $M$ is given by Equation~\eqref{Mfunction}. Then $\eps_{\delta}\ge\eps$.
\end{prop}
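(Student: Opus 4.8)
The plan is to combine the reduction sketched just before Equation~\eqref{Mfunction} with a correctness analysis of the binary search; the work is essentially bookkeeping rather than anything deep. First recall that $\eps_\delta=\max_{X_0,X_1}D_\infty^\delta(\shuffler(X_0),\shuffler(X_1))$, so in particular $\eps_\delta\ge D_\infty^\delta(\shuffler(X_0),\shuffler(X_1))$ for the single pair $X_0=(0,\ldots,0)$, $X_1=(1,0,\ldots,0)$ used in the implementation. The count $c^b:=\sum_i(\shuffler(X_b))_i$ is a deterministic function of the output of $\shuffler$, so by the data-processing inequality for the hockey-stick divergence $D_{e^{\eps}}(c^0,c^1)\le D_{e^{\eps}}(\shuffler(X_0),\shuffler(X_1))$ for every $\eps$, and hence $\eps^*:=D_\infty^\delta(c^0,c^1)\le D_\infty^\delta(\shuffler(X_0),\shuffler(X_1))\le\eps_\delta$. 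A one-line computation identifies $c^0,c^1$ with the variables in Equation~\eqref{Mfunction}: under $X_0$ every report is an independent copy of $\2RR(0)\sim\Ber(\tfrac{1}{e^{\eps_0}+1})$, so $c^0\sim\Bin(n,\tfrac{1}{e^{\eps_0}+1})$; under $X_1$ exactly one report (the one fed the flipped bit) is $\2RR(1)\sim\Ber(\tfrac{e^{\eps_0}}{e^{\eps_0}+1})$ and the remaining $n-1$ are independent copies of $\2RR(0)$, so $c^1\sim\Bin(n-1,\tfrac{1}{e^{\eps_0}+1})+\Ber(\tfrac{e^{\eps_0}}{e^{\eps_0}+1})$. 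It therefore suffices to prove that \texttt{BinSLower} returns a value at most $\eps^*$.

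Next I would establish that $M$ is exact and that the search is conservative. Since $c^0$ and $c^1$ are supported on the finite set $\{0,\ldots,n\}$, their pmfs and both one-sided sums in Equation~\eqref{Mfunction} can be evaluated exactly, so $M(\eps)=D_{e^{\eps}}(c^0,c^1)$ (I ignore floating-point error, as the statement does). The structural fact I would isolate is that $M$ is continuous, equals $0$ for every $\eps\ge\eps_0$ (as $c^0$ and $c^1$ are $\eps_0$-indistinguishable, $\2RR$ being $\eps_0$-DP), is positive for $\eps<\eps_0$ (the masses at $c=0$ satisfy $c^0(0)/c^1(0)=e^{\eps_0}>e^{\eps}$), and is strictly decreasing on $[0,\eps_0]$ --- between the finitely many values of $\eps$ at which a term enters or leaves, each of the two one-sided sums in Equation~\eqref{Mfunction} is affine in $e^{\eps}$ with strictly negative slope, because $c^1$ (resp.\ $c^0$) has full support on $\{0,\ldots,n\}$. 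It follows that any $\eps\in[0,\eps_0]$ with $M(\eps)\ge\delta$ satisfies $\eps\le\eps^*$ (if $\eps>\eps^*$ then strict monotonicity gives $M(\eps)<M(\eps^*)\le\delta$ when $\eps^*<\eps_0$, while $\eps^*=\eps_0$ leaves no such $\eps$ in $[0,\eps_0]$), where $\eps^*=D_\infty^\delta(c^0,c^1)$ by definition.

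Finally I would run the binary-search invariant. Throughout \texttt{BinSLower} we have $\eps^L\le\eps^R\le\eps_0$ (the right endpoint starts at $\eps_0$ and never grows), and I claim $\eps^L\le\eps^*$ holds at every step: it holds at initialization since $\eps^L=0\le\eps^*$, and $\eps^L$ is updated only to the current midpoint $\eps_t\in[0,\eps_0]$, and only in the branch where $\delta_t=M(\eps_t)=D_{e^{\eps_t}}(c^0,c^1)\ge\delta$, which by the previous paragraph forces $\eps_t\le\eps^*$; the invariant is thus preserved. (A symmetric check gives $\eps^R\ge\eps^*$, so the search does bracket $\eps^*$, but only the lower half is needed here.) Since \texttt{BinSLower} outputs the final value of $\eps^L$, we get $\eps=\eps^L\le\eps^*=D_\infty^\delta(c^0,c^1)\le\eps_\delta$, which is exactly the asserted inequality $\eps_\delta\ge\eps$.

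The one place that needs genuine care is the monotonicity argument around the borderline $\delta_t=\delta$, which is why I would prove the strict-decrease property of $M$ first and phrase the search invariant in the one-sided form $\eps^L\le\eps^*$ rather than $\eps^*\in[\eps^L,\eps^R]$. The remaining ingredients --- restricting to one pair of datasets, collapsing the shuffled output to the scalar count via data processing, and the exactness of the subroutine $M$ --- are routine.
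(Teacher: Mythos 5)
Your proof is correct and follows essentially the same route as the paper's (terse) justification: restrict to the pair $X_0=(0,\ldots,0)$, $X_1=(1,0,\ldots,0)$, reduce to the count statistics $c^0,c^1$, note that $M$ evaluates the hockey-stick divergence exactly, and maintain the one-sided binary-search invariant $\eps^L\le D_\infty^\delta(c^0,c^1)\le\eps_\delta$. You are in fact somewhat more careful than the paper, which simply asserts that the search ``moves in the right direction''; your use of the data-processing inequality (rather than the paper's exchangeability/sufficiency claim) and your strict-monotonicity argument handling the borderline case $\delta_t=\delta$ close the only gaps in that informal argument.
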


\begin{algorithm}\DontPrintSemicolon
  \textbf{Input:} $\eps_0, \delta, T, M$\\
  $\eps^L = 0$\\
  $\eps^R = \eps_0$\\
  \For{$t\in[T]$}{
  $\eps_t = \frac{\eps^L+\eps^R}{2}$\\
  $\delta_t = M(\eps_t)$ \tcp*{$M(\eps_t)=$ the smallest value such that $P$ and $Q$ are $(\eps_t, \delta_t)$-indistinguishable}
  \If{$\delta_t<\delta$}{$\eps^R = \eps_t$}
  \Else{$\eps^L = \eps_t$}
  }
  {\bf return} $\eps^L$
  \caption{Binary Search, \texttt{BinSLower}} \label{binarysearchlower}
\end{algorithm}

\end{document}